\setlist{leftmargin=4.0mm}
\Crefname{figure}{Fig.}{Figs.}
\DeclareMathOperator{\ESS}{ESS}
\DeclareMathOperator{\BoldESS}{\mathbf{ESS}}
\newcommand*\diff{\mathop{}\!\mathrm{d}}
\newcommand*\ind[1]{\mathds{1}_{#1}}
\newcommand{\vc}{{\bf c}}
\newcommand{\vu}{{\bf u}}
\newcommand{\vv}{{\bf  v}}
\newcommand{\vw}{{\bf w}}
\newcommand{\tire}{\mathrm{tire}}
\newcommand{\rB}{\mathrm{B}}
\newcommand{\rC}{\mathrm{C}}
\newcommand{\rD}{\mathrm{D}}
\newcommand{\T}{^\mathsf{T}}
\newcommand{\E}{\mathbb{E}}
\newcommand{\norm}[1]{\left\lVert#1\right\rVert}
\newcommand{\KL}[2]{\mathbb{D}_{\mathrm{KL}}\bigl({#1}\parallel {#2}\bigr)}
\definecolor{Gray}{gray}{0.65}
\newcolumntype{M}{>{\centering\arraybackslash}m{\dimexpr.225\linewidth-2\tabcolsep}}
\newcolumntype{G}{>{\columncolor{Gray}}M}
\newcolumntype{N}{>{\centering\arraybackslash}m{\dimexpr.1\linewidth-2\tabcolsep}}
\DeclareMathOperator*{\diag}{diag} %
\DeclareMathOperator{\Var}{Var}
\newcommand*\circled[1]{\tikz[baseline=(char.base)]{
            \node[shape=circle,draw,inner sep=2pt] (char) {#1};}}
\definecolor{C0}{HTML}{E24A33}
\definecolor{C1}{HTML}{348ABD}
\newcommand{\cmark}{\textcolor{C1}{\ding{51}}}%
\newcommand{\xmark}{\textcolor{C0}{\ding{55}}}%
\DeclarePairedDelimiter\abs{\lvert}{\rvert}%
\definecolor{tabOrange}{HTML}{FF7F0E}
\definecolor{tabGreen}{HTML}{2CA02C}
\newcommand{\beforetextbf}{\vspace{0.08\baselineskip}}
\newcommand{\beforetextbfok}{\vspace{0.12\baselineskip}}
\newcommand{\beforetextbfmore}{\vspace{0.20\baselineskip}}
\theoremstyle{plain}
\newtheorem{theorem}{Theorem}
\newtheorem{corollary}{Corollary}
\newtheorem{lemma}[theorem]{Lemma}
\def\th@newremark{\th@remark\thm@headfont{\bfseries}}
\theoremstyle{newremark}
\newtheorem{remark}{Remark}
\title{Safe Beyond the Horizon: Efficient Sampling-based MPC with Neural Control Barrier Functions}
\author{Ji Yin$^{1*}$, Oswin So$^{2*}$, Eric Yang Yu$^{2}$, Chuchu Fan$^{2}$, and Panagiotis Tsiotras$^{1}$%

\thanks{$^{1}$Ji Yin and Panagiotis Tsiotras are with D. Guggenheim School of Aerospace Engineering, Georgia Institute of Technology, Atlanta, GA.
        Email:\footnotesize \texttt{\{jyin81, tsiotras\}@gatech.edu}}%
\thanks{$^{2}$Oswin So, Eric Yang Yu and Chuchu Fan are with the Department of Aeronautics and Astronautics, Massachusetts Institute of Technology, Cambridge, MA. 
        {\footnotesize Email:\texttt{\{oswinso, eyyu, chuchu\}@mit.edu}}\vspace{.6ex}}%
\thanks{$*$ Equal contribution}
}
\begin{document}

\maketitle
\thispagestyle{plain}
\pagestyle{plain}

\begin{abstract}

A common problem when using model predictive control (MPC) in practice is the satisfaction of safety specifications beyond the prediction horizon.
While theoretical works have shown that safety can be guaranteed by enforcing a suitable terminal set constraint or a sufficiently long prediction horizon, these techniques are difficult to apply and thus are rarely used by practitioners, especially in the case of general nonlinear dynamics.
To solve this problem, we impose a tradeoff between exact recursive feasibility, computational tractability, and applicability to ``black-box'' dynamics by learning an approximate discrete-time control barrier function and
incorporating it into a variational inference MPC (VIMPC), a sampling-based MPC paradigm.
To handle the resulting state constraints, we further propose a new sampling strategy that greatly reduces the variance of the estimated optimal control, improving the sample efficiency, and enabling real-time planning on a CPU.
The resulting Neural Shield-VIMPC (NS-VIMPC) controller yields substantial safety improvements compared to existing sampling-based MPC controllers, even under badly designed cost functions.
We validate our approach in both simulation and real-world hardware experiments.
Project website: \url{https://mit-realm.github.io/ns-vimpc/}.

\end{abstract}

\section{Introduction}

Model Predictive Control (MPC) is a versatile control approach
widely used in robotics applications such as autonomous driving \cite{InfoMPPI}, bio-inspired locomotion \cite{brasseur2015robust, snakegait},  or manipulation of deformable objects \cite{power2021keep}, to name just a few.
These methods address safety by incorporating state and control constraints into the finite-horizon optimization problem, ensuring that the system remains safe over the prediction horizon \cite{brasseur2015robust,lindqvist2020nonlinear,power2021keep,wang2021variational}.
However, safety of the system beyond the prediction horizon is often overlooked by practitioners, potentially leading to the violations of safety constraints at future timesteps.

This is a well-known problem in the field of MPC.
The question of whether a sequence of safe control actions can always be found under an MPC controller has been studied extensively in the literature under the name of recursive feasibility \cite{kerrigan2000invariant,mayne2000constrained,chen2003terminal,gondhalekar2009controlled,lofberg2012oops}.
A simple method of achieving recursive feasibility is by enforcing a control-invariant terminal set constraint at the end of the prediction horizon \cite{kerrigan2000invariant,chen2003terminal}.
However, it is difficult to find such a control-invariant set for general nonlinear systems.
Methods such as bounding the system dynamics \cite{chen2003terminal,bravo2005computation,fiacchini2007computation} often result in over-conservative sets, while methods that numerically solve the Hamilton-Jacobi (HJ) PDE \cite{margellos2011hamilton} scale exponentially with the number of state variables, and hence this approach is computationally infeasible for systems with more than, say, $5$ state variables~\cite{mitchell2008flexible}. 

Even without considering recursive feasibility, nonlinear \textit{constrained} optimization is a challenging problem.
Traditionally, constraints are handled using techniques such as interior-point, sequential quadratic programming and augmented Lagrangian type methods \cite{nocedal1999numerical}, which mostly rely on accurate linear or quadratic approximations of the cost and constraints.
However, these gradient-based methods can get stuck in local minima or fail to converge when the problem is highly nonlinear.
Moreover, first and second-order gradient information is needed to solve these optimization problems, which is often not available for ``black-box'' systems.
Consequently, MPC controllers that rely on these underlying nonlinear constrained optimizers \cite{nocedal1999numerical,jallet2022constrained,gill2005snopt} inherit the same limitations.

\begin{figure}[t]
    \centering
    \includegraphics[width=0.8\linewidth]{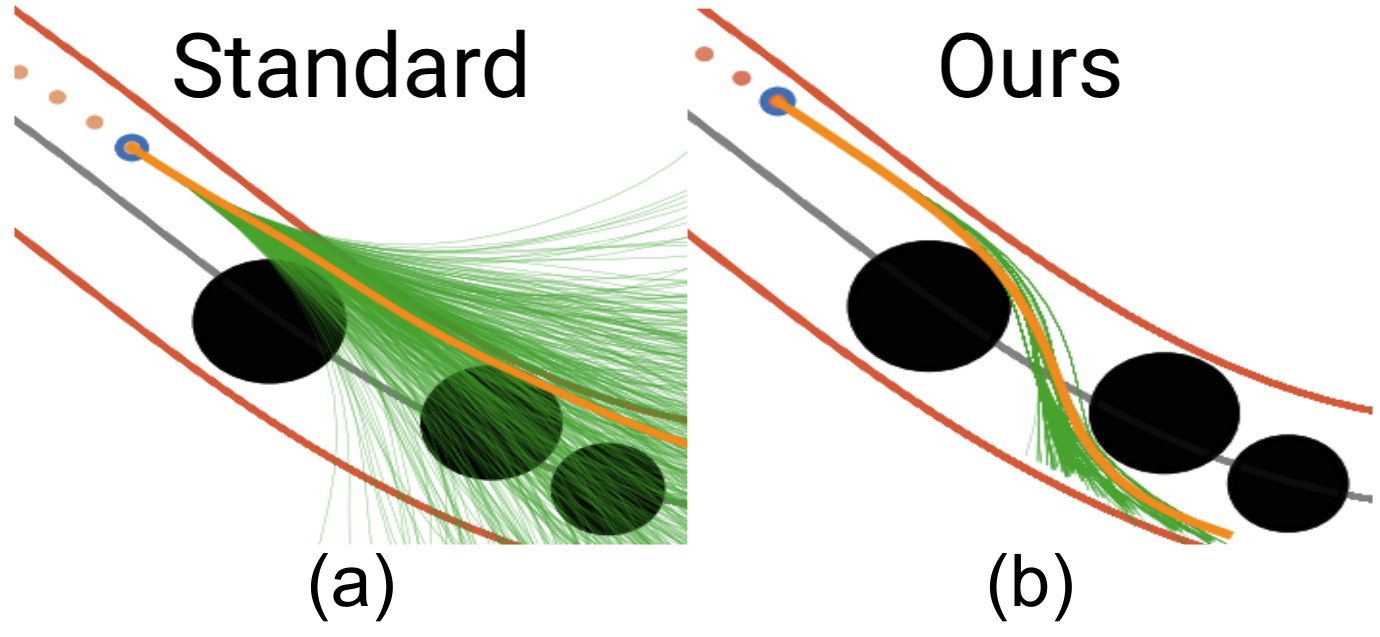}
    \caption{\textbf{Standard sampling vs. Resampling-Based Rollout (RBR).}
    In this \textsf{AutoRally} example, the blue dot samples future trajectories and computes an optimal control that avoids the black obstacles.
    RBR rewires all sampled trajectories to be safe, resulting in a more accurate sampling distribution. In contrast, the standard approach samples from a Gaussian distribution and wastes computation on unsafe trajectories.
    \label{fig:EfficientSamplingVsNormalSampling1}}
\end{figure}

\begin{figure*}[t]
    \centering
    \makebox[\textwidth][c]{\includegraphics[width=1.02\linewidth]{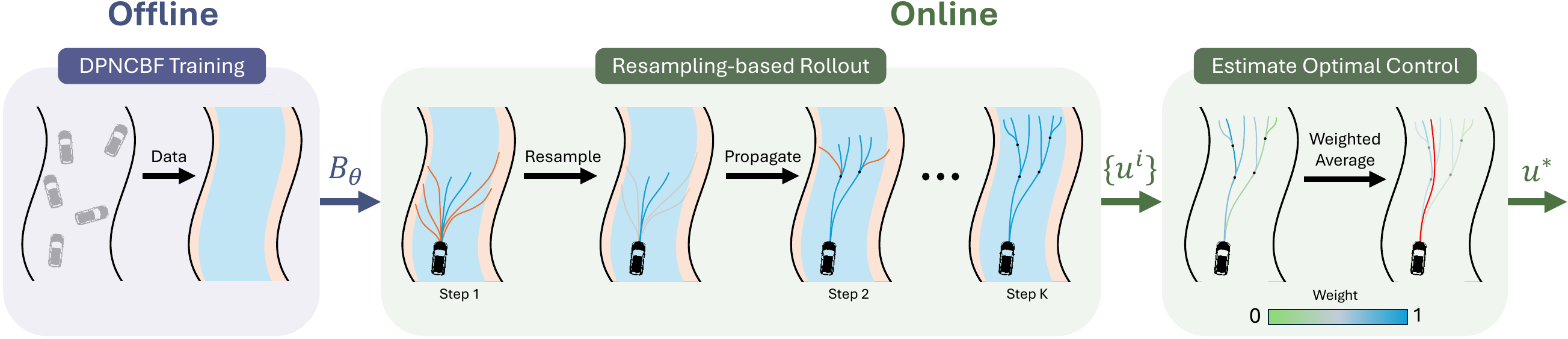}}%
    \caption{
    \textbf{Overview of Neural Shield VIMPC (NS-VIMPC).}
    Offline, using DPNCBFs, we collect a dataset and train a NN approximation $B_\theta$ of a DCBF.
    $B_\theta$ is used to impose the DCBF descent condition \eqref{eqn:descent_condition2} as a state constraint.
    Online, we modify the VIMPC architecture to use resampling-based rollouts to improve the sampling distribution of the Monte Carlo estimator in the presence of the DCBF state constraints.
    In addition, we integrate the DCBF safety condition \eqref{eqn:descent_condition} into the optimization objective function as in \eqref{eq:cbfpenalty}.
    }
    \label{fig:control_architecture}
\end{figure*}

One way to address these previous limitations is to use sampling-based optimization methods,
which have become popular within both the robotics and model-based reinforcement learning communities.
In particular, variational inference MPC (VIMPC)
\cite{okada2020variational,lambert2020stein,wang2021variational,barcelos2021dual,power2022variational,power2024learning}
has emerged as a popular family of methods that pose the control problem as an inference problem instead of an optimization problem through the control as inference framework \cite{attias2003planning,toussaint2006probabilistic,rawlik2013stochastic}, and perform variational inference to approximate the resulting intractable optimal control distribution.
Popular MPC controllers within this family include Model Predictive Path Integral (MPPI) \cite{InfoMPPI} and the Cross-Entropy Method (CEM) \cite{rubinstein1999cross,wang2021variational},
which have been applied to robotics tasks such as autonomous driving \cite{InfoMPPI}, bipedal locomotion \cite{brasseur2015robust}, manipulation of deformable objects \cite{power2021keep}, and in-hand manipulation \cite{nagabandi2020deep}, among many others.
Using this framework, constraints can be easily handled by appropriately manipulating the posterior control distribution without facing the same optimization challenges as traditional gradient-based methods. 
However, the problem of safety beyond the prediction horizon still remains a challenge and has not been addressed by existing works that employ VIMPC.

In this work, we present a novel sampling-based MPC approach that provides safety beyond the prediction horizon by using control barrier functions to enforce the control-invariant set constraint for VIMPC controllers.
To tackle the challenge of finding control-invariant sets for general nonlinear systems, we extend our previous work on learning neural network approximations of control barrier functions using policy neural control barrier functions (PNCBF) \cite{so2023train} to the discrete-time case.
Inspired by particle filtering and sequential Monte Carlo methods, we further propose a novel sampling strategy for handling state constraints in VIMPC that significantly improves the sampling efficiency and enables real-time planning on a CPU.
Results from both simulation and hardware experiments suggest that the resulting Neural Shield-VIMPC (NS-VIMPC) controller outperforms existing MPC baselines in terms of safety, even under adversarially tuned cost functions.

\noindent\textbf{Contributions.} We summarize our contributions below.
\begin{itemize}
    \item 
    We extend policy neural CBF (PNCBF) to the discrete-time case and propose a novel approach to train a discrete-time PNCBF (DPNCBF) using policy evaluation.
    
    \item 
    We propose Resampling-Based Rollout (RBR), a novel sampling strategy for handling state constraints in VIMPC inspired by particle filtering, which significantly improves the sampling efficiency by lowering the variance of the estimated optimal control.

    \item 
    Simulation results on two benchmark tasks show the efficacy of NS-VIMPC compared to existing sampling-based MPC controllers in terms of safety and sample efficiency.
    
    \item 
    Hardware experiments on AutoRally \cite{AutorallyHardware}, a $1/5$ scale autonomous driving platform, demonstrate the robustness of NS-VIMPC to unmodeled dynamical disturbances under adversarially tuned cost functions. 
\end{itemize}

\section{Related Work}

\noindent\textbf{Sampling-based MPC.} 
Sampling-based MPC has become a popular alternative to traditional MPC methods that use gradient-based solvers, in part due to the advent of parallel computing and the recent advances in GPU hardware.
As a gradient-free method, sampling-based MPC can be applied to any problem without requiring specific problem structures.
MPPI \cite{InfoMPPI} is a popular sampling-based MPC approach that formulates a variational inference problem, then solves it in the case of the Gaussian distribution, having strong connections to stochastic optimal control \cite{TubeMPPI} and maximum entropy control \cite{so2022maximum}.
Separately, the Cross-Entropy Method (CEM) \cite{rubinstein1999cross} has become popular in the reinforcement learning community \cite{wang2021variational}, in part due to its simplicity.
Both approaches were recently shown to be part of the VIMPC family of MPC algorithms~\cite{okada2020variational}. 
Consequently, some works have looked at expanding the VIMPC family to include different choices of divergences \cite{wang2021variational} and sampling distributions beyond Gaussians \cite{lambert2020stein,okada2020variational}.

\beforetextbf{}

\noindent\textbf{Safety in Sampling-based MPC.}
Recent research efforts assess the risk associated with uncertain areas in the state space during the exploration phase, and enhance MPPI's safety by incorporating a risk penalty into its cost function \cite{RAMPPI,UncertaintyPushingMPPI}. 
While these methods empirically enhance safety, they lack formal assurances. 
Another class of MPPI alternatives leverage an auxiliary tracking controller to follow the MPPI output trajectories \cite{TubeMPPI, L1Adaptive}, improving robustness against unforeseeable disruptions, but these improvements are limited when the simulation-to-reality gap is significant. 
More recently, Control Barrier Functions (CBF) \cite{ames2016control,xu2015robustness,CBFTA} have been used to provide formal safety guarantees for MPPI controllers \cite{MPPI-CBF,Shield-MPPI,yin2024chance}. 
However, these methods use distance functions as CBFs, and thus are unable to handle bounded input limits.
Our proposed Neural Shield MPPI (NS-MPPI) controller, as a specific form of the proposed NS-VIMPC framework, effectively resolves this critical issue of safety under input constraints identified above.

\beforetextbf{}

\noindent\textbf{Different Proposal Distributions for VIMPC.}
Many works investigate changing the sampling distribution of VIMPC to improve the performance of the sampling-based controller.
The MPPI variant in~\cite{CCMPPI} uses covariance steering to assign a terminal covariance to the sampling distribution, but this relies on expert knowledge on how the covariance should be designed.

Normalizing flows are used in \cite{power2022variational} to approximate the optimal sampling distribution, but this does not address the problem of recursive feasibility on its own.
Another direction looks at changing the effective sampling distribution by modifying the underlying dynamical system to be more amenable to calculations.
The works \cite{TubeMPPI, L1Adaptive} leverage an auxiliary tracking controller, thus changing the sampling distribution to be the output of the stable tracking controller.
However, this requires the construction of such an auxiliary tracking controller, which can be difficult to perform for arbitrary nonlinear discrete-time systems.
In contrast, our proposed resampling-based rollouts (RBR) can be viewed as a way of easily improving the proposal distribution \textit{without} the need for any specialized problem structure.

\beforetextbf{}

\noindent\textbf{Duality between Control and Inference.}
The proposed resampling strategy in our work is similar in spirit to \cite{xie2020factor,qadri2022incopt} in that
factor graphs, a method used originally for estimation, is adapted for control purposes.
In \cite{zhang2023optimal} a linear optimal control was used to improve the performance of particle filters.
However, to the best of our knowledge, our approach is the first to adopt the resampling mechanism from particle filters
to improve the performance of sampling-based MPC controllers.

\beforetextbf{}

\noindent\textbf{Control Barrier Functions.} 
Designing CBFs with control-invariant safe sets is not a trivial task. As a result, previous works only seek saturating CBFs that do not consider input constraints \cite{Robotarium, lindemann2018control, xu2018safe}. Some works use hand-tuned CBFs to prevent saturation \cite{wei2022safe, clark2021verification}, which can lead to overly conservative safe sets. Other works develop CBFs with input constraints for specific types of systems \cite{cortez2022safe, cortez2021robust}. The emerging neural CBFs \cite{zhang2023neural, yu2023sequential, qin2021learning, zhang2024gcbf+} allow for more general dynamics utilizing machine learning techniques; however, they can still saturate the control limits. 

\noindent\textbf{Reachability.}
Recent works in the safety community have realized that the value function of the reachability problem is a valid CBF that takes input constraints into account \cite{choi2021robust,so2023train,he2024agile}.
In continuous-time, the reachability problem can be formulated as a Hamilton-Jacobi PDE \cite{margellos2011hamilton} and is conventionally solved using grid-based numerical PDE solvers \cite{mitchell2008flexible}.
However, grid-based methods suffer from the curse of dimensionality and cannot be practically applied to systems with a state space larger than $5$ dimensions \cite{mitchell2008flexible}.
To resolve this problem, recent works have looked at using learning-based methods at solving reachability problems in both continuous-time \cite{bansal2021deepreach} and discrete-time \cite{fisac2019bridging,hsu2021safety}.
In particular, the value functions of both the \textit{avoid} \cite{choi2021robust} and \textit{reach-avoid} problems \cite{he2024agile} are CBFs.
Though early works focused on the use of \textit{optimal} value functions \cite{choi2021robust}, recent works have shown that the value function corresponding to \textit{any} policy can also serve as a CBF, which is closely related to the ideas of backup-CBFs \cite{chen2021backup}.
While this idea has been used for the avoid problem continuous-time settings \cite{so2023solving} and for the reach-avoid problem in discrete-time settings \cite{he2024agile}, it has not so far been used for the avoid problem in discrete-time, which is the approach we take in this work with Discrete-time Policy Neural Control Barrier Function (DPNCBF).
We summarize the relationship of this work with existing deep reachability-based methods in \Cref{tab:compare_existing}.

\begin{table}[]
    \vspace*{-0.5em}
    \centering
    \caption{Relationship between DPNCBF and other reachability methods.}
    \label{tab:comparison}
    \vspace{-.9em}
    \scriptsize
    \begin{tabular}{cccc}
    \toprule
       Method                       & Avoid     & Discrete  & (Arbitrary) Policy-Conditioned \\ \midrule
        \citet{bansal2021deepreach} & \cmark    & \xmark    & \xmark \\
        \citet{fisac2019bridging}   & \cmark    & \cmark    & \xmark \\
        \citet{hsu2021safety}       & \xmark    & \cmark    & \xmark \\
        \citet{so2023train}         & \cmark    & \xmark    & \cmark \\
        \citet{he2024agile}         & \xmark    & \cmark    & \cmark \\ \midrule
        \textbf{DPNCBF (ours)}      & \cmark    & \cmark    & \cmark \\
     \bottomrule
    \end{tabular}
    \label{tab:compare_existing}
    \vspace{-0.5em}
\end{table}

\section{Problem Formulation}

We consider the discrete-time, nonlinear dynamics
\begin{align}\label{dynamics}
    x_{k+1} = f(x_k, u_k),
\end{align}
with state $x\in \mathcal{X} \subseteq \mathbb{R}^{n_x}$ and control $u\in \mathcal{U} \subseteq \mathbb{R}^{n_u}$.
Let $\mathcal{U}^K$ denote the set of control trajectories of length $K$.
Following the MPC setup, we assume that a cost function $J : \mathcal{U}^K \to \mathbb{R}$ encoding the desired behavior of the system is given.
Moreover, we consider state constraints defined by an 
\textit{avoid set} $\mathcal{A} \subset \mathcal{X}$ described as the superlevel set of some specification function $h$, i.e., 
\begin{equation} \label{eqn:AvoidSet}
    \mathcal{A} \coloneqq \{x \in \mathcal{X} \mid h(x) > 0\}.
\end{equation}
The goal is then to find a sequence of controls $\vu =
\{u_0,u_1,\ldots,u_{K-1}\} \in \mathcal{U}^K$ that minimizes the cost function $J$ while satisfying the system dynamics \eqref{dynamics} and safety constraints $x_k \not\in \mathcal{A}$ 
for all $k \geq 0$.

\subsection{Variational Inference MPC For Sampling-based Optimization}

To solve the above optimization problem, we deviate from traditional MPC solvers and use a variational inference MPC formulation to solve the problem via sampling-based optimization.
To this end, we make use of the control-as-inference framework \cite{levine2018reinforcement} to model the problem.
Specifically, let $o$ be a binary variable that indicates ``optimality'' such that, for all controls $\vu \in \mathcal{U}^K$,
\begin{equation} \label{eq:VI:opt_cond}
    p(o=1 \mid \vu) \propto \exp(-J(\vu)),
\end{equation}
We assume a prior $p_0(\vu)$ on the control trajectory. The ``optimal'' distribution can then be obtained via the posterior distribution
\begin{equation} \label{eq:VI:posterior}
    p(\vu \mid o=1) = \frac{p(o=1 \mid \vu)p_0(\vu)}{p(o=1)} = Z^{-1} \exp( -J( \vu )) p_0(\vu),
\end{equation}
where $Z := \int \exp(-J(\vu)) p_0(\vu) \diff{\vu}$ denotes the unknown normalization constant.
Since sampling from $p(\vu \mid o=1)$ is intractable, we use variational inference to approximate the posterior $p(\vu \mid o=1)$ with a tractable distribution $q_\vv(\vu)$ parametrized by some vector $\vv$ by minimizing the forward KL divergence, i.e.,
\begin{equation} \label{eq:VI}
    \min_{\vv} \quad \KL{ p(\vu \mid o=1) }{ q_\vv(\vu) }.
\end{equation}
In the special case of $q_\vv$ being a Gaussian distribution with mean $\vv$ and a fixed control input covariance $\Sigma$, intrinsic to the robotic system of interest \cite{InfoMPPI},
we can solve \eqref{eq:VI} in closed-form to obtain the optimal $\vv^*$ as (see \Cref{app:VI Gauss} for details)
\begin{equation} \label{eq:vi:v_opt_def}
    \vv^* = \E_{p(\vu \mid o=1)}[ \vu ].
\end{equation}
While this expectation cannot be readily computed because $p(\vu \mid o=1)$ is intractable to sample from,
we can use importance sampling to change the sampling distribution to some other distribution $r(\vu)$ that is easier to sample from, leading to
\begin{align}
    \vv^*
    &= \E_{r(\vu)}\left[ \frac{ p(\vu \mid o=1)}{r(\vu)} \vu \right] \\
    &= \E_{r(\vu)}\bigg[ \underbrace{\frac{Z^{-1} \exp(-J(\vu)) p_0(\vu)}{r(\vu)}}_{\coloneqq \omega(\vu)} \vu \bigg]. \label{eq:vi:omega_def}
\end{align}
Using samples $\vu^1, \dots, \vu^N$ drawn from $r$, we compute a Monte Carlo estimate $\hat{\vv}$ of the optimal control sequence $\vv^*$ (see \Cref{app:VI SNIS} for details) as follows,
\begin{align}
    \hat{\vv} &= \sum_{i=1}^N \tilde{\omega}^i \vu^i, \label{eq:vi:v_opt} \\
    \tilde{\omega}^i &\coloneqq \frac{\omega(\vu^i)}{\sum_{j=1}^N \omega(\vu^j)} . \label{eq:vi:omega_tilde}
\end{align}
Note that the $Z$ in $\omega(\vu)$ is canceled out in the computation of $\tilde{\omega}^i$ in \eqref{eq:vi:omega_tilde} and hence can be ignored.

\begin{mdframed}[style=ThmFrame]
\begin{remark}[Self-normalized importance sampling]
    Note that the weights $\tilde{\omega}^i$ in \eqref{eq:vi:v_opt} are \textbf{not} from the regular importance sampling estimates in \eqref{eq:vi:omega_def} due to the normalization by the sum of the weights in \eqref{eq:vi:omega_tilde}.
    Instead, \eqref{eq:vi:v_opt} is a self-normalized importance sampling estimator (SNIS), which uses an \textit{estimate} of the weights $\omega$ but results in a biased, though asymptotically unbiased, estimator.
    This fact is not present in many existing works on both VIMPC (e.g., \cite{okada2020variational,wang2021variational}) and MPPI (e.g., \cite{InfoMPPI}).
    See \Cref{app:VI SNIS} for more details.
\end{remark}
\end{mdframed}

\begin{mdframed}[style=ThmFrame]
\begin{remark}[Connections to MPPI and CEM]
    In the case where we choose $p_0(\vu) = q_{\bf 0}(\vu)$ and $r(\vu)=q_{\bar{\vv}}(\vu)$ for some previous estimate of the optimal control sequence $\bar{\vv}$, the above variational inference MPC framework reduces to MPPI (see \Cref{app:MPPI as VI} for details).
    Moreover, Cross-Entropy Method (CEM) also falls in the VIMPC framework \cite{okada2020variational,wang2021variational}.
\end{remark}
\end{mdframed}

\subsection{Constraint Handling In Variational Inference MPC} \label{subsec:constr_handling}
One advantage of sampling-based MPC is that it is simple to incorporate hard constraints. 
One can include an indicator function in the cost function that heavily penalizes constraint violations (e.g., see \cite{InfoMPPI,TubeMPPI,barcelos2021dual,bhardwaj2022storm}). Specifically, for some large constant $C > 0$, we can modify the cost function as
\begin{equation} \label{eq:constr_handle}
    J_{\text{new}}(\vu) = J(\vu) + C \sum_{k=0}^{K} \ind{x_k \in \mathcal{A}}.
\end{equation}
From the inference perspective \eqref{eq:VI:opt_cond}, we can interpret $J_{\text{new}}$ \eqref{eq:constr_handle} as saying that $o = o_{\text{cost}} \land o_{\text{constraint}}$, where,
\begin{align}
    p(o_{\text{cost}} = 1 \mid \vu) &\propto \exp( -J(\vu) ), \\
    p(o_{\text{constraint}} = 1 \mid \vu) &\propto \exp\left( -C \sum_{k=0}^{K} \ind{x_k \in \mathcal{A}} \right). \label{eq:constr_handle:p_constr}
\end{align}
However, a problem with \eqref{eq:constr_handle:p_constr} is that it looks at safety only within the prediction horizon and does not consider the probability of staying safe beyond the prediction horizon. 
To tackle this problem, we will use a discrete-time control barrier function (DCBF), which we introduce in the next section, to enforce that the states remain within a control-invariant set.

\subsection{Discrete-time Control Barrier Functions (DCBF)} \label{sec:DCBFDefinition}
A discrete-time control barrier function (DCBF) \cite{Shield-MPPI} associated to the avoid set $\mathcal{A}$ is a function $B : \mathcal{X} \to \mathbb{R}$ such that\footnote{Note that we use the opposite sign convention as compared to \cite{Shield-MPPI}.}
\begin{subequations}\label{eqn:DCBFDefinition}
\begin{align}
\hspace{-.3em}B(x) &> 0, \quad \forall x \in \mathcal{A},\label{eqn:avoid_condition}\\
\hspace{-.3em}B(x) &\leq 0 \implies \inf_{u\in \mathcal{U}} B(f(x,u)) - B(x) \leq -\alpha (B(x)), \label{eqn:descent_condition}
\end{align}
\end{subequations}
where $\alpha$ is an extended class-$\kappa$ function \cite{xu2015robustness}.
As in \cite{Shield-MPPI}, we restrict our attention to the class of linear extended class-$\kappa$ functions, i.e.,
\begin{align}\label{eqn:KappaFunction}
\alpha(B(x)) = a \cdot B(x), \quad a \in (0, 1).    
\end{align}
The following theorem from \cite{Shield-MPPI} proves that a controller satisfying the condition \eqref{eqn:descent_condition} renders the sublevel set $\mathcal{S} = \{x \mid B(x)\leq 0\}$ forward-invariant.

\begin{theorem}[\protect{\cite[Property 3.1]{Shield-MPPI}}] \label{thm:forward_invariant}
    Any control policy  $\pi:\mathcal{X} \rightarrow \mathcal{U}$ satisfying the condition
    \begin{equation} \label{eqn:descent_condition2}
        B(f(x,\pi(x))) - B(x) \leq -\alpha (B(x)),
    \end{equation}
    renders the sublevel set $\mathcal{S} = \{x \mid B(x)\leq 0\}$ forward-invariant.
\end{theorem}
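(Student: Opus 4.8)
The plan is to argue by induction on the time index $k$ along the closed-loop trajectory $x_{k+1} = f(x_k, \pi(x_k))$, showing that $B(x_k) \le 0$ for every $k \ge 0$ whenever $B(x_0) \le 0$, i.e., whenever $x_0 \in \mathcal{S}$. The base case $k = 0$ is exactly the hypothesis $x_0 \in \mathcal{S}$, so nothing is to be shown there.

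For the inductive step, I would assume $B(x_k) \le 0$ and apply the descent condition \eqref{eqn:descent_condition2} at the point $x_k$, which gives
\[
B(x_{k+1}) = B\bigl(f(x_k, \pi(x_k))\bigr) \le B(x_k) - \alpha\bigl(B(x_k)\bigr).
\]
Substituting the linear extended class-$\kappa$ function \eqref{eqn:KappaFunction}, $\alpha(B(x_k)) = a\, B(x_k)$, yields $B(x_{k+1}) \le (1-a)\, B(x_k)$.

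The only thing left to check — and the ``obstacle,'' such as it is — is the sign bookkeeping: since $a \in (0,1)$ we have $1-a \in (0,1)$, a strictly positive constant, and multiplying the nonpositive quantity $B(x_k)$ by a positive number keeps it nonpositive, so $B(x_{k+1}) \le (1-a)\, B(x_k) \le 0$. Hence $x_{k+1} \in \mathcal{S}$, which closes the induction and shows $x_k \in \mathcal{S}$ for all $k \ge 0$, i.e., $\mathcal{S}$ is forward-invariant under $\pi$. I would additionally remark, via the contrapositive of \eqref{eqn:avoid_condition}, that $B(x_k) \le 0$ implies $x_k \notin \mathcal{A}$, so forward-invariance of $\mathcal{S}$ certifies constraint satisfaction $x_k \notin \mathcal{A}$ for all $k \ge 0$; this last observation is what connects the theorem to the safety-beyond-the-horizon goal, though it is not required for the statement as written.
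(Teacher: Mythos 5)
Your induction is correct and is essentially the standard argument; the paper itself cites this as Property~3.1 of the Shield-MPPI reference rather than reproving it, so there is no in-paper proof to compare against, but your derivation matches what that reference does. One point worth being explicit about, which you implicitly handle correctly: the step $B(x_{k+1}) \le (1-a)B(x_k) \le 0$ genuinely needs the restriction \eqref{eqn:KappaFunction} to linear $\alpha$ with $a \in (0,1)$ (or more generally $\alpha(r) \ge r$ for $r \le 0$), since for an arbitrary extended class-$\kappa$ function $\alpha$ the quantity $B(x_k) - \alpha(B(x_k))$ can be positive even when $B(x_k) \le 0$; you are not proving the theorem for the broader class of $\alpha$ that the DCBF definition \eqref{eqn:descent_condition} nominally allows, but only for the linear class the paper actually commits to, which is the right reading.
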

Hence, one way to guarantee recursive feasibility, and thus safety beyond the prediction horizon, is to enforce the condition \eqref{eqn:descent_condition2} at every time step of the optimization problem.
Another point to note is that $\mathcal{S}$ is a control-invariant set, and thus the constraint $B(x_K) \leq 0$ can be imposed as a terminal state constraint to guarantee recursive feasibility for MPC as is done classically \cite{kerrigan2000invariant,chen2003terminal}.
Thus, one can try to enforce these constraints by incorporating them into the cost function as in
\eqref{eq:constr_handle} \cite{Shield-MPPI}, i.e., modify the cost according to one of the following options,
\begin{align}
    J_{\text{new}}(\vu) &= J(\vu) + C \sum_{k=0}^{K} \ind{ B(f(x,u)) - B(x) > -\alpha (B(x)) }, \\
    J_{\text{new}}(\vu) &= J(\vu) + C \sum_{k=0}^{K} \Big[ B(f(x,u)) - B(x) + \alpha( B(x) ) \Big]_+. \label{eq:cbfpenalty}
\end{align}
However, this approach has two problems. 
First, for sufficiently large $C$, samples that violate the DCBF constraint will have a normalized weight of near zero, rendering these samples useless.
Conservative DCBFs may cause the majority of the samples to violate the constraint \eqref{eqn:descent_condition2} and hence have zero weight, resulting in poor estimates of the optimal control and wasted computation.
Second, while constructing a function $B$ that satisfies \eqref{eqn:avoid_condition} is relatively simple, it is much harder to construct a function $B$ that also satisfies \eqref{eqn:descent_condition}, contrary to the case with (continuous-time) control barrier functions \cite{so2023train,yu2023sequential} (see also \Cref{Remark3} below).
Consequently, many works that integrate control barrier functions into MPC often only propose functions for which \eqref{eqn:avoid_condition} holds and not \eqref{eqn:descent_condition} \cite{Shield-MPPI}, rendering the safety guarantees of \Cref{thm:forward_invariant} invalid.

In the next section, we address these two problems
via a novel resampling method that reuses computations from zero-weight samples and learns a DCBF that tries to respect both \eqref{eqn:avoid_condition} and \eqref{eqn:descent_condition}  using policy value functions.
\begin{mdframed}[style=ThmFrame]
\begin{remark}[Differences between discrete-time and continuous-time control barrier functions under unbounded controls]\label{Remark3}
    Note that in the continuous-time case where having an unbounded control space $\mathcal{U}$, control-affine dynamics, and a non-zero $\frac{\partial \dot{B}}{\partial u}$ are sufficient to guarantee that $B$ is a valid control barrier function. This is because \eqref{eqn:descent_condition} generally holds for a function $B$ that satisfies \eqref{eqn:avoid_condition} in the continuous-time case under these assumptions.
    However, the same does not apply to discrete-time under general nonlinear dynamics $f$ since \eqref{eqn:descent_condition2} is nonlinear in $u$, let alone the fact that robotic systems in real life are unable to exert infinite forces and hence generally do not have unbounded controls.
\end{remark}
\end{mdframed}

\begin{mdframed}[style=ThmFrame]
\begin{remark}[Constraint satisfaction in the variational inference framework] \label{rem:convex_thing}
    One potential issue with incorporating DCBF constraints into the cost function $J$ 
    is that,
    while $p(\vu \mid o=1)$ may have zero density on the set of controls that violate the DCBF constraint,
    this does not necessarily hold for the approximating distribution $q$ since we are minimizing the \textit{forward} KL divergence \cite{jerfel2021variational}.
    In particular, the mean $\vv$ of $q_\vv$, which is the control to be used, may not satisfy the DCBF constraint.
    One way to guarantee that $\vv$ does satisfy the DCBF constraint is to assume that the set of controls that satisfy the DCBF constraint is itself convex (see \Cref{app:convex_controls}).
    However, this is an unrealistic assumption that is often violated by state constraints such as obstacle avoidance.
    Despite these shortcomings, we observed in our experimental results that this method of enforcing DCBF constraints on $p(\vu \mid o=1)$ indeed drastically improved safety.
    Alternatively, this problem can be solved by checking for constraint satisfaction of the mean $\vv$, and if not satisfied, replacing it with any of the rollouts that do satisfy the constraints, similar to \cite{borquez2025dualguard}. 
    We leave further theoretical exploration of this issue as future work.
\end{remark}
\end{mdframed}

\section{Neural Shield VIMPC}\label{sec:NSMPPI}

In this section, we propose Neural Shield VIMPC (NS-VIMPC),
a sampling-based MPC paradigm that efficiently samples trajectories using a DCBF modeled using a neural network.
We illustrate the proposed NS-VIMPC algorithm in \Cref{fig:control_architecture}.

\subsection{Approximating DCBF Using Neural Policy Value Functions}\label{sec:ConstructingCBF}

Let $x^\pi_{k}$ denote the state at time $k$
following the control policy $\pi : \mathcal{X} \to \mathcal{U}$. Define the policy value function $V^{h, \pi}$ as,
\begin{align}\label{eqn:ValueFunctionDefinition}
    V^{h, \pi}(x_0) \coloneqq \max_{k \geq 0} h(x^\pi_{k}).
\end{align}
We then have the following theorem.
\begin{theorem}
    $V^{h, \pi}$ satisfies \eqref{eqn:avoid_condition} and \eqref{eqn:descent_condition} and is a DCBF.
\end{theorem}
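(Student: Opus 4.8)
The plan is to derive the two defining properties of a DCBF, \eqref{eqn:avoid_condition} and \eqref{eqn:descent_condition}, directly from the definition \eqref{eqn:ValueFunctionDefinition}, the crucial ingredient being a Bellman-type recursion for $V^{h,\pi}$. The avoid condition \eqref{eqn:avoid_condition} is immediate: if $x \in \mathcal{A}$, then $h(x) > 0$ by \eqref{eqn:AvoidSet}, and since the $\pi$-trajectory started at $x^\pi_0 = x$ is admissible in the maximum \eqref{eqn:ValueFunctionDefinition}, taking $k = 0$ gives $V^{h,\pi}(x) \geq h(x) > 0$.

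Next I would establish the recursion
\[
  V^{h,\pi}(x) \;=\; \max\bigl\{\, h(x),\; V^{h,\pi}\!\bigl(f(x,\pi(x))\bigr) \,\bigr\},
\]
obtained by splitting the maximum in \eqref{eqn:ValueFunctionDefinition} into the $k=0$ term and the $k \geq 1$ terms, and observing that under $\pi$ the state at time $1$ is $f(x,\pi(x))$ and the tail sequence $x^\pi_1, x^\pi_2, \ldots$ is precisely the $\pi$-trajectory initialized at $f(x,\pi(x))$ (the identity holds verbatim with $\sup$ replacing $\max$ if attainment is a concern). This yields two facts: $h(x) \leq V^{h,\pi}(x)$, and, more importantly, $V^{h,\pi}\!\bigl(f(x,\pi(x))\bigr) \leq V^{h,\pi}(x)$, i.e. the value is nonincreasing along the policy $\pi$.

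For the descent condition \eqref{eqn:descent_condition}, suppose $V^{h,\pi}(x) \leq 0$. Using the particular control $u = \pi(x) \in \mathcal{U}$ to bound the infimum and then invoking the recursion,
\[
  \inf_{u \in \mathcal{U}} V^{h,\pi}\!\bigl(f(x,u)\bigr) - V^{h,\pi}(x)
  \;\leq\; V^{h,\pi}\!\bigl(f(x,\pi(x))\bigr) - V^{h,\pi}(x)
  \;\leq\; 0 .
\]
On the other hand, any extended class-$\kappa$ function $\alpha$ is strictly increasing with $\alpha(0) = 0$, so $V^{h,\pi}(x) \leq 0$ forces $\alpha\bigl(V^{h,\pi}(x)\bigr) \leq 0$, hence $-\alpha\bigl(V^{h,\pi}(x)\bigr) \geq 0$. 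Chaining the two displays gives $\inf_{u \in \mathcal{U}} V^{h,\pi}(f(x,u)) - V^{h,\pi}(x) \leq 0 \leq -\alpha\bigl(V^{h,\pi}(x)\bigr)$, which is \eqref{eqn:descent_condition} (in particular for the linear choice \eqref{eqn:KappaFunction}). Since both \eqref{eqn:avoid_condition} and \eqref{eqn:descent_condition} hold, $V^{h,\pi}$ is a DCBF.

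I do not anticipate a substantive obstacle; the only points requiring care are writing the recursion cleanly and justifying the index shift in the tail trajectory, and noting that the supremum defining $V^{h,\pi}$ could be infinite — but this can only happen when $V^{h,\pi}(x) > 0$, so it never interferes with \eqref{eqn:descent_condition}, which is vacuous unless $V^{h,\pi}(x) \leq 0$. If desired, one can also record the stronger fact that $V^{h,\pi}$ is nonincreasing along $\pi$ everywhere (not only on $\{V^{h,\pi} \leq 0\}$), which is what makes the descent inequality hold with room to spare.
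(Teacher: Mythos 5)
Your proposal is correct and follows essentially the same route as the paper's proof: both arguments rest on the two facts $V^{h,\pi}(x) \geq h(x)$ and $V^{h,\pi}(f(x,\pi(x))) \leq V^{h,\pi}(x)$, with the avoid condition following from the first and the descent condition following from the second together with $\pi(x) \in \mathcal{U}$ bounding the infimum and $-\alpha(V^{h,\pi}(x)) \geq 0$ on $\{V^{h,\pi} \leq 0\}$. The only presentational difference is that you derive these two facts via an explicit Bellman recursion $V^{h,\pi}(x) = \max\{h(x), V^{h,\pi}(f(x,\pi(x)))\}$, whereas the paper asserts them directly from the definition; your remarks about the index shift and possible non-attainment of the supremum are sound but not strictly needed for the argument.
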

\begin{proof}
From the definition of $V^{h,\pi}$ \eqref{eqn:ValueFunctionDefinition}, we have that
\begin{align}
    V^{h, \pi}(x_k) &\geq h(x_k), \label{eqn:ValueFunctionProperty1}\\
    V^{h, \pi}(x_k) &\geq V^{h, \pi}(f(x_k, \pi(x_k))). \label{eqn:ValueFunctionProperty2}
\end{align}
Using the definition of the avoid set $\mathcal{A}$ \eqref{eqn:AvoidSet} and \eqref{eqn:ValueFunctionProperty1}, it follows that $V^{h,\pi}(x) > 0$ for all $x \in \mathcal{A}$, satisfying the first condition \eqref{eqn:avoid_condition} of a DCBF.
When $V^{h,\pi}(x_k) \leq 0$, we have $-\alpha(V^{h,\pi}(x_k)) \geq 0$, and  \eqref{eqn:ValueFunctionProperty2} implies that,
\begin{equation}
    V^{h,\pi}(f(x_{k}, \pi(x_k))) - V^{h,\pi}(x_k) \leq 0 \leq -\alpha(V^{h,\pi}(x_k)).
\end{equation}
Since $\pi(x_k) \in \mathcal{U}$, this implies the second condition \eqref{eqn:descent_condition}.
Thus, the policy value function $V^{h,\pi}$ is a DCBF.
\end{proof}

Although we have constructed a DCBF from \eqref{eqn:ValueFunctionDefinition}, the challenge is that the policy value function $V^{h,\pi}$ cannot be easily evaluated at arbitrary states
since the maximization in \eqref{eqn:ValueFunctionDefinition} is taken over an infinite horizon.
To fix this, we train a neural network approximation $V_\theta^{h,\pi}$ of $V^{h, \pi}$, extending the approach of \cite{so2023train} to the discrete-time case.
To begin, we first rewrite \eqref{eqn:ValueFunctionDefinition} in a dynamic programming form,
\begin{equation} \label{eq:value_fn_dp}
    V^{h,\pi}(x_0) = \max\Big\{ \max_{0 \leq k \leq T} h(x_{k}^\pi),\; V^{h, \pi} (x^\pi_{T}) \Big\}.
\end{equation}
We can then train a neural network $V_\theta^{h,\pi}$ to approximate the value function $V^{h,\pi}$ by minimizing the loss
\begin{equation}\label{eqn:DPcost}
    L(\theta) = \norm{V^{h,\pi}_\theta (x_0) - \max \biggl\{ \max_{0 \leq k \leq T} h(x_{k}^\pi), V^{h, \pi}_\theta (x^\pi_{T})\biggr\} }^2,
\end{equation}
over all states $x_0$.
One problem, however, is that the minimizer of \eqref{eqn:DPcost} is not unique.
For example, if $h(x) \leq \bar{h}$ for all $x$, then $V_\theta^{h,\pi} = \bar{h}$ is a minimizer of \eqref{eqn:DPcost} but does not necessarily satisfy \eqref{eqn:ValueFunctionDefinition}.
To fix this, we follow the approach of \cite{fisac2019bridging,so2023solving} and, inspired by reinforcement learning \cite{sutton2018reinforcement}, introduce a discount factor $\gamma \in (0,1)$ to define the \textit{discounted} value function
\begin{equation} \label{eq:value_fn_dp_discounted}
    V^{h,\pi,\gamma}(x_k)
    = \max \Big\{ h(x_k^\pi),\; (1 -\gamma) h(x_k) + \gamma V^{h, \pi, \gamma} (x_{k+1}^\pi) \Big\},
\end{equation}
and the corresponding loss $L$,
\begin{align}\label{eqn:DPcostModified}
    L(\theta) &= \norm{V^{h,\pi,\gamma}_\theta (x_k) - \hat{V}^{h,\pi,\gamma}_\theta(x_k) }^2, \\
    \hat{V}^{h,\pi,\gamma}_\theta(x_k) &= \max \biggl\{ h(x_k^\pi), (1-\gamma) h(x_k^\pi) + \gamma V^{h, \pi,\gamma}_\theta (x^\pi_{k+1})\biggr\}.
\end{align}
Finally, instead of using the learned $V^{h,\pi,\gamma}_\theta$ directly in \eqref{eqn:descent_condition2}, we first take the maximum with $h$ and use $\tilde{V}^{h,\pi,\gamma}_\theta$ defined as
\begin{equation}
   \tilde{V}^{h,\pi,\gamma}_\theta(x) \coloneqq \max\{ h(x), V^{h,\pi,\gamma}_\theta(x) \}.
\end{equation}
This guarantees that $\tilde{V}^{h,\pi,\gamma}_\theta(x) \geq h(x)$
and hence the zero sublevel set of $\tilde{V}^{h,\pi,\gamma}$ will be a subset of $h$.
Hence, imposing the state constraint $\tilde{V}^{h,\pi,\gamma}(x) \leq 0$ will, at the very least, prevent violations of the original state constraints during the prediction horizon,
and potentially also induce a state constraint that is closer to the true control-invariant set than the original sublevel set of $h$.

\begin{mdframed}[style=ThmFrame]
\begin{remark}[Neural Network Verification of DCBFs]
    We emphasize that our goal here is to obtain a good approximation of a DCBF $B_\theta$ using a neural policy value function $V^{h,\pi,\gamma}_\theta$, and not necessarily to obtain a true DCBF.
    Verifying whether the learned $V^{h,\pi,\gamma}_\theta$ is a true DCBF requires neural network verification which can be intractable or inconclusive (see Appendix 1 in \cite{katz2017reluplex} on the NP-completeness of the NN-verification problem).
    This is especially true in the discrete-time case where the condition \eqref{eqn:descent_condition2} may not be affine in the control $u$.
\end{remark}
\end{mdframed}

Nevertheless, as we show later, empirical results show that using an approximation of a DCBF is sufficient for enabling the use of much shorter prediction horizons without sacrificing safety.

\subsection{Efficient Sampling Using Resampling-based Rollouts}\label{sec:Efficient_Trajectory_Sampling}

We tackle the problem of wasted samples with zero weights by drawing inspiration from the sequential Monte Carlo \cite{doucet2001introduction} and particle filter \cite{gordon1993novel} literature, and by performing a per-timestep resampling during the rollout, which we call Resampling-Based Rollouts (RBR).
Specifically, the control-as-inference problem formulation \eqref{eq:VI:opt_cond} gives us a \textit{temporal} decomposition of $p(o_{\text{constraint}}=1 \mid \vu)$ \eqref{eq:constr_handle:p_constr} in the case of state constraints when $C \to \infty$, as follows
\begin{equation}
    p(o_{\text{constraint}}=1 \mid \vu)
    \propto \prod_{k=0}^{K} \ind{x_k \not\in \mathcal{A}}.
\end{equation}
Hence, we can write the posterior $p(\vu \mid o=1)$ as
\begin{equation}
    p(\vu \mid o=1)
    \propto 
    p(o_{\text{cost}} = 1 \mid \vu)
    p(\vu)
    \left(\prod_{k=0}^{K} \ind{x_k \not\in \mathcal{A}} \right).
\end{equation}
By treating the term on the right as a ``measurement model,'' particle filtering \cite{gordon1993novel} can be used to solve the control-as-inference problem.
The update of the particle filter weights $\hat{w}^i_k$ at time step $k$ for particle $i$ is written as
\begin{equation} \label{eq:pf_weight_update}
    \hat{w}^i_k = \hat{w}^i_{k-1} \ind{x_k \not\in \mathcal{A}}.
\end{equation}
Then, we \textit{resample} the particles with probability proportional to their weights $\hat{w}^i_k$ to obtain a new set of particles $\vu^i$.
Due to the indicator function in the weight update \eqref{eq:pf_weight_update}, the weights are either $0$ or $1$.
Assuming there exists a particle that satisfies the state constraint, applying systematic resampling \cite{kitagawa1996monte} results in ``rewiring'' particles that violate the constraint to particles that still maintain safety, reusing the computation from zero-weight samples (see \Cref{fig:resampling}).
If all particles violate the constraint, we do not resample.
In this case, since we want to minimize constraint violations, we still use the cost term \eqref{eq:cbfpenalty} such that trajectories with higher constraint violations have higher costs.
Overall, at each timestep except for the last, we check constraint satisfaction for each particle. This gives a total of $N(K-1)$ queries per rollout.

\begin{figure}[]
    \centering
    \includegraphics[width=\linewidth]{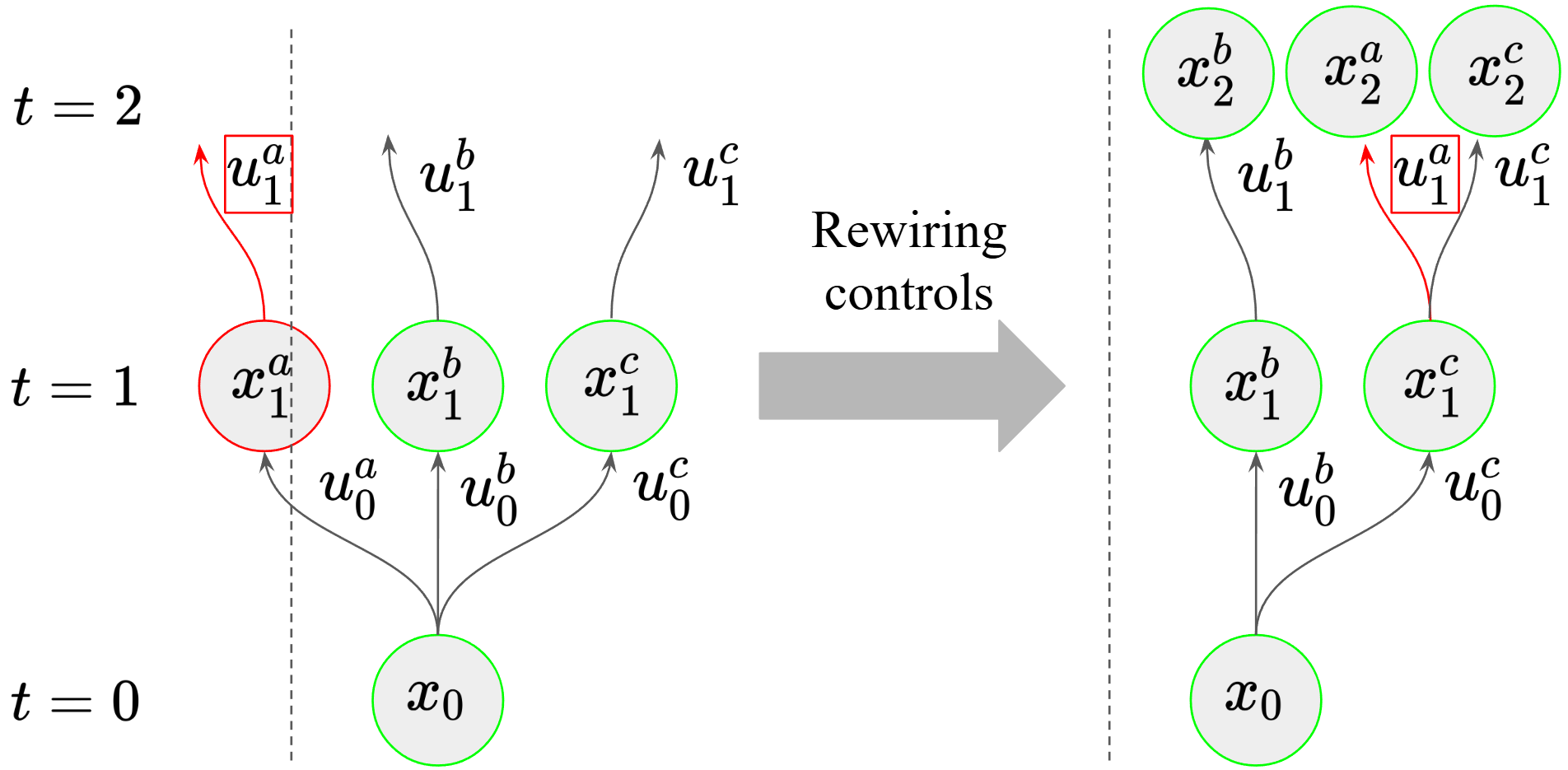}
    \caption{\textbf{Resampling-based Rollouts (RBR).} Inspired by particle filtering, at each step of the trajectory rollout, we uniformly resample any samples that violate the state constraints among the set of safe samples.
    In this example,
    the two particles at states $x^b_1$, $x^c_1$ satisfy the constraints and have weights $\hat{w}^b_1 = \hat{w}^c_1 = 1$.
    The particle at state $x^a_1$ violates constraints and thus has a weight of $\hat{w}^a_1=0$.
    Consequently, both $x_1^a$ and the control prefix $u_0^a$ are resampled away and replaced with equal probability by either $x^b_1$ or $x^c_1$ and their control prefixes respectively (in this example, $x^c_1$ was chosen). Note that $u^a_1$ can be left untouched, though is now applied from $x^c_1$ instead of the unsafe state $x^a_1$.
    }
    \label{fig:resampling}
\end{figure}

While we can prove that this resampling is unbiased from the particle filter perspective,
we also provide a more direct proof of this fact without the analogy to particle filters.
\begin{mdframed}[style=ThmFrame]
\begin{theorem} \label{thm:same_marginal}
    For a probability density function $\mathsf{f}$ and set $\mathsf{S}$, let $\mathsf{a}$ be sampled from the conditional density $\mathsf{f}(\mathsf{x} \mid \mathsf{x} \in \mathsf{S})$, and let $\mathsf{b}$ be sampled from the unconditional density $\mathsf{f}$, such that $\mathsf{a}$ and $\mathsf{b}$ are independent.
    Define the ``rewired'' random variable $\tilde{\mathsf{b}}$ to be equal to $\mathsf{b}$ if $\mathsf{b} \in \mathsf{S}$ and $\mathsf{a}$ otherwise, i.e.,
    \begin{equation}
        \tilde{\mathsf{b}} =
        \mathbbm{1}_{\mathsf{b} \in \mathsf{S}} \mathsf{b} + 
        \mathbbm{1}_{\mathsf{b} \not\in \mathsf{S}} \mathsf{a}
    \end{equation}
    Then, $\tilde{\mathsf{b}}$ is also sampled from the conditional density $\mathsf{f}(\mathsf{x} \mid \mathsf{x} \in \mathsf{S})$, such that $\tilde{\mathsf{b}}$ and $\mathsf{a}$ have the same distribution, $\tilde{\mathsf{b}} \overset{d}{=} \mathsf{a}$.
\end{theorem}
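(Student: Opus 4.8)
The plan is to compute the distribution of $\tilde{\mathsf{b}}$ directly by conditioning on the event $\{\mathsf{b} \in \mathsf{S}\}$ versus its complement, and then matching the result to the target conditional density $\mathsf{f}(\mathsf{x} \mid \mathsf{x} \in \mathsf{S})$. Let $p \coloneqq \Prob(\mathsf{b} \in \mathsf{S}) = \int_{\mathsf{S}} \mathsf{f}(\mathsf{x}) \diff{\mathsf{x}}$, and assume $p > 0$ so the conditional density is well-defined. For any (measurable) test set $\mathsf{T}$, I would write
\begin{equation*}
    \Prob(\tilde{\mathsf{b}} \in \mathsf{T}) = \Prob(\tilde{\mathsf{b}} \in \mathsf{T}, \mathsf{b} \in \mathsf{S}) + \Prob(\tilde{\mathsf{b}} \in \mathsf{T}, \mathsf{b} \not\in \mathsf{S}).
\end{equation*}
On the event $\{\mathsf{b} \in \mathsf{S}\}$ we have $\tilde{\mathsf{b}} = \mathsf{b}$, so the first term equals $\Prob(\mathsf{b} \in \mathsf{T} \cap \mathsf{S})$. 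On the event $\{\mathsf{b} \not\in \mathsf{S}\}$ we have $\tilde{\mathsf{b}} = \mathsf{a}$, and since $\mathsf{a}$ and $\mathsf{b}$ are independent, the second term factors as $\Prob(\mathsf{b} \not\in \mathsf{S}) \, \Prob(\mathsf{a} \in \mathsf{T}) = (1-p)\, \Prob(\mathsf{a} \in \mathsf{T})$.

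Next I would substitute the explicit densities. Since $\mathsf{a} \sim \mathsf{f}(\cdot \mid \cdot \in \mathsf{S})$, we have $\Prob(\mathsf{a} \in \mathsf{T}) = \tfrac{1}{p}\int_{\mathsf{T} \cap \mathsf{S}} \mathsf{f}(\mathsf{x}) \diff{\mathsf{x}}$, and $\Prob(\mathsf{b} \in \mathsf{T} \cap \mathsf{S}) = \int_{\mathsf{T}\cap\mathsf{S}} \mathsf{f}(\mathsf{x})\diff{\mathsf{x}}$. Adding the two contributions,
\begin{equation*}
    \Prob(\tilde{\mathsf{b}} \in \mathsf{T}) = \int_{\mathsf{T}\cap\mathsf{S}} \mathsf{f}(\mathsf{x})\diff{\mathsf{x}} + (1-p)\cdot\frac{1}{p}\int_{\mathsf{T}\cap\mathsf{S}} \mathsf{f}(\mathsf{x})\diff{\mathsf{x}} = \frac{1}{p}\int_{\mathsf{T}\cap\mathsf{S}} \mathsf{f}(\mathsf{x})\diff{\mathsf{x}},
\end{equation*}
which is exactly $\Prob(\mathsf{a} \in \mathsf{T})$ for the conditional law $\mathsf{f}(\cdot \mid \cdot \in \mathsf{S})$. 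Since this holds for all test sets $\mathsf{T}$, the two distributions coincide, giving $\tilde{\mathsf{b}} \overset{d}{=} \mathsf{a}$. It is also worth noting $\Prob(\tilde{\mathsf{b}} \in \mathsf{S}) = 1$, confirming that the rewiring always produces a constraint-satisfying sample.

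I do not anticipate a serious obstacle here — the argument is a short conditioning computation — but the one point that needs care is the use of independence of $\mathsf{a}$ and $\mathsf{b}$ when splitting off the $\{\mathsf{b}\not\in\mathsf{S}\}$ term; without it, $\Prob(\tilde{\mathsf{b}}\in\mathsf{T},\mathsf{b}\not\in\mathsf{S})$ would not factor. I would also state the mild nondegeneracy assumption $\Prob(\mathsf{b}\in\mathsf{S})>0$ explicitly (otherwise the conditional density is undefined and the claim is vacuous), and remark that the result extends immediately to the $N$-particle resampling setting by applying it coordinatewise / inductively, since each rewired particle is drawn from (a mixture that reduces to) the conditional law, which is what justifies the unbiasedness of RBR.
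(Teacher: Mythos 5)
Your proof is correct and proceeds by the same natural route the paper takes: decompose $\Prob(\tilde{\mathsf{b}}\in\mathsf{T})$ on the event $\{\mathsf{b}\in\mathsf{S}\}$ versus its complement, use independence of $\mathsf{a}$ and $\mathsf{b}$ to factor the second term, and sum to recover $p^{-1}\int_{\mathsf{T}\cap\mathsf{S}}\mathsf{f}$. The explicit nondegeneracy remark ($\Prob(\mathsf{b}\in\mathsf{S})>0$) and the observation that $\Prob(\tilde{\mathsf{b}}\in\mathsf{S})=1$ are sensible additions but do not change the argument.
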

\end{mdframed}
The proof is given in \Cref{sec:proof:same_marginal}. Consequently, we can use $\tilde{\mathsf{b}}$ in a Monte Carlo estimator and still obtain unbiased estimates, as shown in the following Corollary (proof is given in \Cref{sec:proof:resample_unbiased}).
\begin{mdframed}[style=ThmFrame]
\begin{corollary} \label{thm:resample_unbiased}
    For any function $w$, the Monte Carlo estimate of $\E[w( \mathsf{x} )]$ under the conditional density $\mathsf{f}(\mathsf{x} \mid \mathsf{x} \in \mathsf{S})$ using random variables $\mathsf{a}$ and $\tilde{\mathsf{b}}$ is unbiased, i.e.,
    \begin{equation}
        \E[\frac{1}{2} w( \mathsf{a} ) + \frac{1}{2} w( \tilde{\mathsf{b}} )] = \E[w( \mathsf{x} )].
    \end{equation}
\end{corollary}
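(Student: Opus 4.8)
The plan is to deduce the Corollary directly from Theorem~\ref{thm:same_marginal} by linearity of expectation, with essentially no extra work. The only subtlety worth keeping in mind is that the two summands $w(\mathsf{a})$ and $w(\tilde{\mathsf{b}})$ are \emph{not} independent (indeed $\tilde{\mathsf{b}}$ coincides with $\mathsf{a}$ on the event $\{\mathsf{b}\notin\mathsf{S}\}$), but this is irrelevant, since the expectation of a sum equals the sum of the expectations regardless of dependence — which is exactly why a $\tfrac12/\tfrac12$ average is unbiased here: unbiasedness of a Monte Carlo estimator only requires each term to have the correct mean, not that the terms be mutually independent.

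First I would record the two marginal facts. By hypothesis $\mathsf{a}$ is distributed according to $\mathsf{f}(\mathsf{x}\mid\mathsf{x}\in\mathsf{S})$, so $\E[w(\mathsf{a})]=\E[w(\mathsf{x})]$, where the right-hand expectation is taken under the conditional density. By Theorem~\ref{thm:same_marginal}, $\tilde{\mathsf{b}}\overset{d}{=}\mathsf{a}$, hence $\tilde{\mathsf{b}}$ is also distributed according to $\mathsf{f}(\mathsf{x}\mid\mathsf{x}\in\mathsf{S})$ and therefore $\E[w(\tilde{\mathsf{b}})]=\E[w(\mathsf{x})]$ as well. (Here I assume $w$ is integrable with respect to $\mathsf{f}(\cdot\mid\mathsf{S})$ so that all expectations are well defined; this is implicit in the statement, and by the equality in distribution it is enough to require it for $\mathsf{a}$.) Combining these via linearity of expectation then gives
\[
\E\Big[\tfrac12 w(\mathsf{a})+\tfrac12 w(\tilde{\mathsf{b}})\Big]
=\tfrac12\E[w(\mathsf{a})]+\tfrac12\E[w(\tilde{\mathsf{b}})]
=\tfrac12\E[w(\mathsf{x})]+\tfrac12\E[w(\mathsf{x})]
=\E[w(\mathsf{x})],
\]
which is the claim.

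There is essentially no obstacle here: all the substantive content is contained in Theorem~\ref{thm:same_marginal}, and once that equality in distribution is established the Corollary is a one-line consequence. The only point to flag is the technical one above — ensuring integrability of $w$ under the conditional density so that the expectation of the sum may be split — and the conceptual observation, which explains the practical payoff of the resampling scheme, that this argument extends verbatim to an $N$-sample average $\frac1N\sum_i w(\tilde{\mathsf{b}}^i)$ since each rewired sample $\tilde{\mathsf{b}}^i$ individually has the desired conditional marginal.
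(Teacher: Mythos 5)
Your proof is correct and is essentially the only natural proof: it applies Theorem~\ref{thm:same_marginal} to get $\E[w(\tilde{\mathsf{b}})]=\E[w(\mathsf{a})]=\E[w(\mathsf{x})]$ under the conditional density, then concludes by linearity of expectation, which is the same route the paper takes. Your side remark that the two summands need not be independent (and that unbiasedness only requires correct marginals) is accurate and a worthwhile clarification, but does not constitute a different approach.
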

\end{mdframed}

Intuitively, resampling improves the efficiency of the Monte Carlo estimator, since most of the samples do not violate the state constraints and hence contribute to the weighted sum with non-zero weight.
We can also theoretically prove that resampling improves the variance of the resulting Monte Carlo estimator.
In the following theorem, we show how resampling reduces the \textit{exponential} growth of the variance on the prediction horizon to a constant factor in the limit as the number of samples $N$ goes to infinity.

\begin{mdframed}[style=ThmFrame]
\begin{theorem}\label{thm:resample_variance}
    Let the horizon $K > 0$, consider $\mathcal{U} = [-1, 1]$, and define the avoid set and the dynamics such that $\vu \in [0, 1]^K$ is safe, and unsafe otherwise.
    Let the prior distribution $p(\vu)$ be uniform on $[-1, 1]^K$ and let $p(o=1\mid\vu)$ be the indicator function for $\vu \in [0, 1]$ such that the posterior distribution $p(\vu \mid o=1)$ is uniform on $[0, 1]^K$, and the proposal distribution $r(\vu) = 1/2^K$ is uniform on $\mathcal{U}$.
    Then, the variance of the Monte Carlo estimator of the optimal control law
    \begin{equation}
        \hat{\vv} = \frac{1}{N} \sum_{i=1}^N \frac{ p( \vu^i \mid o=1) }{ r(\vu^i) } \vu^i,
    \end{equation}
    grows exponentially in $K$, i.e.,
    \begin{equation}
        \Var[\hat{v}_k] = \frac{1}{N}\left( \frac{1}{3} 2^K - \frac{1}{4} \right), \qquad k = 1, \dots, K.
    \end{equation}
    Using resampling, the variance is upper-bounded by,
    \begin{equation}
        \Var[\hat{v}_{k,\mathrm{resample}}] = O\left( \left(\frac{1}{1 - 2^{-N}}\right)^{K} + (1-2^{-N})^{K} \right).
    \end{equation}
\end{theorem}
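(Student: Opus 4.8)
I would prove the two assertions by quite different means. The exponential lower envelope for the vanilla importance-sampling estimator is a direct second-moment computation that factorizes over coordinates, whereas the resampling bound follows from a law-of-total-variance decomposition keyed on the event that the particle ensemble never collapses.

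\textbf{Vanilla estimator.} Write $\hat v_k = \frac1N\sum_{i=1}^N Y^i$ with $Y^i \coloneqq \frac{p(\vu^i\mid o=1)}{r(\vu^i)}u^i_k = 2^K\,\ind{\vu^i\in[0,1]^K}\,u^i_k$; since the $\vu^i$ are i.i.d.\ from the uniform proposal, $\Var[\hat v_k] = \frac1N\big(\E[(Y^1)^2]-\E[Y^1]^2\big)$. Because the indicator factorizes as $\prod_j \ind{u^1_j\in[0,1]}$ and the coordinates of $\vu^1$ are independent and uniform on $[-1,1]$, I would compute $\E[\ind{u^1_j\in[0,1]}]=\tfrac12$, $\E[\ind{u^1_k\in[0,1]}u^1_k]=\tfrac14$, and $\E[\ind{u^1_k\in[0,1]}(u^1_k)^2]=\tfrac16$, so that $\E[Y^1]=2^K 2^{-(K-1)}\tfrac14=\tfrac12$ and, using $\ind{}^2=\ind{}$, $\E[(Y^1)^2]=2^{2K}2^{-(K-1)}\tfrac16=\tfrac{2^K}{3}$, giving $\Var[\hat v_k]=\frac1N\!\left(\frac{2^K}{3}-\frac14\right)$ for every $k$. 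This part is routine bookkeeping.

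\textbf{Resampling estimator.} Let $\mathcal E$ be the event that at every one of the $K$ rollout steps at least one particle satisfies the per-step state constraint. Conditioned on reaching step $j$ with $\mathcal E$ still possible, the trajectory of each slot is already safe (by the previous resampling), so its fresh constraint check is an independent $\mathrm{Bernoulli}(\tfrac12)$; hence each step collapses with probability exactly $2^{-N}$ and $\Pr[\mathcal E]=(1-2^{-N})^K$. I would then use
\[
\Var[\hat v_{k,\mathrm{resample}}]=\E\big[\Var(\hat v_{k,\mathrm{resample}}\mid \ind{\mathcal E})\big]+\Var\big[\E(\hat v_{k,\mathrm{resample}}\mid \ind{\mathcal E})\big].
\]
On $\mathcal E$, after each step's resampling every slot copies a safe particle (indeed, iterating \Cref{thm:same_marginal}, each resampled trajectory is marginally uniform on $[0,1]^K$), so $\hat v_{k,\mathrm{resample}}\in[0,1]$ and its conditional variance is $\le\tfrac14$. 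On $\mathcal E^c$ the estimator reverts to the penalty-based self-normalized estimator built from \eqref{eq:cbfpenalty}, a convex combination of the sampled controls and hence bounded, so both its conditional variance and the squared gap between the two conditional means are $O(1)$; multiplying the latter by $\Pr[\mathcal E]\Pr[\mathcal E^c]\le\Pr[\mathcal E^c]$ yields $O(1-(1-2^{-N})^K)$. Collecting terms,
\[
\Var[\hat v_{k,\mathrm{resample}}]\le \tfrac14(1-2^{-N})^K+O\big(1-(1-2^{-N})^K\big),
\]
and the elementary identity $\big(\tfrac1{1-2^{-N}}\big)^K-1=\big(\tfrac1{1-2^{-N}}\big)^K\big(1-(1-2^{-N})^K\big)\ge 1-(1-2^{-N})^K$ replaces the second term by $O\big((\tfrac1{1-2^{-N}})^K\big)$, giving the claimed bound.

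\textbf{Main obstacle.} The algebra is light; the delicate points are (i) certifying that the per-step collapse probability is exactly $2^{-N}$ — i.e.\ that the fresh constraint checks within a step remain mutually independent even when particle states have been duplicated by earlier resampling — and (ii) pinning down the estimator's behavior on the collapse event $\mathcal E^c$ precisely enough to guarantee it stays in a bounded range. I expect (ii) to be the real sticking point, since it requires committing to exactly what RBR does when no particle is safe. A secondary subtlety is that this bound is intentionally coarse (it discards the $1/N$ sample-averaging gain inside $\mathcal E$), but that is precisely what is needed to make the qualitative point visible: in contrast to the vanilla variance, the resampled variance stays bounded in $K$ in the limit $N\to\infty$.
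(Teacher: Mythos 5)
Your computation for the vanilla estimator is correct and matches the paper's: $Y^i = 2^K\ind{\vu^i\in[0,1]^K}u^i_k$ with second moment $2^{2K}\cdot 2^{-(K-1)}\cdot\tfrac16 = \tfrac{2^K}{3}$, giving $\Var[\hat v_k] = \tfrac1N(\tfrac13 2^K - \tfrac14)$.

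For the resampling bound, however, your law-of-total-variance decomposition keyed on the never-collapse event $\mathcal E$ is not what the paper does, and it raises a red flag that you should have noticed: the bound you actually derive, $\tfrac14(1-2^{-N})^K + O(1-(1-2^{-N})^K)$, is uniformly $O(1)$ in both $K$ and $N$, because the self-normalized RBR estimator is a convex combination of controls in $[-1,1]$ and hence trivially bounded. You then inflate this to the claimed $O((1-2^{-N})^{-K} + (1-2^{-N})^K)$ by the inequality $1-p^K \le p^{-K}$, which is correct but vacuous — it proves the stated big-$O$ only because the stated big-$O$ is weaker than $O(1)$. This cannot be the content of the theorem: the whole point of the $(1-2^{-N})^{-K}$ term is to show how resampling trades the base-$2$ exponential in $K$ for a base-$(1-2^{-N})^{-1}$ exponential that collapses to $1$ as $N\to\infty$, and that factor must arise from an actual calculation, not from boundedness followed by loosening. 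Concretely, the factor appears once one accounts for the fact that the raw resampled average $\tfrac1N\sum_i\tilde u^i_k$ is not an unbiased estimator of the posterior mean $\tfrac12$ unconditionally (its mean is $\tfrac12\Pr[\mathcal E]$), and the correction — e.g.\ multiplying by the SMC normalizing-constant estimate $\hat Z$ or by $\Pr[\mathcal E]^{-1}$, or equivalently tracking the per-step survival fractions as a running weight — is what injects $(1-2^{-N})^{-K}$ into the second moment. Your own ``main obstacle (ii)'' (behavior on the collapse event $\mathcal E^c$) is indeed under-specified in your write-up, but the more fundamental gap is that you have analyzed the wrong, uncorrected quantity, so the exponential-in-$K$ structure the theorem is trying to expose never appears in your argument. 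The per-step collapse probability of exactly $2^{-N}$ with independence across steps is correct in this toy setting and matches the paper; the coordinate-wise factorization in the vanilla case is likewise the paper's route.
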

\end{mdframed}
The proof of \Cref{thm:resample_variance} is given in \Cref{sec:proof:resample_variance}. 
As shown in \Cref{thm:resample_variance}, although the variance is still exponential in $K$ using resampling, the base of the exponential decreases to $1$ exponentially in the number of samples $N$. In other words, in the limit as $N \to \infty$, the variance of the estimator using the proposed RBR is bounded by a constant factor.
This novel approach reduces the variance of the Monte Carlo estimator of the optimal control law in a way that mirrors the relationship between Sequential Importance Sampling (i.e., particle filters without resampling), where the variance increases exponentially with the horizon length, and Sequential Monte Carlo (i.e., particle filters with resampling), where the (asymptotic) variance only increases linearly \cite{doucet2009tutorial}.

Another method to theoretically quantify the improvement in the variance of the estimator is via the \textit{effective sample size} ($\ESS$) \cite{doucet2009tutorial,elvira2022rethinking}.
$\ESS$ is defined as the ratio between the variance of the estimator with $N$ samples from the target and the variance of the SNIS estimator \cite{elvira2022rethinking}.
It can be interpreted as the number of samples simulated from the target pdf that would provide an estimator with variance equal to the performance of the $N$-sample SNIS estimator.
However, since the $\ESS$ is computationally intractable, the \textit{approximation} (made formal in \cite{elvira2022rethinking})
\begin{equation} \label{eq:ess_hat}
    \widehat{ESS} \coloneqq \frac{1}{\sum_{n=1}^N w_n^2},
\end{equation}
is more often used in practice.
The following theorem shows that performing RBR results in either the same or higher $\widehat{ESS}$, given certain assumptions (proof is given in \Cref{sec:proof:ess}).
\begin{mdframed}[style=ThmFrame]
\begin{theorem}\label{thm:ess}
    Let $\vw = [w_1, \dots, w_m, 0, \dots, 0]$ denote the unnormalized weight vector without resampling,
    where the last $N-m$ entries are zero due to violating the safety constraints.
    Let $\vw' = \vw + \vc = [w_1, \dots, w_m, c_{m+1}, \dots, c_N ]$ denote the unnormalized weight vector resulting from safe resampling, where $\vc = [0, \dots, 0, c_{m+1}, \dots, c_N]$.
    Let $\tilde{\vw} = \vw / \norm{\vw}_1$ and $\tilde{\vw}' = \vw' / \norm{\vw'}_1$ denote the normalized weights.
    Suppose that the weights of the resampled trajectories $\vc$ are not ``drastically larger'' than the weights of the original trajectories $\vw$, i.e.,
    \begin{equation} \label{eq: ess:assumption}
        \norm{ \vc }_1 \leq 2 \frac{N}{N-1} \frac{ \norm{\vw}_2^2 }{ \norm{\vw}_1 }.
    \end{equation}
    Then, the $\widehat{ESS}$ using RBR is no smaller than the $\widehat{ESS}$ without resampling, and is strictly greater if the inequality in \eqref{eq: ess:assumption} is strict. 
    In other words,
    \begin{equation} \label{eq: ess:wts}
        \frac{1}{\norm{ \tilde{\vw} }^2_2} \leq \frac{1}{\norm{ \tilde{\vw}' }^2_2}.
    \end{equation}
\end{theorem}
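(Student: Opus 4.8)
The plan is to reduce the $\widehat{ESS}$ inequality \eqref{eq: ess:wts} to a purely algebraic inequality among the $\ell_1$ and $\ell_2$ norms of $\vw$ and $\vc$, and then close it using Cauchy--Schwarz together with the hypothesis \eqref{eq: ess:assumption}. First I would note that \eqref{eq: ess:wts} is equivalent to $\norm{\tilde{\vw}'}_2^2 \le \norm{\tilde{\vw}}_2^2$, and rewrite both sides in unnormalized form: since $\tilde{\vw} = \vw/\norm{\vw}_1$ we have $\norm{\tilde{\vw}}_2^2 = \norm{\vw}_2^2/\norm{\vw}_1^2$, and likewise $\norm{\tilde{\vw}'}_2^2 = \norm{\vw'}_2^2/\norm{\vw'}_1^2$. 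Because $\vw$ and $\vc$ have nonnegative entries and disjoint supports (the first $m$ entries of $\vc$ are zero), one gets $\norm{\vw'}_1 = \norm{\vw}_1 + \norm{\vc}_1$ and, the cross term vanishing, $\norm{\vw'}_2^2 = \norm{\vw}_2^2 + \norm{\vc}_2^2$. Clearing the (strictly positive) denominators and cancelling the common term $\norm{\vw}_1^2\norm{\vw}_2^2$, the goal becomes
\begin{equation}
    \norm{\vw}_1^2\,\norm{\vc}_2^2 \;\le\; 2\,\norm{\vw}_2^2\,\norm{\vw}_1\,\norm{\vc}_1 \;+\; \norm{\vw}_2^2\,\norm{\vc}_1^2 .
\end{equation}

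Next I would throw away all structure of $\vc$ beyond nonnegativity. Using $\norm{\vc}_2^2 \le \norm{\vc}_1^2$ on the left-hand side, it suffices to prove $\norm{\vw}_1^2\norm{\vc}_1^2 \le 2\norm{\vw}_2^2\norm{\vw}_1\norm{\vc}_1 + \norm{\vw}_2^2\norm{\vc}_1^2$. The degenerate case $\vc = \mathbf{0}$ (no resampling occurred) gives equality in \eqref{eq: ess:wts}, so assume $\vc \neq \mathbf{0}$ and divide through by $\norm{\vc}_1 > 0$; the requirement becomes $\norm{\vc}_1\bigl(\norm{\vw}_1^2 - \norm{\vw}_2^2\bigr) \le 2\norm{\vw}_2^2\norm{\vw}_1$. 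Since $\norm{\vw}_1^2 \ge \norm{\vw}_2^2$ always (with equality only in the further degenerate case $m=1$, where the claim is then immediate as $\vw \neq \mathbf{0}$), this is an explicit upper bound on $\norm{\vc}_1$, and by the hypothesis \eqref{eq: ess:assumption} it is enough to verify
\[
    2\frac{N}{N-1}\frac{\norm{\vw}_2^2}{\norm{\vw}_1}\bigl(\norm{\vw}_1^2 - \norm{\vw}_2^2\bigr) \;\le\; 2\,\norm{\vw}_2^2\,\norm{\vw}_1 .
\]
Cancelling $2\norm{\vw}_2^2/\norm{\vw}_1$ reduces this to $\tfrac{N}{N-1}\bigl(\norm{\vw}_1^2 - \norm{\vw}_2^2\bigr) \le \norm{\vw}_1^2$, i.e.\ $\norm{\vw}_1^2 \le N\norm{\vw}_2^2$, which is exactly Cauchy--Schwarz applied to the $N$-dimensional vector $\vw$ (padded with its $N-m$ zero entries): $\norm{\vw}_1 = \langle \vw, \mathbf{1}\rangle \le \norm{\vw}_2\sqrt{N}$. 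For the strictness claim, I would trace equality cases through the chain: when $\vc \neq \mathbf{0}$ the only links that can be tight are \eqref{eq: ess:assumption} itself, $\norm{\vc}_2^2\le\norm{\vc}_1^2$, and Cauchy--Schwarz; hence a strict \eqref{eq: ess:assumption} forces a strict \eqref{eq: ess:wts}.

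There is no deep obstacle here; the argument is a short sequence of reductions. The one point that genuinely needs care is the dimension in the final Cauchy--Schwarz step: it must be taken over all $N$ coordinates (including the zero ones), because $N$ is precisely the constant appearing in $\tfrac{N}{N-1}$ in \eqref{eq: ess:assumption}; applying the inequality only over the $m$ nonzero coordinates would prove a different (in fact stronger-hypothesis) statement. The only other thing to watch is the bookkeeping of the degenerate cases $\vc = \mathbf{0}$ and $m = 1$, so that no division by zero slips in.
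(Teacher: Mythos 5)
Your proof is correct, and the structure of the hypothesis \eqref{eq: ess:assumption} makes this essentially the only natural route: the cross-multiplied form $\norm{\vw}_1^2\norm{\vc}_2^2 \le 2\norm{\vw}_2^2\norm{\vw}_1\norm{\vc}_1 + \norm{\vw}_2^2\norm{\vc}_1^2$ is forced by the disjoint-support decomposition of $\vw' = \vw + \vc$, eliminating $\norm{\vc}_2$ via $\norm{\vc}_2 \le \norm{\vc}_1$ is the obvious move, and the factor $\tfrac{N}{N-1}$ in the hypothesis is precisely calibrated so that the last link is Cauchy--Schwarz in the form $\norm{\vw}_1^2 \le N\norm{\vw}_2^2$. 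This is the same chain the paper follows. Your care about taking Cauchy--Schwarz over all $N$ coordinates (not just the $m$ nonzero ones) is the right sanity check, and your handling of the degenerate cases $\vc = \mathbf{0}$ and $m = 1$ is correct and slightly more explicit than what is needed to match the stated claim.

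One small clarification worth making explicit for the strictness claim: when you say ``a strict \eqref{eq: ess:assumption} forces a strict \eqref{eq: ess:wts},'' you should record that this hinges on the coefficient of $\norm{\vc}_1$ in the reduced inequality $\norm{\vc}_1(\norm{\vw}_1^2 - \norm{\vw}_2^2) \le 2\norm{\vw}_2^2\norm{\vw}_1$ being nonnegative. When $m \ge 2$ it is strictly positive, so strict $\norm{\vc}_1 < 2\tfrac{N}{N-1}\norm{\vw}_2^2/\norm{\vw}_1$ propagates to a strict final inequality; when $m = 1$ the coefficient vanishes and the reduced inequality reads $0 < 2\norm{\vw}_2^2\norm{\vw}_1$, which is already strict independently of the hypothesis. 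You note both facts, but the two cases are handled by different mechanisms and it is worth stating them in that order so the reader sees why strictness is unconditional in the $m=1$ case. Also, as you observe, $\vc = \mathbf{0}$ gives equality in \eqref{eq: ess:wts} even though \eqref{eq: ess:assumption} is then strictly satisfied, so the theorem's strictness statement is implicitly read under $\vc \neq \mathbf{0}$ (equivalently $m < N$); flagging that is good practice and not a flaw in your argument.
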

\end{mdframed}

\subsection{Summary of NS-VIMPC}
We now summarize the NS-VIMPC algorithm, shown in \Cref{fig:control_architecture}. 

Offline, we first approximate a DCBF by using the DPNCBF algorithm to learn the policy value function for a user-specified policy (\Cref{sec:ConstructingCBF}).
In our experiments, we chose this to be Shield MPPI (S-MPPI) \cite{Shield-MPPI}.
Online, we use the learned DPNCBF to enforce the DPCBF constraint \eqref{eqn:descent_condition2} to try to enforce safety beyond the prediction horizon.
During sampling, we use RBR to efficiently sample control sequences $\{ \tilde{\vu}^i \}$ from the raw samples $\{ \vu^i \}$ to satisfy the DPNCBF constraint, thus improving the sample efficiency of the Monte Carlo estimate of the optimal control.
The sampled control sequences $\{ \tilde{\vu}^i \}$ are then used to compute the estimate of the optimal control $\hat{\vv}$ using \eqref{eq:vi:v_opt}.
As in MPC fashion, we only execute the first control $\hat{\vv}_0$, and $\hat{\vv}$ is then used as the parameter vector of the sampling distribution $q_\vv$ for the next iteration.

\section{Simulations}
We first performed simulation experiments to better understand the performance of the proposed Neural Shield VIMPC (NS-VIMPC) controller.  
Although many sampling-based MPC controllers fall under the VIMPC family with different choices of the prior $p_0$ and $r$, we choose to instantiate the MPPI algorithm (see \Cref{app:VI Gauss} for details), and call the resulting controller Neural Shield-MPPI (NS-MPPI).

\begin{figure}[]
    \centering
    \includegraphics[width=0.95\linewidth]{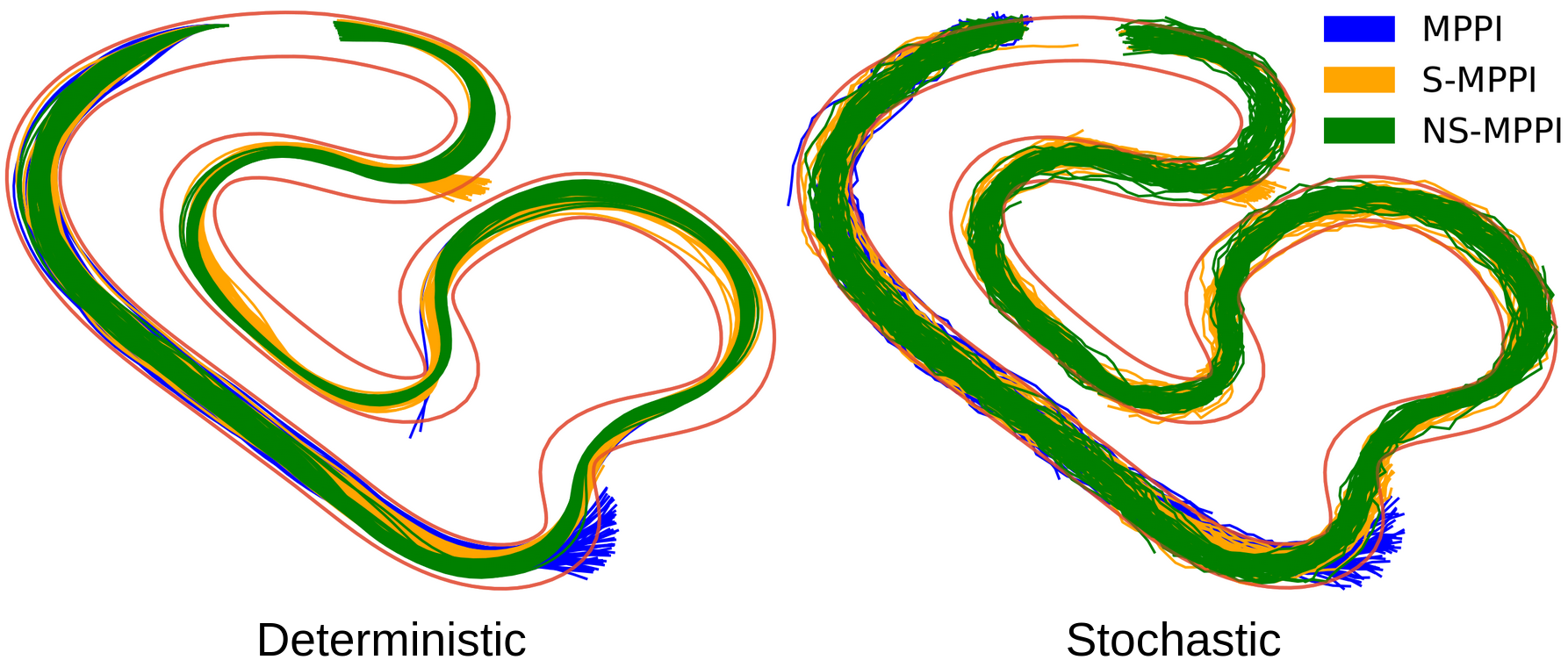}
    \caption{\textbf{\textsf{AutoRally} Trajectories.} We visualize the trajectories of the three MPPI baselines under a challenging target velocity of $\textrm{15~ms}^{-1}$.
    Both MPPI and S-MPPI veer off course and crash while NS-MPPI stays within the track even under Gaussian disturbances.
    }
    \label{fig:deter_vs_stochastic}
\end{figure}
\begin{figure}[]
    \centering
    \includegraphics[width=1.0\linewidth]{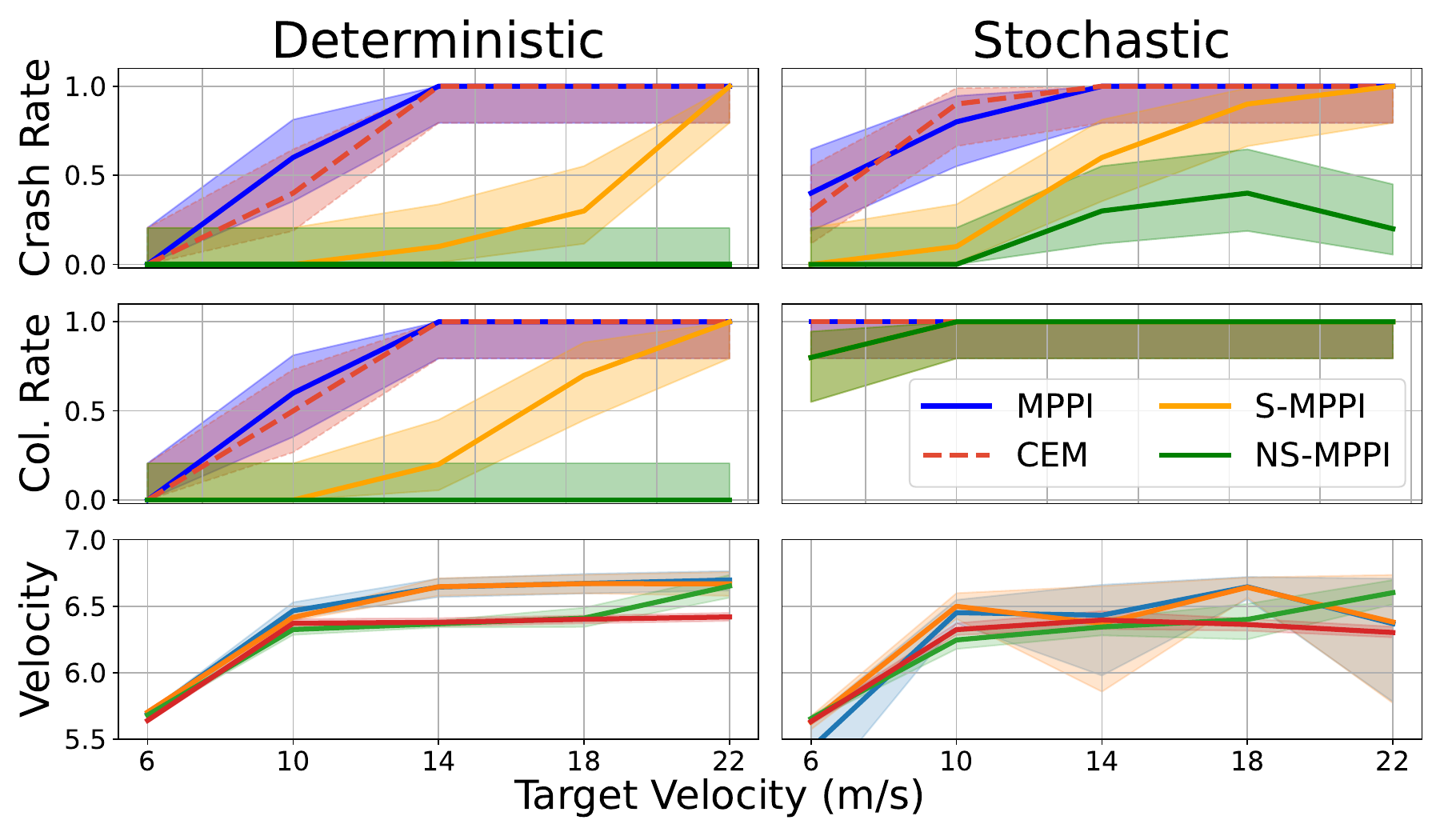}
    \caption{
    \textbf{Varying target velocities on \textsf{AutoRally}.}
    Our NS-MPPI achieves the lowest crash and collision (Col.) rates under both deterministic and stochastic dynamics.
    While the collision rate is close to $1$ for every method in the stochastic environment, NS-MPPI achieves a crash rate of near $0$.
    }
    \label{fig:deter_vs_stoch_3plot_autorally}
\end{figure}

\begin{figure}
    \centering
    \includegraphics[width=\linewidth]{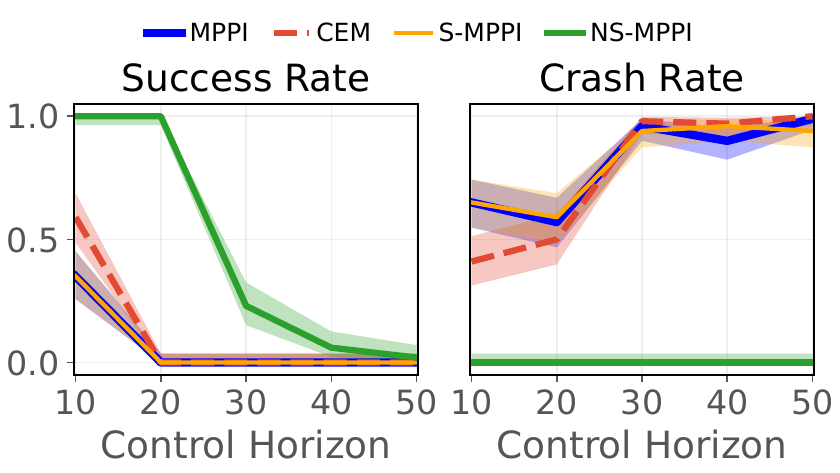}
    \caption{\textbf{Varying control horizon on \textsf{Drone}.}
    (a) Only NS-MPPI has a crash rate of zero with a control horizon of $10$, demonstrating the benefit of enforcing the DCBF constraint for maintaining safety beyond the prediction horizon.}
    \label{fig:drone_control_horizon_ablation}
\end{figure}

\beforetextbf{}
\noindent\textbf{Baseline methods.} We compared NS-MPPI against the following sampling-based MPC methods.
\begin{itemize}
\item Baseline MPPI (MPPI) ~\cite{InfoMPPI}, which forward simulates a set of randomly sampled trajectories for optimal control.

\item Shield MPPI (S-MPPI) ~\cite{Shield-MPPI}, which extends MPPI by taking $h$ in \eqref{eqn:AvoidSet} to be a DCBF and by adding the DCBF constraint violation into the cost as in \eqref{eq:cbfpenalty}.\footnote{We remove the local repair step from S-MPPI as this requires known gradients from the dynamics and would make the method gradient-based}

\item CEM \cite{pmlr-v120-bharadhwaj20a}, which samples trajectories similar to the baseline MPPI but with the weight $\omega = 1$ for only the $k$-lowest cost trajectories and $0$ otherwise. This corresponds to an average of the $k$-lowest cost trajectories.
\end{itemize}

In all simulations, we use S-MPPI as the control policy $\pi$ to learn the DPNCBF.
We provide further details on the simulation experiments in \Cref{app:sim_details}.

\subsection{Simulations on \textsf{AutoRally}}

We first compare all methods on the \textsf{AutoRally}~\cite{goldfain2019autorally} testbed, a $1/5$ scale autonomous racing car.
The goal for this task is to track a given fixed velocity without exiting the track.
The vehicle \textit{collides} when it contacts the track boundary and \textit{crashes} when it fully exceeds the track boundary.

We tested our algorithm under both deterministic dynamics and stochastic dynamics with a Gaussian state disturbance added at each timestep.
We visualize the resulting trajectories in \Cref{fig:deter_vs_stochastic} with a target velocity of $\textrm{15~ms}^{-1}$.
This is a very high target velocity, as previous works considered at most velocities of 
$\textrm{8~ms}^{-1}$ \cite{Shield-MPPI} or $\textrm{9~ms}^{-1}$ \cite{RMPPI}.
Under this challenging speed, both MPPI and S-MPPI frequently veer off course. In contrast, the proposed NS-MPPI successfully retains the vehicle within the confines of the track, thereby ensuring safety.

Next, we vary the target velocities, showing the resulting crash rate, collision rate, and velocity in \Cref{fig:deter_vs_stoch_3plot_autorally} over 20 trials.
With higher velocities, the vehicle has less time to turn, increasing the likelihood of leaving the track and colliding or crashing. 
We see that NS-MPPI consistently outperforms all other methods in both settings without having a significantly lower average velocity.

\subsection{Simulations on \textsf{Drone}}

We next tested our algorithm on \textsf{Drone}, a simulated planar quadrotor that incorporates ground effects arising from the intricate interaction between the blade airflow and the ground surface.
The goal is for the drone to navigate as fast as possible through a narrow corridor close to the ground. 
We vary the control horizon for $500$ sampled trajectories and plot the results in \Cref{fig:drone_control_horizon_ablation}. 
Only NS-MPPI has a crash rate of zero over all control horizons, while all other controllers have high crash rate in this challenging task.

To understand why this is the case, we visualize the trajectories for NS-MPPI and S-MPPI with a control horizon of $10$ in \Cref{fig:drone_viz}.
Due to the DCBF constraint, NS-MPPI starts descending to avoid the obstacles even before any of the original collision constraints are violated.
On the other hand, S-MPPI is unaware of the obstacles beyond the prediction horizon and keeps accelerating rightward until collision.
This suggests that enforcing the DCBF constraint even with an approximate $V^{h,\pi,\gamma}_\theta$ is beneficial for safety with short control horizons.

\begin{figure}
    \centering
    \includegraphics[width=.8\linewidth]{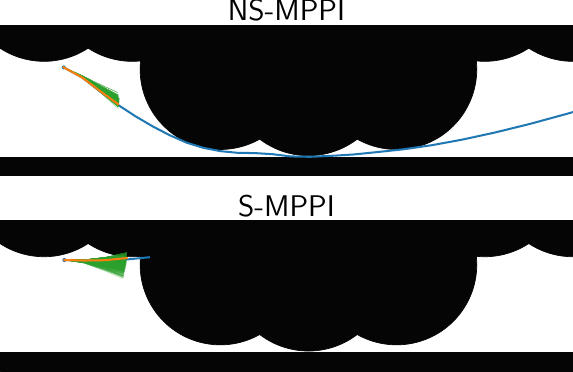}
    \caption{\textbf{Differences between NS-MPPI and S-MPPI.}
    We compare the sampling trajectory $\vu^i$ (\textcolor{tabGreen}{green}) and estimated optimal trajectory $\hat{\vv}$ (\textcolor{tabOrange}{orange}) under a control horizon of $10$.
    NS-MPPI descends early enough to avoid collisions due to the DCBF constraints despite none of the sampled trajectories violating any constraints due to the short control horizon.
    }
    \label{fig:drone_viz}
\end{figure}

\subsection{Deeper Investigation Into RBR}

We next perform various case studies to better understand the sample efficiency benefits of the proposed RBR method.

\begin{figure}
    \centering
    \includegraphics[width=0.7\linewidth]{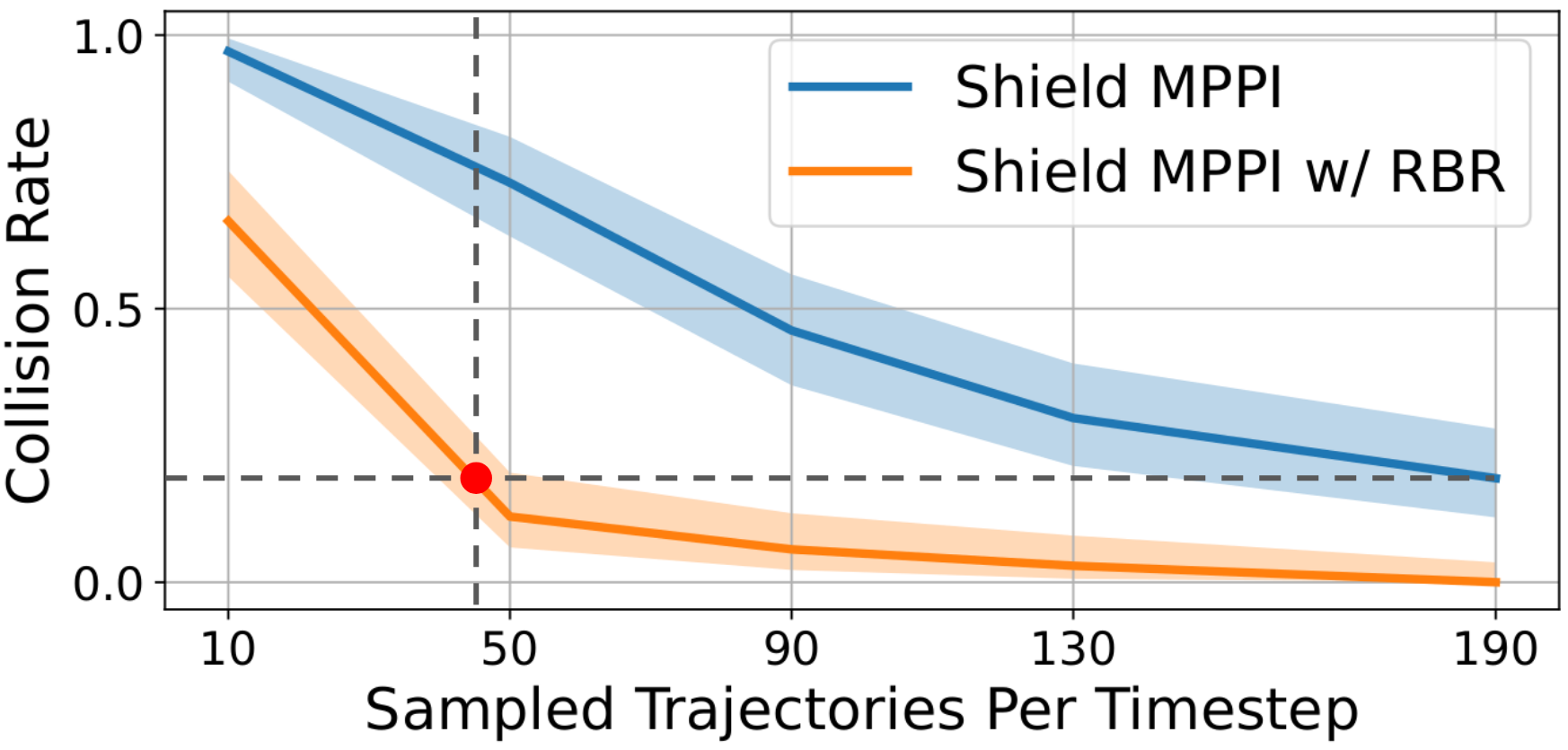}
    \setlength{\belowcaptionskip}{-8pt}
    \caption{\textbf{RBR needs 5X less samples.}
    On \textsf{AutoRally}, applying RBR to S-MPPI significantly reduces the number of sampled trajectories needed to achieve the same level of safety ($190$ to $40$, horizontal line).
    From another perspective, RBR reduces the collision rate by $74\%$ at $50$ sampled trajectories.
    }
    \label{fig:smppi_vs_smppi_efficient}
\end{figure}

\beforetextbfok{}

\noindent\textbf{RBR improves sample efficiency by 5X.}
To isolate the effects of RBR without the other improvements,
we considered a new method that extends S-MPPI with RBR (S-MPPI w/ RBR),
and compared it against the original S-MPPI across varying numbers of sampled trajectories on \textsf{AutoRally} over $100$ trials in \Cref{fig:smppi_vs_smppi_efficient}.
S-MPPI with RBR achieves the same collision rates as S-MPPI without RBR while using $5$ times \textit{fewer} trajectories.
This matches our expectations, both from the intuition that RBR results in a more closely aligned proposal distribution (e.g., see \Cref{fig:EfficientSamplingVsNormalSampling1}), and from the theoretical results in \Cref{sec:Efficient_Trajectory_Sampling} that show that RBR improves the quality of the estimator.

\begin{figure}
    \centering
    \includegraphics[width=0.9\linewidth,trim={0 5mm 0 0},clip]{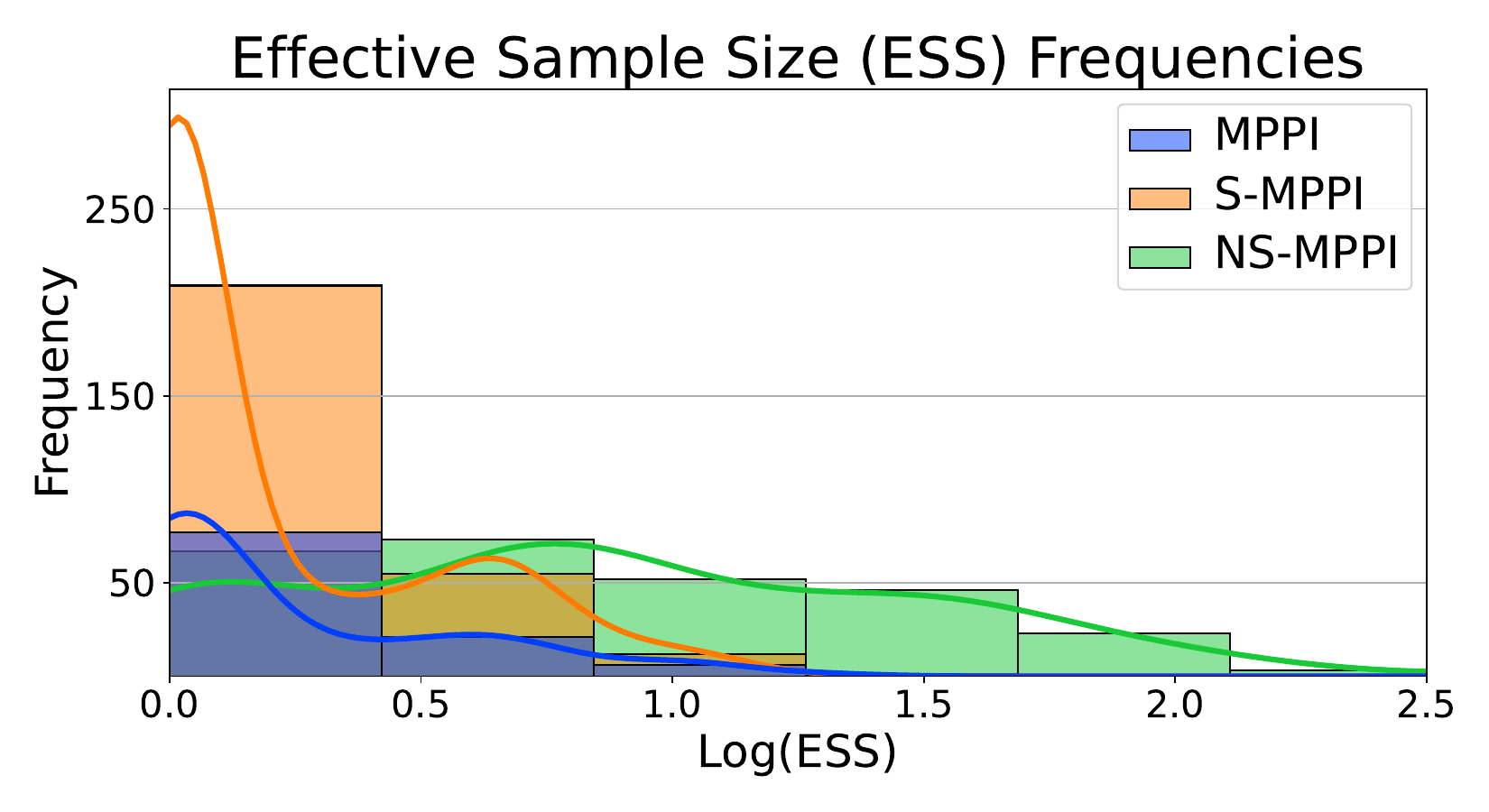}
    \vspace{-.6ex}
    \caption{\textbf{NS-MPPI has larger $\widehat{\BoldESS}$.} The proposed NS-MPPI achieves larger effective sample sizes, verifying that RBR enables more efficient use of samples and computations.}
    \label{fig:ess_text}
    \vspace{-1ex}
\end{figure}

\beforetextbfmore{}

\noindent\textbf{RBR achieves larger $\widehat{\BoldESS}$.} RBR theoretically improves the $\widehat{ESS}$, as mentioned in \Cref{sec:Efficient_Trajectory_Sampling}. 
We verified this claim empirically on \textsf{AutoRally} and plot the results in \Cref{fig:ess_text}.
While MPPI and S-MPPI exhibit values of $\widehat{ESS}$ concentrated near $1$,
the $\widehat{ESS}$ values for NS-MPPI are more uniformly distributed, indicating a more extensive utilization of the sampled trajectories and thus a better use of the available computational resources.

\begin{figure}[]
    \centering
    \begin{subfigure}[b]{0.47\linewidth}
        \centering
        \includegraphics[width=\textwidth]{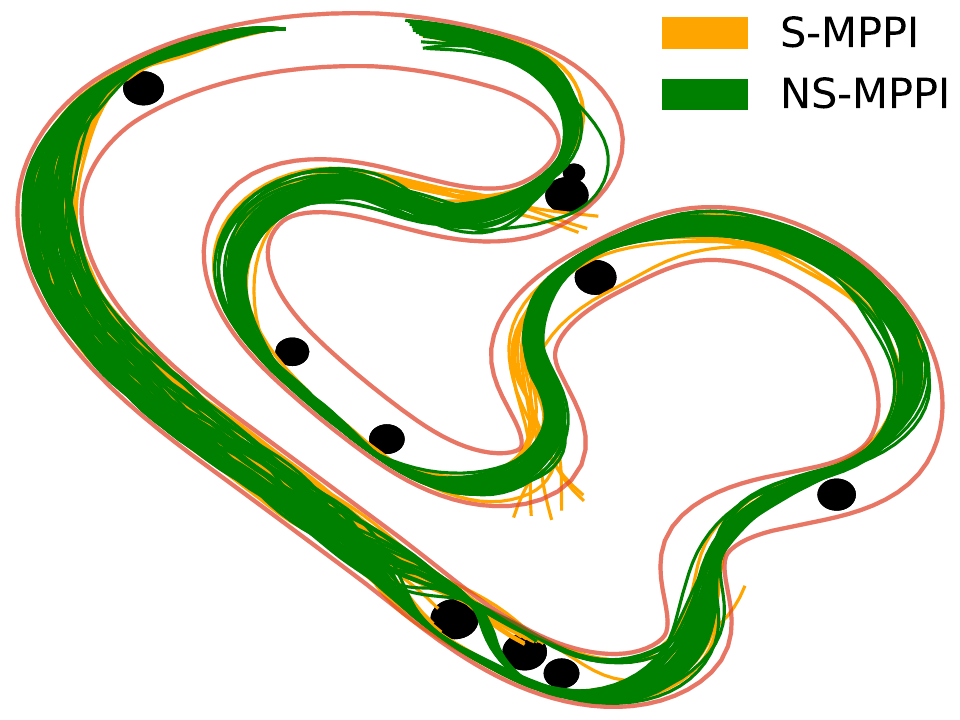}
        \caption{}
        \label{fig:ShieldMPPIvsNSMPPI}
    \end{subfigure}\hspace*{\fill}
    \begin{subfigure}[b]{0.51\linewidth}
        \centering
        \includegraphics[width=\textwidth]{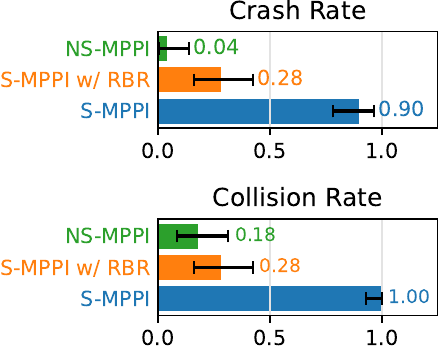}
        \caption{}
        \label{fig:crash_and_collision_rate_cluttered_env}
    \end{subfigure}
    
    \caption{\textbf{AutoRally with obstacles.} (a) While NS-MPPI consistently clears the entire track, S-MPPI often crashes into obstacles and walls. (b) NS-MPPI has $85\%$ fewer crashes compared to S-MPPI. Adding RBR alone to S-MPPI fixes $62\%$ of the crashes made by S-MPPI.}
    \label{fig:cluttered_autorally}
    \vspace{-1em}
\end{figure}

\beforetextbf{}

\noindent\textbf{RBR is especially beneficial in harder environments.}
To further stress-test RBR, we ran 50 trials with 100 sampled trajectories on a harder variant of \textsf{AutoRally} with obstacles of various sizes.
Note that the presence of obstacles close to the center of the track violates the assumption in \Cref{rem:convex_thing} regarding the safety of the mean $\vv$.
Comparing the trajectories from S-MPPI and NS-MPPI in \Cref{fig:ShieldMPPIvsNSMPPI}, S-MPPI is unable to avoid crashing into walls and obstacles since it relies on a heuristic DCBF.
On the other hand, NS-MPPI avoids crashes in almost all runs.
We also compare against S-MPPI w/ RBR and plot the crash and collision rates in \Cref{fig:crash_and_collision_rate_cluttered_env}, where we see that RBR constitutes a large fraction of the performance improvements, being responsible for $72\%$ of the crash rate reduction between S-MPPI and NS-MPPI.
Nevertheless, RBR alone is not sufficient, and the use of an approximate DPNCBF is still required to bring the crash rate to near $0$.

To demonstrate the enhancement in sampling efficiency achieved by the proposed sampling method, we provide visual representations of trajectory sampling distributions obtained from simulations in environments with obstacles in ~\Cref{fig:EfficientSamplingVsNormalSampling1}.
As demonstrated by ~\Cref{fig:EfficientSamplingVsNormalSampling1} (b), the RBR concentrates trajectory samples within a feasible narrow passage that satisfies the DCBF safety criteria. 
Conversely, the standard sampling approach, exemplified by \Cref{fig:EfficientSamplingVsNormalSampling1} (a),  produces a sparse distribution, misallocating samples to unsafe zones and thereby failing to sufficiently explore safe areas. 
This inadequacy leads to a collision with an obstacle.

\begin{table}[]
\caption{Performance and Timing Comparison}
\centering
\begin{tabular}{lSSS}
\toprule
 \textbf{Controller} & \textbf{Crash Rate}& \textbf{Collision Rate} & \textbf{Control Rate (\si{\hertz})}\\
\midrule
NS-MPPI      & 0.04 & 0.06 & 73.64 \\
S-MPPI      & 0.46 & 0.70 & 107.31 \\
MPPI      & 0.98 & 1.00 & 130.05 \\
CEM      & 1.00 & 1.00 & 88.17 \\
\bottomrule
\end{tabular}
\label{tab:CPU_table}
\end{table}

\beforetextbf{}

\noindent\textbf{Enhanced sampling efficiency enables safe real-time planning on a CPU.}
We compare the computation times of different VI-MPC controllers on \textsf{AutoRally} with a $\qty{12}{\meter\per\second}$ target speed, horizon of $K=15$ and small sample size of $N=30$ trajectories per optimization iteration (\Cref{tab:CPU_table}).
NS-MPPI achieves the lowest crash and collision rates, albeit with a slightly lower control rate of $>\qty{70}{\hertz}$ that suffices for most robotic tasks.
S-MPPI suffers from over $11$ times larger crash and collision rates, while both MPPI and CEM controllers exhibit crash and collision rates close to $1$.

\begin{figure}[]
    \centering
    \includegraphics[width=\linewidth]{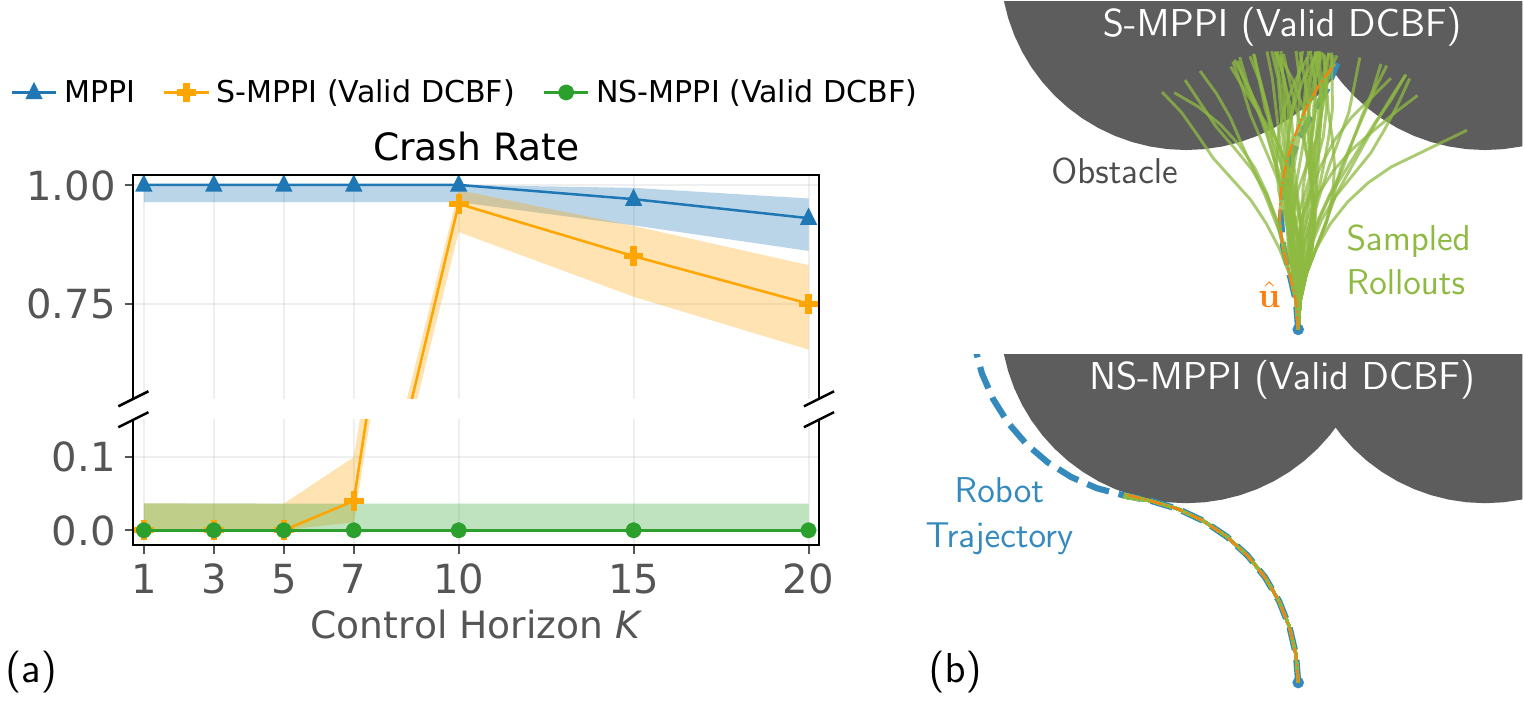}
    \caption{\textbf{RBR improves performance even with \textit{valid} DCBF.} (a)
    As the estimator variance grows exponentially with K, S-MPPI crashes at larger K \textit{despite} using a valid DCBF.
    Using RBR (NS-MPPI) mitigates this, maintaining safety across all horizons.
    (b) We visualize this exponential growth in variance by comparing the rollouts of the two methods at $K=10$. S-MPPI is unable to sample a control sequence that turns left for all $K=10$ steps with only $N=50$ samples.
    }
    \label{fig:rbr_valid_dcbf}
\end{figure}

\beforetextbf{}

\noindent\textbf{RBR improves performance even with a \textit{valid} DCBF.}
We further investigate whether RBR is beneficial when we have a \textit{valid} DCBF by performing experiments on \textsf{Dubins}, a Dubins car that travels with fixed velocity, using only $N=50$ samples (\Cref{fig:rbr_valid_dcbf}).
Here, NS-MPPI (Valid DCBF) uses RBR while S-MPPI (Valid DCBF) does not. Both methods use a valid DCBF.
At $K=1$, RBR has no effect since no resampling occurs, but both methods avoid crashes.
However, since the estimator variance grows exponentially in the horizon length $K$ (\Cref{sec:Efficient_Trajectory_Sampling}), S-MPPI crashes at $K\geq7$, where $N=50$ samples is not enough to sample a trajectory that satisfies the DCBF constraints at all timesteps. Using RBR mitigates this exponential growth, and NS-MPPI remains safe across all time horizons. See \Cref{app:ValidDCBF} for more details and additional plots.

\subsection{Comparing DPNCBF against grid-based reachability.}
Finally, we perform case studies to investigate the learned DPNCBF. In particular, we question whether the value function obtained via continuous-time techniques can be used in place of the proposed DPNCBF.

\beforetextbfok{}

\begin{figure}[]
    \centering
    \includegraphics[width=.7\linewidth]{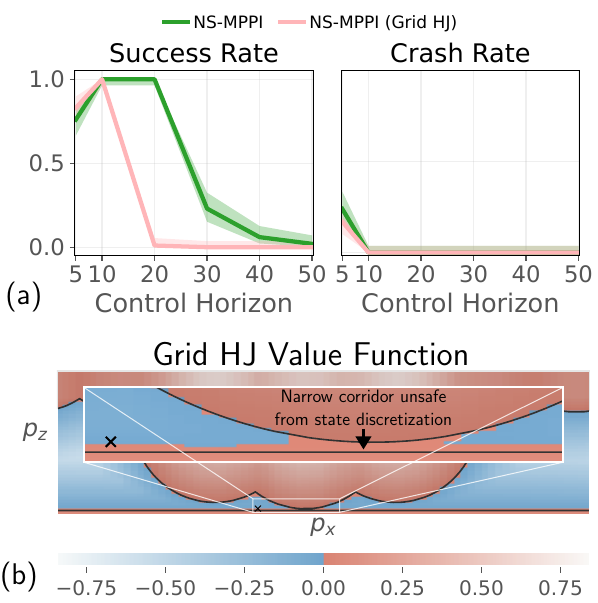}

    \caption{\textbf{Value functions from grid-based solvers suffer in multi-fidelity problems.}
    (a) Using a value function from a grid-based HJ reachability solver results in much lower success rates at longer control horizons compared to the learned DPNCBF.
    (b) Due to state discretization, the narrow corridor is marked unsafe, causing the drone to be overly conservative and not cross the corridor to remain safe. 
    }
    \label{fig:drone_grid}
    \vspace{-0.7em}
\end{figure}

\noindent\textbf{Grid-based reachability suffers with multi-fidelity problems.}
We compare against a grid-based reachability method solved using methods from \textit{continuous-time} HJ reachability \cite{mitchell2008flexible} (denoted NS-MPPI (Grid)) on \textsf{Drone} in \Cref{fig:drone_grid}.
The height of the narrow corridor is much smaller than the length of the overall space. Consequently, given the $6$-dimensional state-space of \textsf{Drone}, the grid size used by the grid-based solver is too coarse to capture the safe region in the narrow corridor accurately and marks the narrow corridor as unsafe, blocking traversal.
This causes NS-MPPI (Grid HJ) to have a much lower success rate than NS-MPPI at longer control horizons.

\beforetextbfok{}

\begin{figure*}
    \centering
    \includegraphics[width=\linewidth]{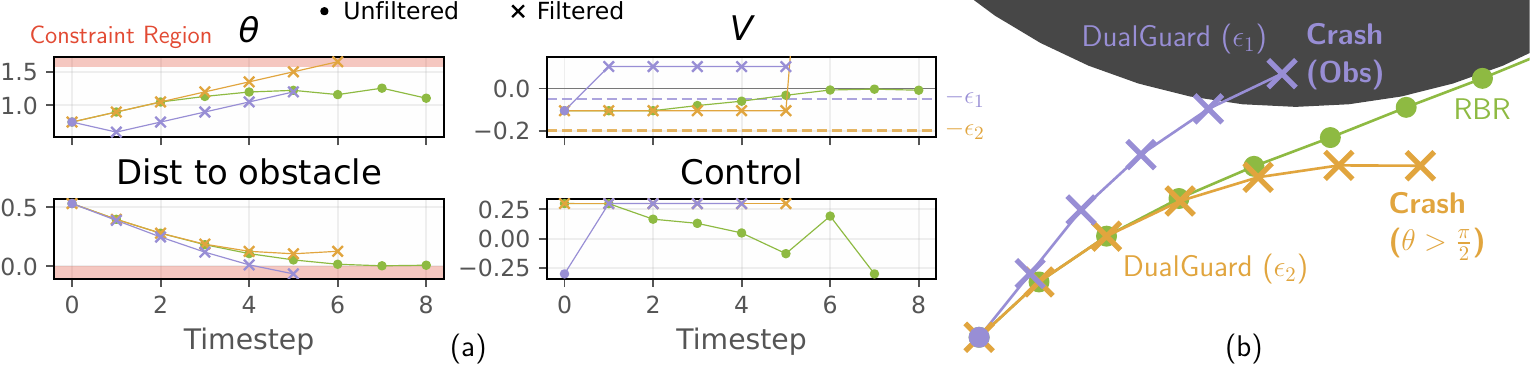}
    \caption{\textbf{Optimal safe controls from continuous-time reachability can cause crashes in discrete-time.}
    The optimal safe control from continuous-time is different from the optimal control in discrete-time.
    Consequently, using the value function from continuous-time HJ reachability as a safety filter during the rollout process (i.e., concurrent work DualGuard-MPPI \cite{borquez2025dualguard}) leads to crashes, even if a margin $\epsilon$ is added to try to account for this gap by being more conservative.
    }
    \label{fig:dubins_dualguard}
\end{figure*}

\noindent\textbf{Optimal safe controls from continuous-time reachability can cause crashes in discrete-time.}
Finally, we compare against a concurrent work DualGuard-MPPI \cite{borquez2025dualguard} that uses the value function solved using continuous-time HJ reachability as a safety filter during the rollout process of MPPI on \textsf{Dubins} (\Cref{fig:dubins_dualguard}), where we add an additional box-constraint $\abs{\theta} \leq \pi/2$ on the heading.
For this experiment, to rule out crashed caused by errors in the learned DPNCBF, we wrote a \textit{discrete-time} grid-based solver and used the corresponding value function as the DCBF.
Different from our method, DualGuard uses the value function as a safety filter during the rollout, replacing the original control with the \textit{continuous-time} optimal safe control when the \textit{current} value function is unsafe $V(x) > -\epsilon$, with a static margin $\epsilon$ added to try to account for the gap between continuous-time and discrete-time.
However, \Cref{fig:dubins_dualguard} shows that neither small nor large value of $\epsilon$ can bridge this gap, and DualGuard-MPPI violates safety constraints in both cases.
When the margin is small ($\epsilon = \epsilon_1$), the safety filter acts too late to prevent collision with the obstacle. When the margin is larger ($\epsilon = \epsilon_2$), the safety filter acts early and avoids the obstacle, but overshoots and violates the $\theta$ constraint instead due to the zero-order hold used in discrete-time.

\section{Hardware Experiments}

Finally, we deployed our method and other sampling-based MPC baselines on hardware using the AutoRally experimental platform \cite{goldfain2019autorally}.
Testing was conducted on a dirt track outlined by drainage pipes, as shown in \Cref{fig:AutoRallyTrackSetUp}.
All computations use the onboard Intel i7-6700 CPU and Nvidia GTX-750ti GPU \cite{goldfain2019autorally}.
See \Cref{app:hw_details} for more details on the hardware experiments.

\begin{figure}[]
    \centering
    \centerline{\includegraphics[width=9cm,height=4cm]{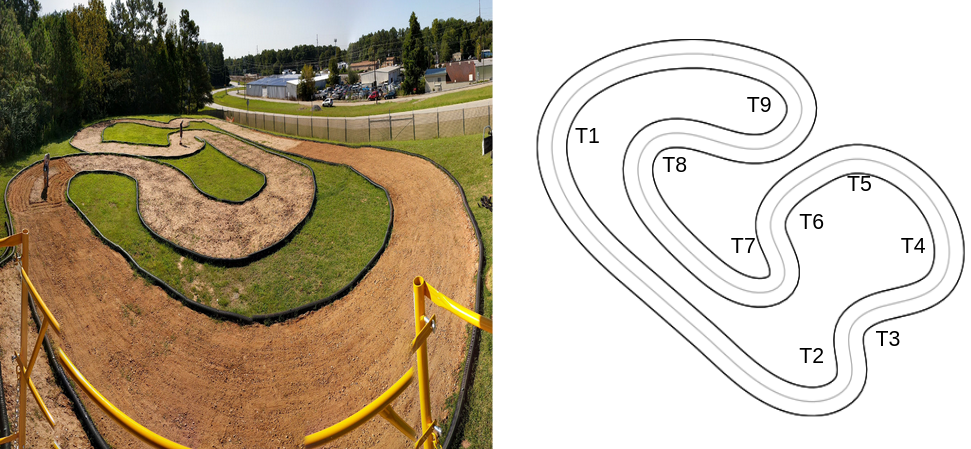}}
    \caption{\textbf{AutoRally Track Setup.} 
    The site is equipped with a spacious carport, a 3-meters-tall observation tower, and a storage shed \cite{AutorallyHardware}.
    Each turn of the track is numbered as Ti, with T1 is the first turn and T9 is the last.
    }
    \label{fig:AutoRallyTrackSetUp}
\end{figure}

\subsection{Robustness Against Adversarial Costs}

Robots depend on sensors such as cameras to gather information about their environment, and use the collected data to construct suitable cost functions. 
Deep Neural Networks (DNNs) are frequently utilized for object detection, classification, and semantic segmentation \cite{zhao2017survey}. 
However, DNNs are susceptible to adversarial attacks \cite{PhysicalAdversarialAttack}, which can mislead the algorithms into generating incorrect outputs. 
These attacks often involve altering the appearance of the object of interest directly \cite{li2019adversarial}, eventually resulting in erroneous cost functions.
In this hardware experiment, we tested the robustness of the proposed NS-MPPI to these erroneous or misspecified cost functions by using an \textit{adversarial} cost function on the actual AutoRally hardware platform.
Namely, instead of penalizing infeasible system states, we \textit{reward} unsafe states by assigning negative costs, effectively incentivizing collisions.
We compared against the S-MPPI and MPPI baselines.

The results are depicted in \Cref{fig:adversarial_cost_test}.
As MPPI only uses cost information, it rapidly steers the AutoRally vehicle toward the closest boundary, resulting in a crash.
S-MPPI crashes the vehicle at the first turn (T1) as well, but it covers a greater distance than MPPI.
Only NS-MPPI achieves a collision-free completion of 10 laps.
Nonetheless, there are noticeable oscillatory movements of the car, indicating susceptibility to attractive costs driving it toward boundary collisions.

\begin{figure}[]
    \centering
    \centerline{\includegraphics[width=\linewidth]{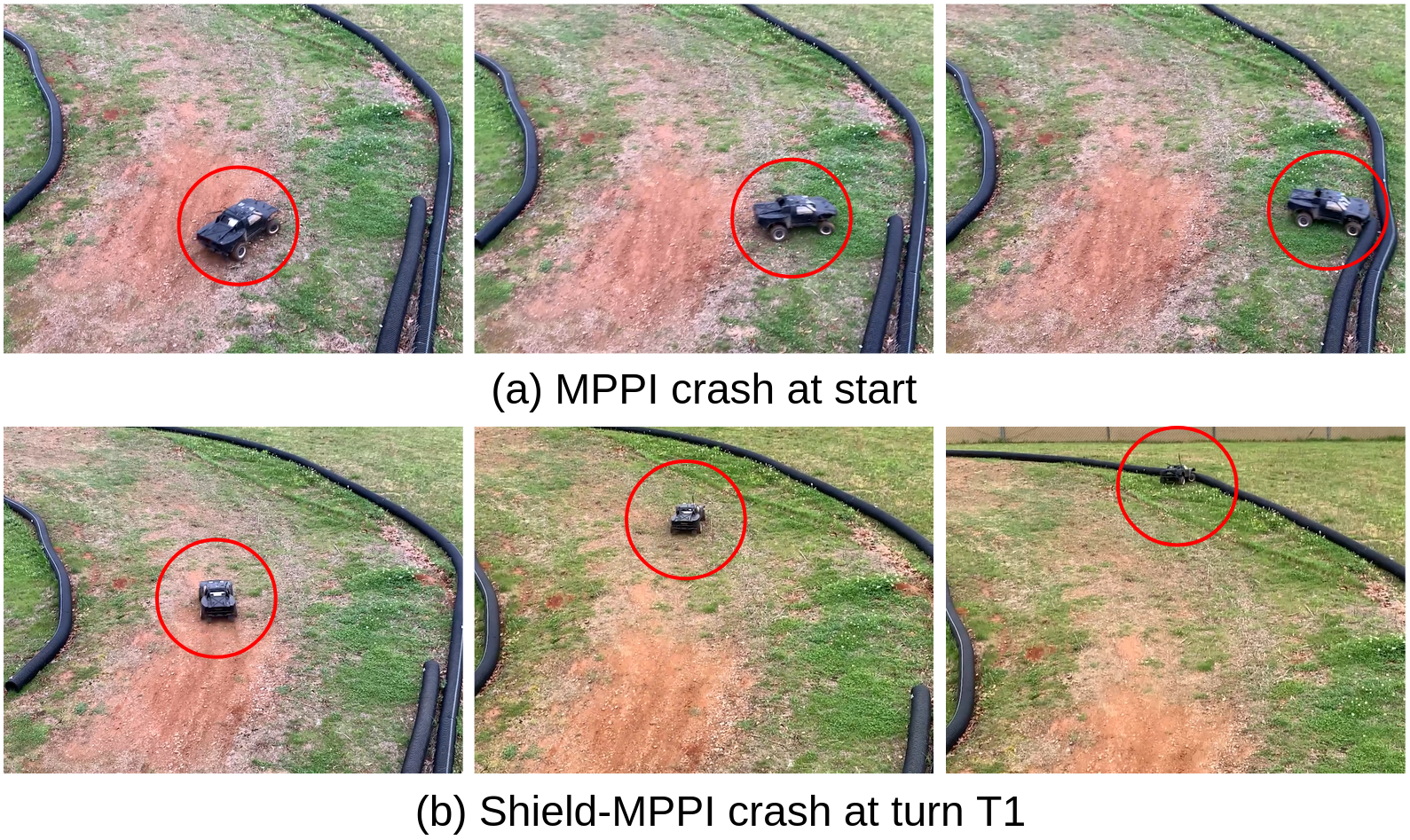}}
    \caption{\textbf{Adversarial cost.} With a cost function that rewards crashes, both MPPI and S-MPPI crash. Only NS-MPPI completes 10 laps collision-free.}
    \label{fig:adversarial_cost_test}
\end{figure}

\begin{figure}[]
    \centering
    \centerline{\includegraphics[width=\linewidth]{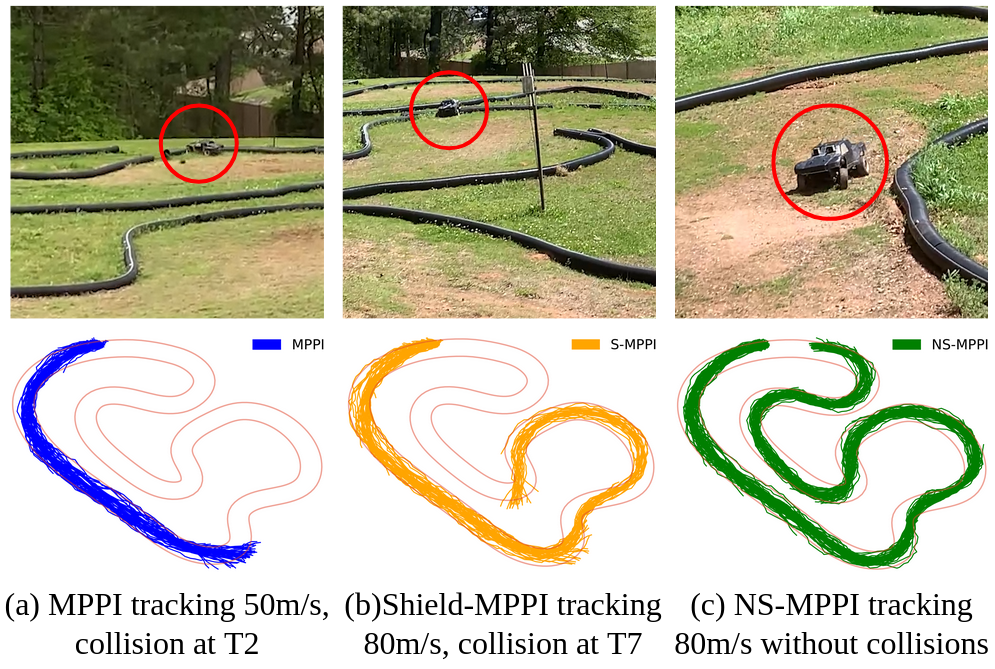}}
    \caption{\textbf{Safety under unsafe user inputs.} Using an erroneously large velocity target, both MPPI and S-MPPI crash.
    The visualization on the bottom row visualizes simulation trajectories that lead to crashes in the same turn.
    }
    \label{fig:safety_under_unsafe_user_inputs}
\end{figure}

\subsection{Safety Under Unsafe User Input}
Exceeding speed limits is a significant contributor to road accidents \cite{NHTSA_speeding} due to noncompliance by many drivers. The proposed Shield VIMPC control framework can smartly manage speeds supervised by the NCBF to achieve safety, even given unsafe user inputs. 
To this end, we tested MPPI, S-MPPI, and the proposed NS-MPPI on the AutoRally hardware using large, unsafe target velocities to examine their safety performance. 
The results are shown in \Cref{fig:safety_under_unsafe_user_inputs}. 
Designating high target velocities encourages maximizing speed, thus disrupting the equilibrium in the controllers' cost structure by diminishing the emphasis on cost penalties associated with obstacles. As a result, MPPI causes the vehicle to deviate from the path, resulting in a crash at the second turn (T2), while S-MPPI impacts at the seventh turn (T7). 
In contrast, NS-MPPI ensures safety, preventing any accidents.

\section{Limitations}

One limitation of our approach is that the DPNCBF is only an approximation of a DCBF. 
As such, the typical safety guarantees of DCBFs may not hold if the function is not a true DCBF.
Moreover, as noted in \Cref{rem:convex_thing}, the use of variational inference means that $\vv^*$ may not satisfy the state constraints despite the optimal distribution $p(\vu \mid o=1)$ having zero density at states that violate the state constraints. 
While theoretically we were only able to show that $\vv^*$ stays safe under the assumption that $\mathcal{U}^K_{\text{safe}}$ is convex, empirical results in \Cref{fig:cluttered_autorally} show that NS-MPPI can perform well despite the fact that $\mathcal{U}^K_{\text{safe}}$ is typically not convex.

\section{Conclusion}

In this study, we have adapted the policy neural control barrier function (PNCBF) to a discrete-time setting and utilized the resulting discrete-time policy neural control barrier function (DPNCBF) to supervise the proposed resampling-based rollout (RBR) method.
This novel method enhances sampling efficiency and safety for all variational inference model predictive controllers (VIMPCs).
Leveraging RBR, we developed the neural shield VIMPC (NS-VIMPC) control framework and demonstrated its benefits for safe planning through the novel neural shield model predictive path integral (NS-MPPI) controller. We conducted tests of the NS-MPPI, benchmarking its performance against state-of-the-art sampling-based MPC controllers using both an autonomous vehicle and a drone. 
Our simulations and experimental data indicate that the NS-MPPI outperforms existing VIMPC methods in terms of safety and sampling efficiency. 
The improved sampling efficiency allows NS-MPPI to reach similar performance levels as other VIMPC methods, while using significantly fewer sampled trajectories, facilitating real-time control on a CPU rather than necessitating costly GPU resources.

We have used the novel RBR method to concentrate sampled trajectories on safe regions. While this approach can significantly improve sampling efficiency in terms of safety, it does not improve other aspects of performance. 
To further enhance performance, we may, for instance, integrate a Control Lyapunov Function \cite{dawson2023safe} or a Control Lyapunov Barrier Function \cite{CLBF} with the novel RBR approach to sample safe and higher-performance trajectories.

\section*{Acknowledgments}
This work was funded by the National Science Foundation (NSF) under CAREER award CCF-2238030 and award CNS-2219755, and Office of Naval Research (ONR) under award N00014-23-1-2353.

\bibliographystyle{plainnat}
\bibliography{bib/references}

\newpage
\onecolumn
\begin{appendices}
\numberwithin{equation}{section}
\renewcommand{\theequation}{\thesection.\arabic{equation}}
\renewcommand{\thesubsection}{\thesection\arabic{subsection}}
\renewcommand{\thesubsectiondis}{\thesubsection}

\crefalias{section}{appendix}

\section{Details on the Simulation Experiments}\label{app:sim_details} %

\subsection{AutoRally Environment} \label{app:sim_details:autorally}
The numerical simulations use the AutoRally platform, a 1/5-scale autonomous vehicle system capable of executing high-speed, limit-handling maneuvers in complicated environments \cite{goldfain2019autorally}.
Figure\ref{fig:AutorallyChassisAndComputeBox} shows the AutoRally hardware.
In this study, we define a crash as an event when the vehicle makes contact with the track boundary, constructed from soft drainage pipes, resulting in the inability to continue driving. We define a collision as an event when the vehicle contacts the track boundary but can continue driving.

\begin{figure}[htb]
    \centering
    \centerline{\includegraphics[scale=1.3]{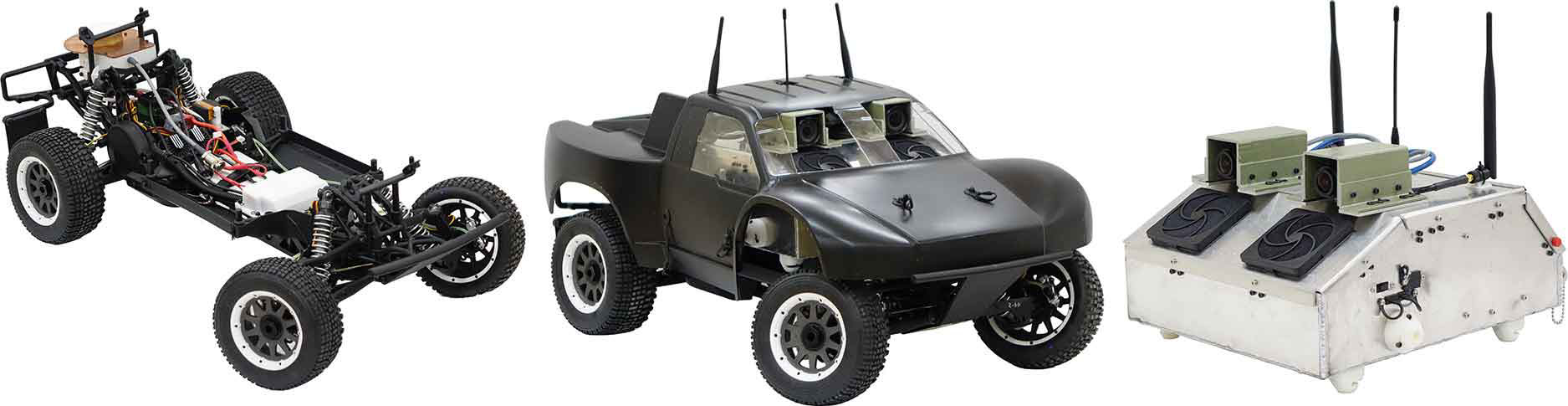}}
    \caption{AutoRally chassis and compute box \cite{AutorallyHardware}.}
    \label{fig:AutorallyChassisAndComputeBox}
\end{figure}

\subsection{AutoRally Dynamics Modelling}\label{AppendixA} %
We model the AutoRally using a rear-wheel drive, single-track bicycle model \cite{CSSMPC}.
The vehicle system is represented in curvilinear coordinates using the centerline of the track as the reference curve. Using a curvilinear coordinate system provides a more intuitive interpretation of the vehicle's position and heading relative to the track, as compared to Cartesian coordinates.

To this end, we consider a nonlinear, continuous-time system,
\begin{align}\label{eqn:VehicleDynamicsContinuousSystem}
    \dot{x} = F(x, u),
\end{align}
where the system state of the AutoRally is given by,
\begin{align}
	x &= \begin{bmatrix}
		v_x, v_y, \dot{\psi}, \omega_F, \omega_R, e_\psi,  e_y, s
	\end{bmatrix}^\intercal,
	\label{eqn:autorally_state_space}
\end{align}
 and where $v_x$ is the longitudinal velocity, $v_y$ is the lateral velocity,  and $\dot{\psi}$ is the yaw rate. The front and rear wheel angular velocities are denoted by $\omega_F$ and $\omega_R$.
The yaw error and the lateral distance error from the centerline of the track are denoted by $e_\psi$ and $e_y$, respectively (see \Cref{fig:wheel_velocities}). 
The variable $s$ is the curvilinear position along the track centerline.
The control input to the system \eqref{eqn:VehicleDynamicsContinuousSystem} is,
\begin{align}
    u &= \begin{bmatrix}
		\delta, T
	\end{bmatrix}^\intercal,
\end{align}
where $\delta$ is the steering angle and $T$ denotes the values for throttle input (if positive) or braking input (if negative). The dynamics of a single-track dynamic bicycle model \cite{CSSMPC} used to model \eqref{eqn:VehicleDynamicsContinuousSystem} are given by,
\begin{subequations}\label{eqn:VehicleDynamics}
\begin{align} 
\dot{v}_{x} &= \frac{f_{F x}\cos\delta - f_{F y}\sin\delta + f_{R x}}{m} + v_{y} \dot{\psi}, \\ 
\dot{v}_{y}&= \frac{f_{F x}\sin\delta + f_{F y}\cos\delta + f_{R y}}{m} - v_x \dot{\psi}, \\ 
\ddot{\psi} &=\frac{\left(f_{F y}\cos\delta + f_{F x}\sin\delta\right) \ell_{F} - f_{R y}\ell_{R}}{I_z},\\
\dot{\omega}_F &= -\frac{r_F}{I_{\omega F}}f_{Fx},\\
\dot{\omega}_R &= \Theta(\omega_R, T), \label{eqn:rear_wheel_acceleration}\\
\dot{e}_\psi &= \dot{\psi} - \frac{v_{x} \cos{e_{\psi}} - v_{y} \sin{e_{\psi}}}{1 - \rho(s) e_{y}} \rho(s), \\
\dot{e}_y &= v_{x} \sin{e_{\psi}} + v_{y} \cos{e_{\psi}},\\
\dot{s} &= \frac{v_{x} \cos{e_{\psi}} - v_{y} \sin{e_{\psi}}}{1 - \rho(s) e_{y}},
\end{align}
\end{subequations}
where $m$ and $I_z$ are the vehicle's mass and moment of inertia about the vertical axis, respectively. The radius of the front wheel is $r_F$, and the moment of inertia of the front wheel about the front axle is $I_{\omega F}$. The curvature of the track centerline at position $s$ is $\rho(s)$.
\begin{figure}[!h]
    \centering
    \includegraphics[scale=0.18]{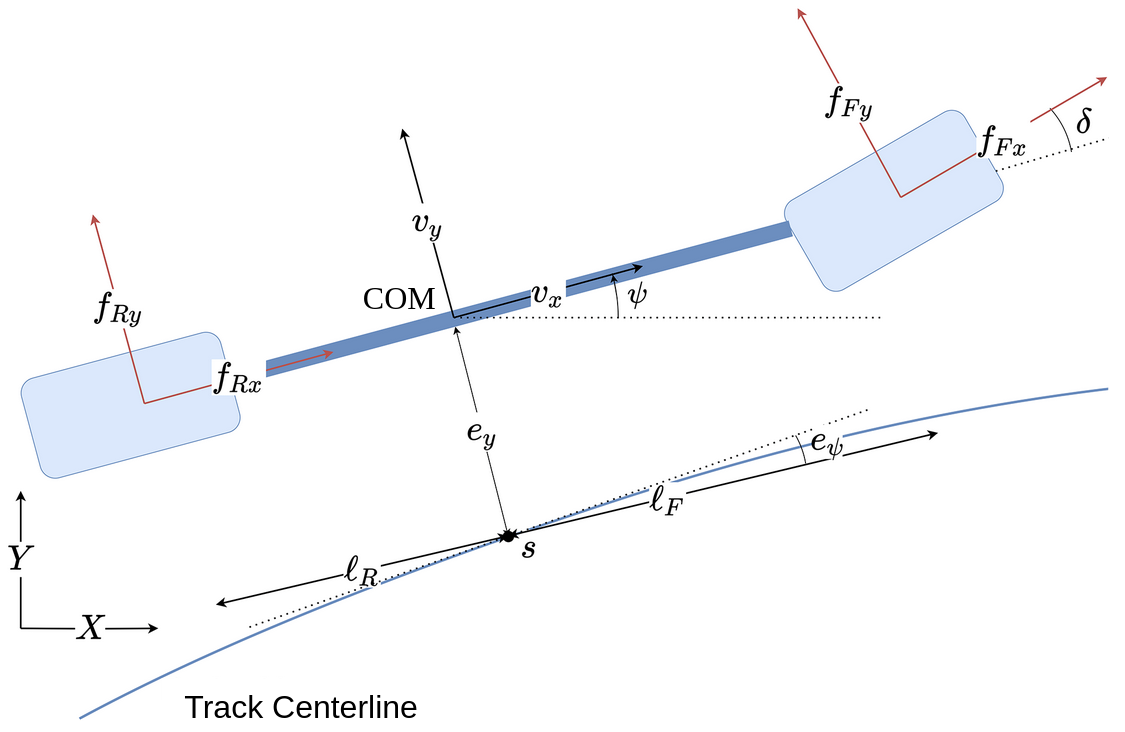}
    \caption{Schematic of the dynamic bicycle model.}
    \label{fig:wheel_velocities}
\end{figure}
The front and rear tire frictional forces are denoted by $f_{Fx}, f_{Fy}, f_{Rx}, f_{Ry}$, where the subscripts $F, R$ indicate front and rear tires and $x, y$ indicate longitudinal and lateral directions. These frictional forces are computed using the ellipse model in \cite{velenis2010steady},
\begin{subequations}\label{egs:fric_from_normals}
\begin{align}
    f_{ix} &= f_{iz}\mu_{ix}, \\
    f_{iy} &= f_{iz}\mu_{iy} + \Phi_i(\delta, \alpha_i), \label{f_iy}
\end{align}
\end{subequations}
where $i = F, R$ indicates front or rear wheels, $f_{iz}$ is the normal force acting on the tire. The tire friction coefficients $\mu_{ix}, \mu_{iy}$ are dependent on wheel slip angles and the intrinsic tire parameters. 

Inspired by \cite{acosta2018tire}, we use a neural network to model the residual error $\Phi_i(\delta, \alpha_i)$, where $\delta$ is steering angle, $\alpha_i = \arctan(v_{iy}/v_{ix})$ is wheel slip angle. $\Phi_i$ is trained using experimental data collected from the AutoRally. The rear wheel angular acceleration $\dot{\omega}_R$ is given by \eqref{eqn:rear_wheel_acceleration}. Modelling $\dot{\omega}_R$ is challenging, because it is determined by a variety of mechanical and physical conditions, including nonlinear tire behavior, dynamic load transfer, road surface irregularities and the complex interaction between these components. To this end, we use a data-driven approach and utilize a neural network $\Theta(\cdot, \cdot)$ trained on experimental data to model the rear wheel angular acceleration $\dot{\omega}_R$.

The nonlinear, continuous-time system \eqref{eqn:VehicleDynamicsContinuousSystem} is discretized and converted to a discrete-time system using Euler integration,
\begin{align}\label{eqn:VehicleDynamicsDiscrete}
    x_{k+1} = f(x_k, u_k) = x_k + F(x_k, u_k)\Delta t,
\end{align}
where $\Delta t = t_{k+1} - t_k$ is the discretization time interval.
We employ an Unscented Kalman Filter (UKF) on the discrete-time system \eqref{eqn:VehicleDynamicsDiscrete} to identify the vehicle and tire parameters using the approach in \cite{changxi}. The resulting parameters of the vehicle model include the vehicle mass $m = 22~\mathrm{kg}$, moment of inertia $I_z = 1.1 ~\mathrm{kg\cdot m^2}$, $l_F = 0.34~\mathrm{m}$, $l_R = 0.23~\mathrm{m}$, $I_{\omega F} = 0.10 ~\mathrm{kg\cdot m^2}$ and front wheel radius $r_F = 0.095~\mathrm{m}$. The parameters in Pacejka's Magic Formula used by the tire friction elipse model \cite{velenis2010steady} are computed as $\tire_\rB = 4.1$, $\tire_\rC=0.95$, $\tire_\rD=1.1$.

\subsection{Drone Dynamics Modelling}\label{drone_dynamics_modelling}
We model the two-dimensional drone dynamics using the following equations,
\begin{subequations}\label{eqn:DroneDynamics}
\begin{align} 
\dot{v}_{x} &= -\frac{F}{m}\sin{\theta}, \\ 
\dot{v}_{z}&= \frac{F}{m}\cos{\theta} - g,\\
\ddot{\theta} &= \frac{\tau}{I},\\
\end{align}
\end{subequations}
where the total thrust $F$ and the torque $\tau$ are given by,
\begin{align}
    F = F_1 + F_2, \quad \tau = \ell(F_2 - F_1),
\end{align}
where $\ell$ is the length from the center of the drone to the center of each rotor, and $F_1$, $F_2$ are thrust forces generated by the left and right rotors, respectively. The thrust forces are computed considering the ground effect \cite{ground_effect_book}, given by,
\begin{align}
    F_1 = F_{in1}/\bigl(1-\rho(\frac{r}{4z_{r1}})\bigr), \\
    F_2 = F_{in2}/\bigl(1-\rho(\frac{r}{4z_{r2}})\bigr),
\end{align}
where $F_{in1}$, $F_{in2}$ are the input thrust commands for the left and right rotors, and $\rho$ is an intrinsic property of the drone. The radius of the rotors is $r$ and the heights from the rotors to the ground are $z_{r1}$ and $z_{r2}$, which are given by,
\begin{align}
    z_{r1} = z - \ell\sin{\theta},\\
    z_{r2} = z + \ell\sin{\theta},
\end{align}
where $\theta$ is the pitch angle.

\section{Proofs for Variational Inference MPC} 
\subsection{Derivations of the Variational Inference Updates} \label{app:VI Gauss}
Let $q_\vv$ be the density function of a Gaussian distribution with mean $\vv$ and covariance $\Sigma$, i.e.,
\begin{equation}
    q_\vv(\vu) = Z_q^{-1} \exp\left(-\frac{1}{2}(\vu-\vv)^\top\Sigma^{-1}(\vu-\vv)\right),
\end{equation}
where the normalization constant $Z_q$ is independent of the mean $\vv$.
We wish to solve the variational inference problem
\begin{equation}
    \min_{\vv} \quad \KL{ p(\vu \mid o=1) }{ q_\vv(\vu) }.
\end{equation}
Expanding and ignoring terms unrelated to $\vv$, we get
\begin{align}
    \KL{ p(\vu \mid o=1) }{ q_\vv(\vu) }
    &= \int p(\vu \mid o=1) \log\left(\frac{p(\vu \mid o=1)}{q_\vv(\vu)}\right) \diff\vu \\
    &= \int p(\vu \mid o=1) \log\left(p(\vu \mid o=1)\right) \diff\vu -\int p(\vu \mid o=1) \log(q_\vv(\vu)) \diff\vu\\
    &= -\int p(\vu \mid o=1) \log(q_\vv(\vu)) \diff\vu + O(1) \\
    &=  \int p(\vu \mid o=1) \left( \frac{1}{2} (\vu - \vv)\T \Sigma^{-1} (\vu - \vv) + \log Z_q \right) \diff\vu + O(1) \\
    &= \E_{p(\vu \mid o=1)}\left[ \frac{1}{2} (\vu - \vv)\T \Sigma^{-1} (\vu - \vv) + \log Z_q \right] + O(1) \\
    &= \E_{p(\vu \mid o=1)}\left[ \frac{1}{2} (\vu - \vv)\T \Sigma^{-1} (\vu - \vv) \right] + O(1).
\end{align}
Taking the derivative with respect to $\vv$ and setting to zero, we get
\begin{equation}
    \E_{p(\vu \mid o=1)}\left[ \Sigma^{-1} (\vu - \vv^*) \right] = 0.
\end{equation}
Rearranging, yields
\begin{equation}
    \vv^* = \E_{p(\vu \mid o=1)}\left[ \vu \right].
\end{equation}

\subsection{Derivations of the Self Normalized Importance Sampling Estimator} \label{app:VI SNIS}
We next derive the self-normalized importance sampling (SNIS) estimator $\hat{\vv}$ in \eqref{eq:vi:v_opt} of $\vv^*$ in \eqref{eq:vi:omega_def}.
Note that, since $Z=\int \exp(-J(\vu)) p_0(\vu) \diff{\vu}$ is unknown, we cannot compute the weights $\omega(\vu)$ in \eqref{eq:vi:omega_def} directly.
Hence, the regular importance sampled Monte Carlo estimator
\begin{equation} \label{eq:vi_deriv:is_estimator}
    \hat{\vv} = \frac{1}{N} \sum_{i=1}^N \omega(\vu^i) \vu^i,
\end{equation}
is not computable.
Instead, we can use the SNIS estimator \cite{owen2013monte}, which can be derived as follows.
First, note that $Z$ can be written as the expectation of $\exp(-J(\vu))$, i.e.,
\begin{equation}
    Z = \E_{p_0(\vu)}\left[ \exp(-J(\vu)) \right].
\end{equation}
Hence, one idea is to reuse the samples $\vu^i \sim r$ to compute an estimate $\hat{Z}$ of $Z$, that is,
\begin{equation}
    \hat{Z} = \frac{1}{N} \sum_{i=1}^N \exp(-J(\vu^i)) p_0(\vu^i) / r(\vu^i),
\end{equation}
where $\hat{Z}$ is a ``normal'' importance sampled Monte Carlo estimator of $Z$.
If we use the estimate $\hat{Z}$ to compute the weights $\hat{\omega}(\vu)$ in the normal importance sampled Monte Carlo estimator $\hat{\vv}$ \eqref{eq:vi_deriv:is_estimator}, we obtain the SNIS estimator as in \eqref{eq:vi:v_opt} and \eqref{eq:vi:omega_def}
\begin{align}
    \hat{\vv}
    &= \frac{1}{N} \sum_{i=1}^N \hat{\omega}(\vu^i) \vu^i, \\
    &= \frac{1}{N} \sum_{i=1}^N \left( \frac{1}{ \hat{Z} } \frac{ \exp(-J(\vu^i))p_0(\vu^i) }{ r(\vu^i) }  \right) \vu^i, \\
    &= \frac{1}{N} \sum_{i=1}^N \left( \frac{1}{ \frac{1}{N} \sum_{j=1}^N \exp(-J(\vu^j)) p_0(\vu^j) / r(\vu^j) } \frac{ \exp(-J(\vu^i))p_0(\vu^i) }{ r(\vu^i) }  \right) \vu^i, \\
    &= \frac{1}{N} \sum_{i=1}^N \left( \frac{1}{ \frac{1}{N} \sum_{j=1}^N Z^{-1} \exp(-J(\vu^j)) p_0(\vu^j) / r(\vu^j) } \frac{ Z^{-1} \exp(-J(\vu^i))p_0(\vu^i) }{ r(\vu^i) }  \right) \vu^i, \\
    &= \frac{1}{N} \sum_{i=1}^N \left( \frac{1}{ \frac{1}{N} \sum_{j=1}^N Z^{-1} \exp(-J(\vu^j)) p_0(\vu^j) / r(\vu^j) } \omega(\vu^i)  \right) \vu^i, \\
    &= \frac{1}{N} \sum_{i=1}^N \left( \frac{\omega(\vu^i)}{ \frac{1}{N} \sum_{j=1}^N \omega(\vu^j) }  \right) \vu^i, \\
    &= \sum_{i=1}^N \left( \frac{\omega(\vu^i)}{ \sum_{j=1}^N \omega(\vu^j) }  \right) \vu^i, \\
    &= \sum_{i=1}^N \tilde{\omega}^i \vu^i,
\end{align}
where,
\begin{align}
    \tilde{\omega}^i = \frac{\omega(\vu^i)}{\sum^N_{j=1}\omega(\vu^j)}.
\end{align}
Unfortunately, the use of the estimated $\hat{Z}$ in the weights $\omega(\vu)$ causes the SNIS estimator to be biased \cite{owen2013monte}.

\subsection{MPPI as a Special Case of the Variational Inference Update} \label{app:MPPI as VI}
\begin{theorem}
    When $p_0(\vu) = q_{\bf 0}(\vu) := q_{\bf v = 0}(\vu)$ and $r(\vu)=q_{\bar{\vv}}(\vu)$ for some previous estimate of the optimal control sequence $\bar{\vv}$, the equation for the weights $\omega(\vu)$ in the variational inference MPC update \eqref{eq:vi:omega_def} reduces to that of MPPI.
\end{theorem}

\begin{proof}
    Let $q_\vv$ be the density function for a Gaussian distribution with mean $\vv$ and covariance $\Sigma$, that is,
    \begin{equation}
        q_\vv(\vu) = Z_q^{-1} \exp\left(-\frac{1}{2}(\vu-\vv)^\top\Sigma^{-1}(\vu-\vv)\right),
    \end{equation}
    By choosing $p_0(\vu) = q_{\bf 0}(\vu)$ and $r(\vu)=q_{\bar{\vv}}(\vu)$, we have that
    \begin{align}
        \frac{p_0(\vu)}{r(\vu)}
        &= \frac{ Z_q^{-1} \exp\left(-\frac{1}{2} \vu^\top\Sigma^{-1}\vu \right) }
        { Z_q^{-1} \exp\left(-\frac{1}{2}(\vu-\vv)^\top\Sigma^{-1}(\vu-\vv)\right) } \\
        &= \exp(-\frac{1}{2} \vu^\top\Sigma^{-1}\vu + \frac{1}{2}(\vu-\vv)^\top\Sigma^{-1}(\vu-\vv)) \\
        &=\exp(-\frac{1}{2} \vu^\top\Sigma^{-1}\vu + \frac{1}{2} \vu^\top\Sigma^{-1}\vu -\vu^\top\Sigma^{-1}\vv + \frac{1}{2}\vv^\top\Sigma^{-1}\vv)\\
        &= \exp(-\vu^\top\Sigma^{-1}\vv + \frac{1}{2}\vv^\top\Sigma^{-1}\vv).
    \end{align}
    Hence,
    \begin{align}
        \omega(\vu)
        &= \frac{Z^{-1} \exp(-J(\vu)) p_0(\vu)}{r(\vu)} \\
        &= Z^{-1} \exp(-[ J(\vu) + \vu^\top\Sigma^{-1}\vv]) \exp(\frac{1}{2} \vv\T \Sigma^{-1} \vv) \\
        &\propto \exp(-[ J(\vu) + \vu^\top\Sigma^{-1}\vv]),
    \end{align}
    where the last line follows since terms not dependent on $\vu$ can be canceled out between the numerator and denominator in \eqref{eq:vi:omega_tilde}.
\end{proof}

\section{NCBF Training Details}

The definition of the avoid set is given by \eqref{eqn:AvoidSet}, 
\begin{align}
    \mathcal{A} = \{x|h(x) > 0 \}, \nonumber
\end{align}
where the avoidance heuristic $h(x)$ is used to define the avoid set. 
Furthermore, the definition of the policy value function $V^{h,\pi}(x_0)$ given by \eqref{eqn:ValueFunctionDefinition}, and the ensuing training losses \eqref{eqn:DPcost}, \eqref{eqn:DPcostModified} are all dependent on $h(x)$. 
A suitable selection of the avoidance heuristic $h(x)$ must ensure that its corresponding avoidance set $\mathcal{A}$ encompasses all system states that overlap with any obstacles.
Additionally, the heuristic should improve the quality of information captured by the loss function \eqref{eqn:DPcostModified}, thereby making it more effective for neural network training. 
To this end, a reasonable choice of the avoidance heuristic utilizes coordinates in \eqref{eqn:autorally_state_space},
\begin{align}
    h_0(x) = w_I^2 - e_y^2,
\end{align}
where $w_I = 1.5~\mathrm{m}$ is half of the track width, and $e_y$ is the lateral deviation of the vehicle CoM to the track centerline. The visualization of $h_0(x)$ is demonstrated by the orange curve in \Cref{fig:AvoidanceHeuristicVisualization}.

\begin{figure}[!h]
    \centering
    \centerline{\includegraphics[scale=0.23]{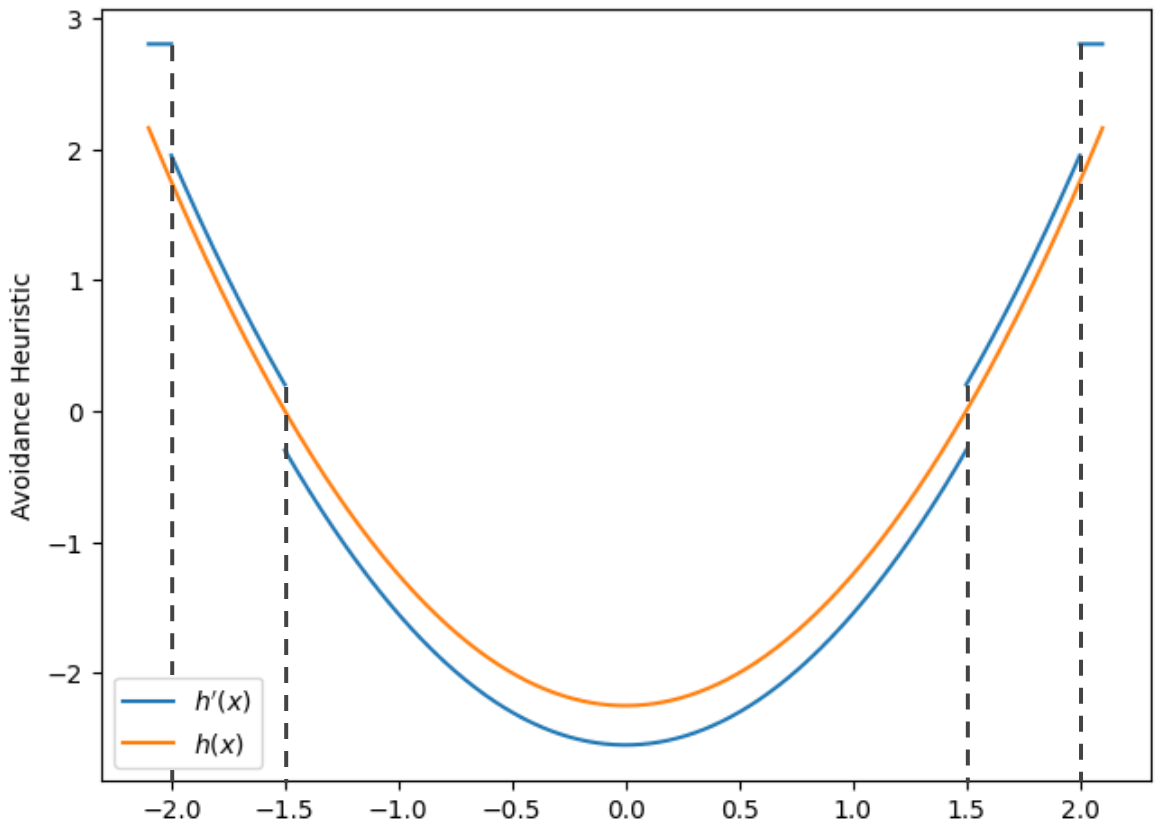}}
    \caption{Avoidance heuristic visualization. The orange curve shows the original avoidance heuristic $h_0$, while the blue curve demonstrates the modified avoidance heuristic $h$ used for training the NCBF.}
    \label{fig:AvoidanceHeuristicVisualization}
\end{figure}
However, the neural network approximation $V_\theta^{h_0, \pi}(x)$ of the policy value function $V^{h_0, \pi}$ given by \eqref{eqn:ValueFunctionDefinition} may not accurately distinguish the unsafe states in the avoid set $\mathcal{A}_0 = \{x|h_0(x) > 0 \}$ from the rest of the state space, due to the fact that trained neural networks can suffer from insufficient accuracy and precision \cite{jiang2023neural, de2018high}. This is demonstrated by the orange line and its error margins in Fig.\ref{fig:closeup_plot}. To alleviate this problem, we introduce discontinuity to $h_0(x)$ for enhanced tolerance of policy value function modelling errors while maintaining the same avoid set \eqref{eqn:AvoidSet}, such that,
\begin{align}
    \mathcal{A} = \{x|h(x) > 0 \} = \{x|h_0(x) > 0 \}.
\end{align}
where our choice of the avoidance heuristic is given by,
\begin{align}
h(x) = 
\begin{cases} 
h_0(x) - 0.3, & \text{if } e_y < w_I, \\
h_0(x) + 0.2, & \text{if } w_I \leq e_y \leq w_O,\\
2.8, & \text{if } w_O \leq e_y,
\end{cases}
\end{align}
where $w_O$ is the``crash width''. If $w_O < e_y$, the vehicle crashes and needs to be reset. If $ w_I \leq e_y \leq w_O$, the AutoRally collides with the soft drainage pipes but can still continue driving. 
As shown by the blue curve in Fig. \ref{fig:AvoidanceHeuristicVisualization}, $h(x)$ is designed to reduce the error of the avoid set of the resulting NCBF by introducing a discontinuity around zero. This is further demonstrated in Fig. \ref{fig:closeup_plot}.
\begin{figure}[!h]
    \centering
    \centerline{\includegraphics[scale=0.25]{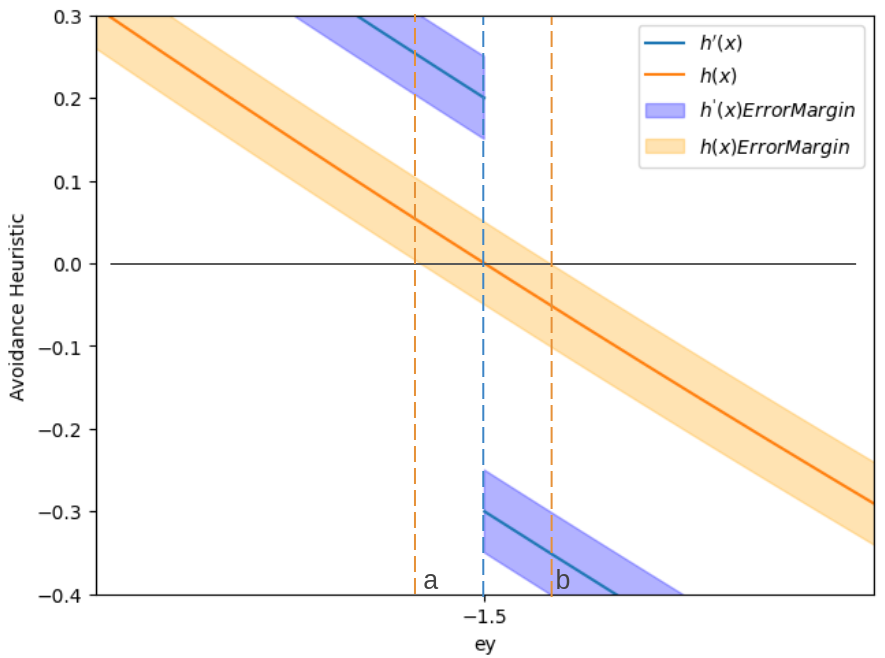}}
    \caption{Improving accuracy and precision of NCBF modeled avoid set boundary by using the modified heuristic $h(x)$. When using the original avoidance heuristic $h_0(x)$ to supervise the NCBF training process, the resulting modeled avoid set boundary can be anywhere within $e_y \in (a, b)$ due to model errors. Instead, using $h(x)$ to supervise the training process results in a modeled avoid set with an accurate boundary shown by the dashed blue line, despite model errors.}
    \label{fig:closeup_plot}
\end{figure}

\begin{figure}[!h]
    \centering
    \centerline{\includegraphics[scale=0.25]{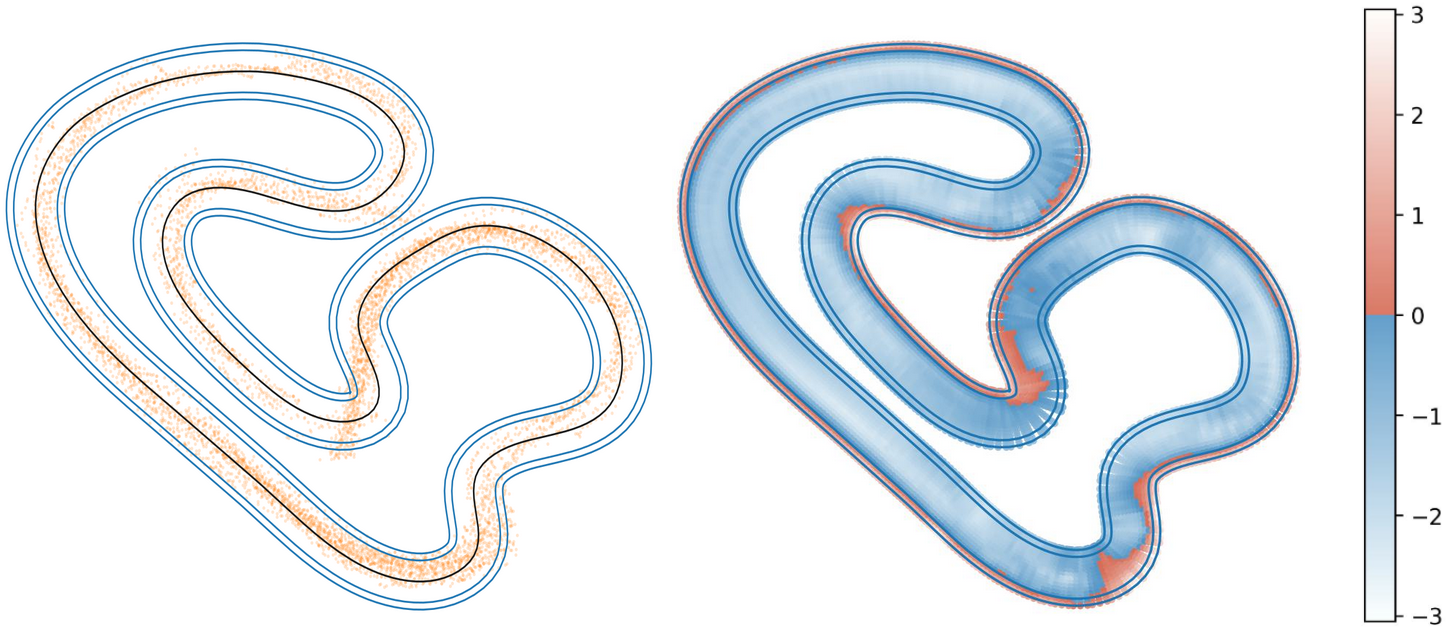}}
    \caption{NCBF Training. The left figure shows  collected system states (yellow dots) in AutoRally simulations. The right figure visualizes the resulting Neural CBF. The NCBF $B(x)$ takes the 8-dimensional state $x$ as input. 
    The red color indicates an unsafe region where $B(x) > 0$, while the blue color indicates a safe region where $B(x) \leq 0$.}
    \label{fig:NCBFTraining}
\end{figure}

\begin{figure}[!h]
    \centering
    \centerline{\includegraphics[scale = 0.34]{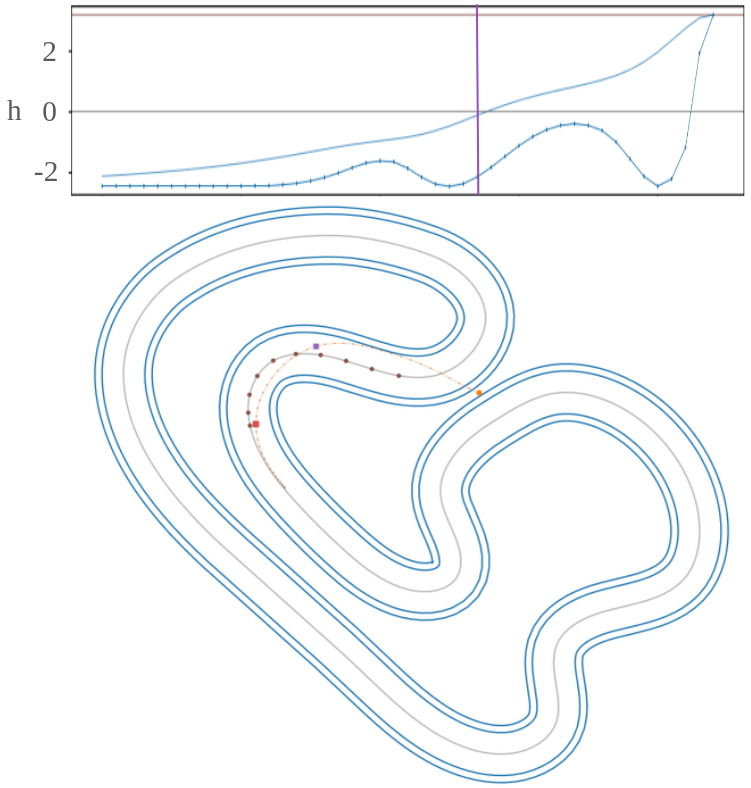}}
    \caption{A simulation example of the value change of the learned Neural CBF $B(x)$ compared to the avoid set heuristic $h(x)$. The orange trajectory produced by the standard MPPI results in a collision with the track boundary. The square purple dot shows the point where $B(x)$ shifts from negative to positive values, corresponding to the purple vertical line in the curve plot above, which shows the value change of the heuristics $B(x)$, $h(x)$ along the MPPI trajectory. The dots along the track centerline are observations collected starting at the square red dot, used to augment the training data.}
    \label{fig:NCBFexample}
\end{figure}
 The orange dots in the left plot in Fig. \ref{fig:NCBFTraining} represent the vehicle states collected using the training data. The dots outside of the track show that the autonomous car went off the track at several turns. These data are used to train a neural DCBF, leading to a visualization as demonstrated by the plot on the right in Fig. \ref{fig:NCBFTraining},in which the NCBF $B(x)$ is mapped onto the 2D Cartesian plane by utilizing mean values from the training dataset for the remaining state dimensions. The red color shows the area where $B(x) > 0$ , and as a result, the neural CBF slows down the Autorally when it approaches sharp turns to ensure safety.

From Fig. \ref{fig:NCBFexample}, we see that the learned NCBF $B(x)$ has a smaller safe set (negative level set) than the avoid set heuristic $h(x)$, as it turns positive earlier than $h(x)$ when the vehicle approaches the track boundary. When used as a safety filter, $B(x)$ can slow down the autonomous vehicle much earlier, making its negative level set control-invariant. 

\section{Proofs For Resampling-based Rollouts (RBR) supervised by a CBF}
\subsection{Proof of Theorem~\ref{thm:forward_invariant}}
\begin{proof} \label{proof3.1}
Condition \eqref{eqn:descent_condition} implies that $B(x_{k}) \leq (\text{Id} - \alpha) \circ B(x_{k-1}) $, where $\circ$ denotes function composition and $\mathrm{Id}$
denotes the identity function~\cite{DCBF}. 
Since $B(x_{1}) \leq (\text{Id} - \alpha) \circ B(x_0) $, it follows that,
\begin{equation}\label{induction}
    B(x_k) \leq (\text{Id}-\alpha)^k \circ B(x_0).
\end{equation}
Since $(\text{Id} - \alpha)$ is a class-$\kappa$ function for $a \in (0, 1)$, it follows from $B(x_0) \leq 0$ that $B(x_k) \leq 0$. Hence, the set $\mathcal{S}$ is forward invariant for system~\eqref{dynamics}.
\end{proof}

\subsection{Proof of Theorem~\ref{thm:same_marginal}} \label{sec:proof:same_marginal}
\begin{proof}
    Note that
    \begin{equation}
        \mathsf{f}(\tilde{\mathsf{b}} \mid \mathsf{a}, \mathsf{b})
        = \mathbbm{1}_{\mathsf{b} \in \mathsf{S}} \delta(\tilde{\mathsf{b}} - \mathsf{b})
        + \mathbbm{1}_{\mathsf{b} \not\in \mathsf{S}} \delta(\tilde{\mathsf{b}} - \mathsf{a}).
    \end{equation}
    Let $\tilde{\mathsf{f}}$ denote the marginal density of $\tilde{\mathsf{b}}$.
    Then, using the independence of $\mathsf{a}$ and $\mathsf{b}$,
    \begin{equation}
        \tilde{\mathsf{f}}(\tilde{\mathsf{b}})
        = \int \left( \int \mathsf{f}(\tilde{\mathsf{b}} \mid \mathsf{a}, \mathsf{b}) \mathsf{f}(\mathsf{a} \mid \mathsf{a} \in \mathsf{S}) \diff{\mathsf{a}} \right) \mathsf{f}(\mathsf{b}) \diff{\mathsf{b}}
    \end{equation}
    Simplifying the inner integral first using properties of the Dirac delta function gives
    \begin{align}
        &\mathrel{\phantom{=}} \int \mathsf{f}(\tilde{\mathsf{b}} \mid \mathsf{a}, \mathsf{b}) \mathsf{f}(\mathsf{a} \mid \mathsf{a} \in \mathsf{S}) \diff{\mathsf{a}} \\
        &= \mathbbm{1}_{\mathsf{b} \in \mathsf{S}} \delta(\tilde{\mathsf{b}} - \mathsf{b}) + 
        \mathbbm{1}_{\mathsf{b} \not\in \mathsf{S}} \int 
            \delta(\tilde{\mathsf{b}} - \mathsf{a})
            \mathsf{f}(\mathsf{a} \mid \mathsf{a} \in \mathsf{S}) \diff{\mathsf{a}} \\
        &= \mathbbm{1}_{\mathsf{b} \in \mathsf{S}} \delta(\tilde{\mathsf{b}} - \mathsf{b}) + 
        \mathbbm{1}_{\mathsf{b} \not\in \mathsf{S}} \mathsf{f}(\tilde{\mathsf{b}} \mid \tilde{\mathsf{b}} \in \mathsf{S}).
    \end{align}
    Hence,
    \begin{align}
        \tilde{\mathsf{f}}(\tilde{\mathsf{b}})
        &= \int \left(
            \mathbbm{1}_{\mathsf{b} \in \mathsf{S}} \delta(\tilde{\mathsf{b}} - \mathsf{b}) + 
            \mathbbm{1}_{\mathsf{b} \not\in \mathsf{S}} \mathsf{f}(\tilde{\mathsf{b}} \mid \tilde{\mathsf{b}} \in \mathsf{S})
        \right) \mathsf{f}(\mathsf{b}) \diff{\mathsf{b}} \\
        &= \mathsf{f}(\tilde{\mathsf{b}}) \mathbbm{1}_{\tilde{\mathsf{b}} \in \mathsf{S}}
        + \mathsf{f}(\tilde{\mathsf{b}} \mid \tilde{\mathsf{b}} \in \mathsf{S})
        \int \mathbbm{1}_{\mathsf{b} \not\in \mathsf{S}} \mathsf{f}(\mathsf{b}) \diff{\mathsf{b}} \\
        &= \mathsf{f}(\tilde{\mathsf{b}} \mid \tilde{\mathsf{b}} \in \mathsf{S})
        P(\mathsf{b} \in \mathsf{S})
        + \mathsf{f}(\tilde{\mathsf{b}} \mid \tilde{\mathsf{b}} \in \mathsf{S})
        P(\mathsf{b} \not\in \mathsf{S}) \\
        &= \mathsf{f}(\tilde{\mathsf{b}} \mid \tilde{\mathsf{b}} \in \mathsf{S}).
    \end{align}
\end{proof}

\subsection{Proof of Corollary~\ref{thm:resample_unbiased}} \label{sec:proof:resample_unbiased}
\begin{proof}
    Applying \Cref{thm:same_marginal}, $\mathsf{x}, \mathsf{a}$ and $\tilde{\mathsf{b}}$ have the same distribution,
    \begin{equation}
        \mathsf{x} \stackrel{d}{=} \mathsf{a} \stackrel{d}{=} \tilde{\mathsf{b}}.
    \end{equation}
    Hence,
    \begin{align}
        \E[\frac{1}{2} w( \mathsf{a} ) + \frac{1}{2} w( \tilde{\mathsf{b}} )]
        &= \frac{1}{2} \E[ w( \mathsf{a} )] + \frac{1}{2} \E[ w( \tilde{\mathsf{b}} )], \\
        &= \frac{1}{2} \E[ w( \mathsf{x} )] + \frac{1}{2} \E[ w( \mathsf{x} )], \\
        &= \E[w( \mathsf{x} )].
    \end{align}
\end{proof}

\subsection{Proof of Theorem~\ref{thm:resample_variance}} \label{sec:proof:resample_variance}
We prove the two claims separately.

\begin{lemma}
    The variance of the Monte Carlo estimator of the optimal control law is
    \begin{equation}
        \Var[\hat{v}_k] = \frac{1}{N}\left( \frac{1}{3} 2^K - \frac{1}{4} \right), \qquad k = 1, \dots, K.
    \end{equation}
\end{lemma}
\begin{proof}
First, note that the optimal control $\vv^*$ is given by
\begin{align}
    v^*_k
    &= \int_{[-1, 1]^K} p(\vu \mid o=1) u_k \diff{\vu} \\
    &= \int_{[-1, 1]^K} \ind{\vu \in [0, 1]} u_k \diff{\vu} \\
    &= \int_{[0, 1]^K} u_k \diff{\vu} \\
    &= \frac{1}{2}.
\end{align}
Using the formula for computing the variance of an importance sampled Monte Carlo estimator \cite{owen2013monte}, we then have that
\begin{align}
    \Var[\hat{v}_k]
    &= \frac{1}{N} \int_{[-1, 1]^K} \frac{ ( u_k p(\vu \mid o=1) - v^*_k r(\vu) )^2 }{ r(\vu) } \diff{\vu} \\
    &= \frac{1}{N} \int_{[-1, 1]^K} \frac{ ( u_k p(\vu \mid o=1) - \frac{1}{2} 2^{-K} )^2 }{ 2^{-K} } \diff{\vu} \\
    &= \frac{1}{N} 2^K \left( \int_{[0, 1]^K} (u_k)^2 \diff{\vu} - 2^{-K} \int_{[0, 1]^K} u_k \diff{\vu} + \frac{1}{4} (2^{-K})^2 \int_{[-1, 1]^K} 1 \diff{\vu} \right) \\
    &= \frac{1}{N} 2^K \left( \frac{1}{3} - \frac{1}{2} 2^{-K} + \frac{1}{4} 2^{-K} \right) \\
    &= \frac{1}{N}\left( \frac{1}{3} 2^K - \frac{1}{4} \right).
\end{align}
\end{proof}

\noindent\textbf{Proof of second claim.}
We now prove the second claim.
When performing resampling-based rollouts, \textit{only} if $u^i_k$ lies in $[-1, 0]$ for all $k = 0, \dots, K - 2$ does resampling not occur, and the output $\tilde{u}^i_k \in [-1, 0]$ for $k = 0, \dots, K-2$.
Otherwise, we have that $\tilde{u}^i_k \in [0, 1]$ for $k = 0, \dots, K-2$.
The last control $u^i_{K-1} = \tilde{u}^i_{K-1}$ is never resampled.
Hence, the probability that all $N$ samples lie in $[-1, 0]^{K-1} \times [-1, 1]$ is equal to $2^{-N(K-1)}$.

For all timesteps except the last, we write the joint probability density function (over all $N$ samples) as
\begin{equation}
    p(\tilde{u}^1_k, \dots, \tilde{u}^N_k) = \begin{dcases}
        2^{-N}, & \text{if } \tilde{u}^i_k \in [-1, 0] \text{ for all } i = 1, \dots, N, \\
        1 - 2^{-N}, & \text{if } \tilde{u}^i_k \in [0, 1] \text{ for all } i = 1, \dots, N, \\
        0, & \text{otherwise,}
    \end{dcases}
    \qquad
    \quad
    \text{ for }
    k = 0, \dots, K - 2,
\end{equation}
and
\begin{equation}
    p(\tilde{u}^i_{K-1}) = 1,
    \qquad
    p(\tilde{u}^1_{K-1}, \dots, \tilde{u}^N_{K-1}) = \prod_{i=1}^N p(\tilde{u}^i_{K-1}) = 1.
\end{equation}
For convenience, let $t \coloneqq 2^{-N}$ such that $P(\tilde{u}^{1:N}_k \in [0, 1]) = 1 - t$ for $k < K-1$,
and
\begin{equation}
    P(\tilde{u}^{1:N}_{0:K-2} \in [0, 1]) = \prod_{k=0}^{K-2} P(\tilde{u}^{1:N}_k \in [0, 1]) = (1 - t)^{K-1}.
\end{equation}
Next, consider the Monte Carlo estimator $\hat{v}_k$ using the resampled controls $\tilde{u}^i_k$, defined by
\begin{equation}
    \hat{v}_k = \frac{1}{N} \sum_{i=1}^N \frac{ \ind{\tilde{u}^i_{0:K-1} \in [0, 1]} }{ \frac{1}{2} (1 - t)^{K-1} } \tilde{u}^i_k.
\end{equation}
We will compute the expectation of $\hat{v}_k$ using the law of total expectation. To this end, we have that
\begin{align}
    \E_{\tilde{u}_{0:K-1}^{1:N}}[ \hat{v}_k ]
    &= \frac{1}{N} \E_{\tilde{u}_{0:K-1}^{1:N}}\left[ \sum_{i=1}^N \frac{ \ind{\tilde{u}^i_{0:K-1} \in [0, 1]} }{ \frac{1}{2} (1 - t)^{K-1} } \tilde{u}^i_k \right] \\
    &= \frac{1}{N}\Big( P(\tilde{u}_{0:K-2}^{1:N} \geq 0) \E_{\tilde{u}_{0:K-1}^{1:N}}\left[ \sum_{i=1}^N \frac{ \ind{\tilde{u}^i_{0:K-1} \in [0, 1]} }{ \frac{1}{2} (1 - t)^{K-1} } \tilde{u}^i_k \mathrel{\bigg|} \tilde{u}_{0:K-2}^{1:N} \geq 0\right] \\
    &\quad + P(\tilde{u}_{0:K-2}^{1:N} < 0) \E_{\tilde{u}_{0:K-1}^{1:N}}\left[ \sum_{i=1}^N \frac{ \ind{\tilde{u}^i_{0:K-1} \in [0, 1]} }{ \frac{1}{2} (1 - t)^{K-1} } \tilde{u}^i_k \mathrel{\bigg|} \tilde{u}_{0:K-2}^{1:N} < 0\right]
    \Big) \notag \\
    &= \frac{1}{N}\left( (1 - t)^{K-1} \E_{\tilde{u}_{0:K-1}^{1:N}}\left[ \sum_{i=1}^N \frac{ \ind{\tilde{u}^i_{0:K-1} \in [0, 1]} }{ \frac{1}{2} (1 - t)^{K-1} } \tilde{u}^i_k \mathrel{\bigg|} \tilde{u}_{0:K-2}^{1:N} \geq 0\right]
    \right) \\
    &= \frac{2}{N}\left( \E_{\tilde{u}_{0:K-1}^{1:N}}\left[ \sum_{i=1}^N \ind{\tilde{u}^i_{0:K-1} \in [0, 1]}\, \tilde{u}^i_k \mathrel{\bigg|} \tilde{u}_{0:K-2}^{1:N} \geq 0\right]
    \right).
\end{align}
We now split into two cases. When $k = K-1$,
\begin{align}
    \E_{\tilde{u}_{0:K-1}^{1:N}}[ \hat{v}_{K-1} ]
    &= \frac{2}{N}\left( \E_{\tilde{u}_{K-1}^{1:N}}\left[ \E_{\tilde{u}_{0:K-2}^{1:N}}\left[ \sum_{i=1}^N \ind{\tilde{u}^i_{0:K-1} \in [0, 1]}\, \tilde{u}^i_{K-1} \mathrel{\bigg|} \tilde{u}_{0:K-2}^{1:N} \geq 0\right] \right]
    \right) \\
    &= \frac{2}{N}\left( \E_{\tilde{u}_{K-1}^{1:N}}\left[ \sum_{i=1}^N \ind{\tilde{u}^i_{K-1} \in [0, 1]}\, \tilde{u}^i_{K-1} \right]
    \right) \\
    &= \frac{2}{N}\left( \sum_{i=1}^N \E_{\tilde{u}_{K-1}^{i}}\left[ \ind{\tilde{u}^i_{K-1} \in [0, 1]} \tilde{u}^i_{K-1} \right]
    \right) \\
    &= \frac{2}{N}\left( \sum_{i=1}^N \frac{1}{4} \right) \\
    &= \frac{1}{2}.
\end{align}
Similarly, when $k < K-1$,
\begin{align}
    \E_{\tilde{u}_{0:K-1}^{1:N}}[ \hat{v}_{k} ]
    &= \frac{2}{N}\left( \E_{\tilde{u}_{K-1}^{1:N}}\left[ \E_{\tilde{u}_{0:K-2}^{1:N}}\left[ \sum_{i=1}^N \ind{\tilde{u}^i_{K-1} \in [0, 1]} \ind{\tilde{u}^i_{0:K-2} \in [0, 1]}\, \tilde{u}^i_{k} \mathrel{\bigg|} \tilde{u}_{0:K-2}^{1:N} \geq 0\right] \right]
    \right) \\
    &= \frac{1}{N}\left( \E_{\tilde{u}_{0:K-2}^{1:N}}\left[ \sum_{i=1}^N \tilde{u}^i_{k} \mathrel{\bigg|} \tilde{u}_{0:K-2}^{1:N} \geq 0\right]
    \right) \\
    &= \frac{1}{2}.
\end{align}
Thus, $\hat{v}_k$ is an unbiased estimator.

Next, we look at the variance.
Summarizing the above computation, we have that
\begin{equation}
    \E_{\tilde{u}_{0:K-1}^{1:N}}[ \hat{v}_k \mid \tilde{u}_{0:K-2}^{1:N} \geq 0 ]
    = \frac{1}{2 (1-t)^{K-1} }, \qquad 
    \E_{\tilde{u}_{0:K-1}^{1:N}}[ \hat{v}_k \mid \tilde{u}_{0:K-2}^{1:N} < 0 ] = 0.
\end{equation}
Let $c \coloneqq \E_{\tilde{u}_{0:K-1}^{1:N}}[ \hat{v}_k \mid \tilde{u}_{0:K-2}^{1:N} \geq 0 ]
    = \frac{1}{2 (1-t)^{K-1} }$ for convenience.
Computing the conditional variances, for $\tilde{u}_{0:K-2}^{1:N} < 0$, we have that
\begin{equation}
    \Var_{\tilde{u}_{0:K-1}^{1:N}}[ \hat{v}_k \mid \tilde{u}_{0:K-2}^{1:N} < 0 ] = 0.
\end{equation}
For $\tilde{u}_{0:K-2}^{1:N} \geq 0 $ we have,
\begin{align}
    \Var_{\tilde{u}_{0:K-1}^{1:N}}[ \hat{v}_k \mid \tilde{u}_{0:K-2}^{1:N} \geq 0 ]
    &= \E_{\tilde{u}_{0:K-1}^{1:N}}[ \hat{v}_k^2 \mid \tilde{u}_{0:K-2}^{1:N} \geq 0 ] - c^2 \\
    &= \E_{\tilde{u}_{0:K-1}^{1:N}}\left[
        \left( \frac{1}{N} \sum_{i=1}^N \frac{ \ind{\tilde{u}^i_{0:K-1} \in [0, 1]} }{ \frac{1}{2} (1 - t)^{K-1} } \tilde{u}^i_k \right)^2
    \mid \tilde{u}_{0:K-2}^{1:N} \geq 0 \right] - c^2 \\
    &= \left( \frac{1}{N} \frac{1}{ \frac{1}{2} (1-t)^{K-1} } \right)^2
        \underbrace{\E_{\tilde{u}_{0:K-1}^{1:N}}\left[
            \left( \sum_{i=1}^N \ind{\tilde{u}^i_{0:K-1} \in [0, 1]} \tilde{u}^i_k \right)^2
    \mathrel{\bigg|} \tilde{u}_{0:K-2}^{1:N} \geq 0 \right]}_{\coloneqq\; \circled{$\star$}} - c^2\label{eq:star}
\end{align}
Expanding $\circled{$\star$}$, yields
\begin{equation} \label{eq: var_proof:star}
    \circled{$\star$}
    = \E_{\tilde{u}_{0:K-1}^{1:N}}\left[
            \sum_{i=1}^N \ind{\tilde{u}^i_{0:K-1} \in [0, 1]} (\tilde{u}^i_k)^2 +
            \sum_{i=1}^N \sum_{j=1, j\not=i}^N \ind{\tilde{u}^i_{0:K-1} \in [0, 1]} \ind{\tilde{u}^j_{0:K-1} \in [0, 1]} \tilde{u}^i_k \tilde{u}^j_k
    \mathrel{\bigg|} \tilde{u}_{0:K-2}^{1:N} \geq 0 \right].
\end{equation}
We now split into two cases.

\vspace{.5\baselineskip}

\noindent\textbf{Case 1: $k = K-1$}.
Looking at the first term of \circled{$\star$} in \eqref{eq: var_proof:star},
\begin{align}
    \E_{\tilde{u}_{0:K-1}^{1:N}}\left[
        \sum_{i=1}^N \ind{\tilde{u}^i_{0:K-1} \in [0, 1]} (\tilde{u}^i_{K-1})^2
        \mid \tilde{u}_{0:K-2}^{1:N} \geq 0
    \right]
    &= \E_{\tilde{u}_{0:K-1}^{1:N}}\left[
        \sum_{i=1}^N \E_{\tilde{u}^i_{K-1}}[ \ind{\tilde{u}^i_{K-1} \in [0, 1]}  (\tilde{u}^i_{K-1})^2 ]
        \mid \tilde{u}_{0:K-2}^{1:N} \geq 0
    \right] \\
    &= \sum_{i=1}^N \E_{\tilde{u}^i_{K-1}}[ \ind{\tilde{u}^i_{K-1} \in [0, 1]}  (\tilde{u}^i_{K-1})^2 ] \\
    &= \frac{N}{6}.\label{eq:term1}
\end{align}
For the second term in \eqref{eq: var_proof:star}, we have
\begin{align}
    &\mathrel{\phantom{=}} \E_{\tilde{u}_{0:K-1}^{1:N}}\left[
        \sum_{i=1}^N \sum_{j=1, j\not=i}^N \ind{\tilde{u}^i_{0:K-1} \in [0, 1]} \ind{\tilde{u}^j_{0:K-1} \in [0, 1]} \tilde{u}^i_{K-1} \tilde{u}^j_{K-1}
        \mid \tilde{u}_{0:K-2}^{1:N} \geq 0
    \right] \\
    &= \sum_{i=1}^N \sum_{j=1, j\not=i}^N \E_{\tilde{u}_{K-1}^{i,j}}\left[ \ind{\tilde{u}^i_{K-1} \in [0, 1]} \ind{\tilde{u}^j_{K-1} \in [0, 1]} \tilde{u}^i_{K-1} \tilde{u}^j_{K-1} \right] \\
    &= \sum_{i=1}^N \sum_{j=1, j\not=i}^N \E_{\tilde{u}_{K-1}^{i,j}}\left[ \ind{\tilde{u}^i_{K-1} \in [0, 1]} \tilde{u}^i_{K-1} \right]^2 \\
    &= \sum_{i=1}^N \sum_{j=1, j\not=i}^N \frac{1}{4} \\
    &= \frac{N(N-1)}{4}.\label{eq:term2}
\end{align}
Substituting the two terms \eqref{eq:term1} and \eqref{eq:term2} into \eqref{eq:star}, yields
\begin{align}
    \Var_{\tilde{u}_{0:K-1}^{1:N}}[ \hat{v}_{K-1} \mid \tilde{u}_{0:K-2}^{1:N} \geq 0 ]
    &= \left( \frac{1}{N} \frac{1}{ \frac{1}{2} (1-t)^{K-1} } \right)^2
    \left( \frac{N}{6} + \frac{N(N-1)}{4} \right) - c^2 \\
    &\leq \left( \frac{1}{N} \frac{1}{ \frac{1}{2} (1-t)^{K-1} } \right)^2
    \frac{N^2}{4} - c^2 \\
    &= \left( \frac{1}{ (1-t)^2} \right)^{K-1} - \frac{1}{4} \left( \frac{1}{ (1-t)^2} \right)^{K-1}, \\
    &= \frac{3}{4} \left( \frac{1}{ (1-t)^2} \right)^{K-1}.
\end{align}

\vspace{.5\baselineskip}

\noindent\textbf{Case 2: $k < K-1$}.
Looking at the first term of \circled{$\star$} in \eqref{eq: var_proof:star},
\begin{align}
    \E_{\tilde{u}_{0:K-1}^{1:N}}\left[
        \sum_{i=1}^N \ind{\tilde{u}^i_{0:K-1} \in [0, 1]} (\tilde{u}^i_k)^2
        \mid \tilde{u}_{0:K-2}^{1:N} \geq 0
    \right]
    &= \E_{\tilde{u}_{0:K-1}^{1:N}}\left[
        \sum_{i=1}^N \E_{\tilde{u}^i_{K-1}}[ \ind{\tilde{u}^i_{K-1} \in [0, 1]} ] (\tilde{u}^i_k)^2
        \mid \tilde{u}_{0:K-2}^{1:N} \geq 0
    \right] \\
    &= \frac{1}{2} \E_{\tilde{u}_{0:K-1}^{1:N}}\left[
        \sum_{i=1}^N (\tilde{u}^i_k)^2
        \mid \tilde{u}_{0:K-2}^{1:N} \geq 0
    \right].
\end{align}
From \Cref{thm:same_marginal}, the marginal of any resampled controls will all be uniform over $[0, 1]$. Hence,
\begin{align}
    \frac{1}{2} \E_{\tilde{u}_{0:K-1}^{1:N}}\left[
        \sum_{i=1}^N (\tilde{u}^i_k)^2
        \mid \tilde{u}_{0:K-2}^{1:N} \geq 0
    \right]
    &= \frac{1}{2} \sum_{i=1}^N \E_{\tilde{u}_k^i}[ (\tilde{u}_k^i)^2 \mid \tilde{u}_{k}^{i} \geq 0] = \frac{N}{6}.\label{eq:term3}
\end{align}
For the second term in \eqref{eq: var_proof:star},
\begin{align}
    &\mathrel{\phantom{=}} \E_{\tilde{u}_{0:K-1}^{1:N}}\left[
        \sum_{i=1}^N \sum_{j=1, j\not=i}^N \ind{\tilde{u}^i_{0:K-1} \in [0, 1]} \ind{\tilde{u}^j_{0:K-1} \in [0, 1]} \tilde{u}^i_k \tilde{u}^j_k
        \mid \tilde{u}_{0:K-2}^{1:N} \geq 0
    \right] \\
    &= \E_{\tilde{u}_{0:K-2}^{1:N}}\left[
        \sum_{i=1}^N \sum_{j=1, j\not=i}^N \E_{\tilde{u}_{K-1}^{1:N}}\left[ \ind{\tilde{u}^i_{K-1} \in [0, 1]} \ind{\tilde{u}^j_{K-1} \in [0, 1]} \right] \tilde{u}^i_k \tilde{u}^j_k
        \mid \tilde{u}_{0:K-2}^{1:N} \geq 0
    \right] \\
    &= \frac{1}{4} \E_{\tilde{u}_{0:K-2}^{1:N}}\left[
        \sum_{i=1}^N \sum_{j=1, j\not=i}^N \tilde{u}^i_k \tilde{u}^j_k
        \mid \tilde{u}_{0:K-2}^{1:N} \geq 0
    \right]. \label{eq: var_proof:tmp1}
\end{align}
To make this computation easier, let $\Xi_{ij}=1$ denote the event that $\tilde{u}^i_k$ and $\tilde{u}^j_k$ for $i \not= j$ were resampled from the same control, i.e., $\tilde{u}^i_k = \tilde{u}^j_k$, and let $\beta = \mathbb{P}(\Xi_{ij}=1)$.
When $\Xi_{ij}=1$, $\tilde{u}^i_k$ and $\tilde{u}^j_k$ are the same random variable, and therefore it follows that
\begin{align}
    \E_{\tilde{u}_{0:K-2}^{1:N}}\left[ \tilde{u}^i_k \tilde{u}^j_k \mid \Xi_{ij}=1 \mid \tilde{u}_{0:K-2}^{1:N} \geq 0, \Xi_{ij}=1 \right]
    &= \E_{\tilde{u}^i_k}\left[ (\tilde{u}^i_k)^2 \mid \tilde{u}^i_k \geq 0 \right] = \frac{1}{3}.
\end{align}
Otherwise, when $\Xi_{ij}=0$, $\tilde{u}^i_k$ and $\tilde{u}^j_k$ are independent. Hence,
\begin{align}
    \E_{\tilde{u}_{0:K-2}^{1:N}}\left[ \tilde{u}^i_k \tilde{u}^j_k \mid \Xi_{ij}=1 \mid \tilde{u}_{0:K-2}^{1:N} \geq 0, \Xi_{ij}=0 \right]
    &= \E_{\tilde{u}^i_k}\left[ (\tilde{u}^i_k) \mid \tilde{u}^i_k \geq 0 \right]^2 = \frac{1}{4}.
\end{align}
Hence, the expectation becomes
\begin{align}
    \eqref{eq: var_proof:tmp1}
    &= \frac{1}{4} \sum_{i=1}^N \sum_{j=1, j\not=i}^N
    \beta \frac{1}{3}
    + (1-\beta) \frac{1}{4} \\
    &= \frac{1}{4} N(N-1) \left( \beta \frac{1}{3} + (1-\beta) \frac{1}{4} \right) \\
    &\leq \frac{1}{12} N(N-1).\label{eq:term4}
\end{align}
Combining the two terms \eqref{eq:term3} and \eqref{eq:term4}, we thus have that
\begin{align}
    \Var_{\tilde{u}_{0:K-1}^{1:N}}[ \hat{v}_k \mid \tilde{u}_{0:K-2}^{1:N} \geq 0 ]
    &\leq \left( \frac{1}{N} \frac{1}{ \frac{1}{2} (1-t)^{K-1} } \right)^2
    \left( \frac{N}{6} + \frac{N(N-1)}{12} \right) - c^2\\
    &\leq \left( \frac{1}{N} \frac{1}{ \frac{1}{2} (1-t)^{K-1} } \right)^2
    \frac{N^2}{6} - c^2\\
    &= \frac{2}{3} \left( \frac{1}{ (1-t)^2} \right)^{K-1} - \frac{1}{4} \left( \frac{1}{ (1-t)^2} \right)^{K-1} \\
    &= \frac{5}{12} \left( \frac{1}{ (1-t)^2} \right)^{K-1}.
\end{align}

Hence, taking both cases into account, we have that
\begin{equation}
    \Var_{\tilde{u}_{0:K-1}^{1:N}}[ \hat{v}_k \mid \tilde{u}_{0:K-2}^{1:N} \geq 0 ] \leq \frac{3}{4} \left( \frac{1}{ (1-t)^2} \right)^{K-1}.
\end{equation}
Finally, using the law of total variance, we have that
\begin{equation}
    \Var[ \hat{v}_k ] = \underbrace{\E_{\text{sign of } \tilde{u}_{0:K-2}^{1:N}}[ \Var[ \hat{v}_k \mid \tilde{u}_{0:K-2}^{1:N} ] ]}_{\circled{1}} 
    + \underbrace{\Var_{\text{sign of } \tilde{u}_{0:K-2}^{1:N}}[ \E[ \hat{v}_k \mid \tilde{u}_{0:K-2}^{1:N} ] ]}_{\circled{2}}.
\end{equation}
The first term gives
\begin{align}
    \circled{1}
    &= \mathbb{P}(\tilde{u}^{1:N}_{0:K-2} \geq 0) \Var[ \hat{v}_k \mid \tilde{u}_{0:K-2}^{1:N} \geq 0 ] + \mathbb{P}(\tilde{u}^{1:N}_{0:K-2} < 0) \Var[ \hat{v}_k \mid \tilde{u}_{0:K-2}^{1:N} < 0 ] \\
    &\leq (1 - t)^{K-1} \frac{3}{4} \left( \frac{1}{ (1-t)^2} \right)^{K-1} + 0 \\
    &= \frac{3}{4} \left(\frac{1}{1-t} \right)^{K-1}.
\end{align}
The second term gives
\begin{align}
    \circled{2}
    &= \mathbb{P}(\tilde{u}^{1:N}_{0:K-2} \geq 0) (\E[ \hat{v}_k \mid \tilde{u}_{0:K-2}^{1:N} \geq 0 ] - \E[ \hat{v}_k ])^2 + \mathbb{P}(\tilde{u}^{1:N}_{0:K-2} < 0) (\E[ \hat{v}_k \mid \tilde{u}_{0:K-2}^{1:N} < 0 ] - \E[ \hat{v}_k ])^2 \\
    &= (1 - t)^{K-1} \left( \frac{1}{2} \frac{1}{(1-t)^{K-1}} - \frac{1}{2} \right)^2 + (1 - (1 - t)^{K-1}) \frac{1}{4} \\
    &= \frac{1}{2} (1 - t)^{K-1} - 1 + \frac{1}{2} \frac{1}{(1-t)^{K-1}} + \frac{1}{4} - \frac{1}{4} (1 - t)^{K-1} \\
    &= \frac{1}{4} (1 - t)^{K-1} - \frac{3}{4} + \frac{1}{2} \frac{1}{(1-t)^{K-1}}.
\end{align}
Hence, combining both terms gives us
\begin{align}
    \Var[ \hat{v}_k ]
    &\leq \frac{3}{4} \left(\frac{1}{1-t} \right)^{K-1} + \frac{1}{4} (1 - t)^{K-1} - \frac{3}{4} + \frac{1}{2} \left(\frac{1}{1-t}\right)^{K-1} \\
    &= \frac{5}{4} \left(\frac{1}{1-t} \right)^{K-1} + \frac{1}{4} (1 - t)^{K-1} - \frac{3}{4} \\
    &= O\left( \left(\frac{1}{1 - 2^{-N}}\right)^{K} + (1-2^{-N})^{K} \right).
\end{align}

\subsection{Proof of Theorem~\ref{thm:ess}} \label{sec:proof:ess}
\begin{proof}
    We will prove the equivalent statement that
    \begin{equation} \label{eq: ess:proof:1}
        \norm{ \tilde{\vw} }^2_2 - \norm{ \tilde{\vw}' }^2_2 \geq 0.
    \end{equation}
    To begin, we expand $\tilde{\vw}'$ to obtain
    \begin{equation}
        \norm{\tilde{\vw}'}_2^2 = \frac{ \norm{\vw}_2^2 + \norm{\vc}_2^2 }{ \norm{\vw'}_1^2}.
    \end{equation}
    Then,
    \begin{align}
        \mathrel{\phantom{=}} \norm{ \tilde{\vw} }^2_2 - \norm{ \tilde{\vw}' }^2_2 &= \frac{ \norm{\vw}_2^2 }{ \norm{\vw}_1^2} + \frac{ -\norm{\vw}_2^2 - \norm{\vc}_2^2 }{ \norm{\vw'}_1^2} \\
        &= \frac{1}{ \norm{\vw'}_1^2 } \left( \norm{\vw'}_1^2 \frac{ \norm{\vw}_2^2 }{ \norm{\vw}_1^2} -\norm{\vw}_2^2 - \norm{\vc}_2^2 \right).\label{eq:parenthesis_term}
    \end{align}
    Simplifying the first term, yields,
    \begin{align}
        \mathrel{\phantom{=}} \norm{\vw'}_1^2 \frac{ \norm{\vw}_2^2 }{ \norm{\vw}_1^2} &= \left( \norm{\vw}_1^2 + 2 \norm{\vw}_1 \norm{\vc}_1 + \norm{\vc}_1^2 \right) \frac{ \norm{\vw}_2^2 }{ \norm{\vw}_1^2} \\
        &= \norm{\vw}_2^2 + 2 \frac{ \norm{ \vw }_2^2 \norm{ \vc }_1 }{ \norm{ \vw }_1 } + \frac{ \norm{ \vw }_2^2 \norm{ \vc }_1^2 }{ \norm{ \vw }_1^2 }.\label{eq:D91}
    \end{align}
    Substituting \eqref{eq:D91} back into the parenthesis \eqref{eq:parenthesis_term} gives us that
    \begin{align}
        \mathrel{\phantom{=}} \norm{\vw'}_1^2 \frac{ \norm{\vw}_2^2 }{ \norm{\vw}_1^2} -\norm{\vw}_2^2 - \norm{\vc}_2^2 &= 2 \frac{ \norm{ \vw }_2^2 \norm{ \vc }_1 }{ \norm{ \vw }_1 } + \frac{ \norm{ \vw }_2^2 \norm{ \vc }_1^2 }{ \norm{ \vw }_1^2 } - \norm{\vc}_2^2 \\
        &\geq 2 \frac{ \norm{ \vw }_2^2 \norm{ \vc }_1 }{ \norm{ \vw }_1 } + \frac{ \norm{ \vw }_2^2 \norm{ \vc }_1^2 }{ \norm{ \vw }_1^2 } - \norm{\vc}_1^2 \\
        &= \norm{\vc}_1 \left( 2 \frac{ \norm{ \vw }_2^2 }{ \norm{ \vw }_1 } + \frac{ \norm{ \vw }_2^2 \norm{ \vc }_1 }{ \norm{ \vw }_1^2 } - \norm{\vc}_1 \right).\label{eq:D94}
    \end{align}
    Simplifying the parenthesis in \eqref{eq:D94}, we obtain
    \begin{align}
        \mathrel{\phantom{=}} 2 \frac{ \norm{ \vw }_2^2 }{ \norm{ \vw }_1 } + \frac{ \norm{ \vw }_2^2 \norm{ \vc }_1 }{ \norm{ \vw }_1^2 } - \norm{\vc}_1 &= 2 \frac{ \norm{ \vw }_2^2 }{ \norm{ \vw }_1 } + \norm{\vc}_1 \left( \frac{ \norm{ \vw }_2^2 }{ \norm{ \vw }_1^2 } - 1 \right) \\
        &\geq 2 \frac{ \norm{ \vw }_2^2 }{ \norm{ \vw }_1 } + \norm{\vc}_1 \left( \frac{1}{N} - 1 \right).
    \end{align}
    Using our assumption that $\vc$ is not ``drastically larger'' than $\vw$ in \eqref{eq: ess:assumption} then gives us the desired result.
\end{proof}

\section{Details on VIMPC cost designs} \label{app:hw_details}
In both simulations and experiments, MPPI variants using DCBF to ensure safety utilized a state-dependent cost function:
\begin{equation}
    q(x_k^m) = (x_k^m - x_{g})^\intercal Q (x_k^m - x_{g}),
\end{equation}
whereas the standard MPPI employed the cost,
\begin{equation}
    q(x_k^m) = (x_k^m - x_{g})^\intercal Q (x_k^m - x_{g}) + \mathbf{1}(x_k^m),
\end{equation}
where \( Q = \diag(q_{v_x}, q_{v_y}, q_{\dot{\psi}}, q_{\omega_F}, q_{\omega_R}, q_{e_\psi}, q_{e_y}, q_s) \) represents the cost weights, and \( x_g = \text{diag}(v_g, 0, \dots, 0) \) specifies the target velocity. The collision cost function is defined as:
\begin{equation}\label{JC}
    \mathbf{1}(x_k^m) :=
    \begin{cases}
      0, & \text{if } x_k^m \text{ is within the track}, \\
      C_\text{obs}, & \text{otherwise}.
    \end{cases}
\end{equation}
Unlike standard MPPI, Shield-MPPI and NS-MPPI incorporate a DCBF constraint violation penalty but do not include an explicit collision cost. 

\newpage
\section{Constraint Satisfaction on the Variational Distribution} \label{app:convex_controls}
One way of guaranteeing that the variational distribution $q(\vu)$ satisfies the constraints $x_k \in \bar{\mathcal{X}} := \mathcal{X} \backslash \mathcal{A}$ is by making the strong (but unrealistic) assumption that the set of controls that satisfy the constraints is a convex set.
Specifically, for generic state constraints $x \in \bar{\mathcal{X}}$, define the set of safe control trajectories $\mathcal{U}^K_{\text{safe}} \subseteq \mathcal{U}^K$ as
\begin{equation}
    \mathcal{U}^K_{\text{safe}}(x_0) = \left\{ \vu \in \mathcal{U}^K \mid x_k \in \bar{\mathcal{X}},~ k =0, 1, \ldots, K \right\}.
\end{equation}
We then have the following lemma.

\begin{lemma}
    Assume $\mathcal{U}^K_{\mathrm{safe}}(x_0)$ is convex for all states $x_0 \in \bar{\mathcal{X}}$,
    and suppose that $p(\vu \mid o=1)$ has zero density outside of $\mathcal{U}^K_{\mathrm{safe}}(x_0)$.
    Then, the mean $\vv^*$ \eqref{eq:vi:omega_def} of the variational distribution $q_{\vv}$ (and the state trajectory resulting from following $\vv^*$) will also satisfy the constraints, i.e.,
    \begin{equation}
        \vv^* \in \mathcal{U}^K_{\mathrm{safe}}(x_0).
    \end{equation}
\end{lemma}

\begin{proof}
    By definition \eqref{eq:vi:v_opt_def}, $\vv^*$ is the mean of the optimal distribution $p(\vu \mid o=1)$.
    Since $p(\vu \mid o=1)$ has zero density outside $\mathcal{U}^K_{\text{safe}}(x_0)$, its support is contained within $\mathcal{U}^K_{\text{safe}}(x_0)$.
    Since $\mathcal{U}^K_{\text{safe}}(x_0)$ is convex, the integral of $\vu$ over $p(\vu \mid o=1)$ will also be contained within $\mathcal{U}^K_{\text{safe}}(x_0)$. Hence, $\vv^* \in \mathcal{U}^K_{\text{safe}}(x_0)$.
\end{proof}

\newpage
\section{RBR Performance using a Valid DCBF}\label{app:ValidDCBF}
\noindent\textbf{Valid DCBF on \textsf{\small DubinsCar}.} We evaluate a valid DCBF on \textsf{\small DubinsCar} with $x=[p_x,p_y,\theta]$, $u=[\dot{\theta}]$ (\Cref{fig:dubins_valid}).
We also add a high-cost region and a constraint that $\abs{\theta} \leq \pi/2$.
\begin{itemize}[leftmargin=0.2 em]
    \item At $K=1$, RBR has no effect (no resampling).
    With $\mathcal{U}=\mathbb{R}$,
    a valid DCBF allows safe sampling with just $N=50$, and both S-MPPI and NS-MPPI avoid crashes.
    \item As estimator variance grows exponentially with $K$, S-MPPI crashes at larger $K$.
    RBR (NS-MPPI) mitigates this, maintaining safety across all horizons.
    \item Higher $K$ is needed to avoid the high cost region. The cost slowly increases with $K\geq10$ from estimator variance. 
\end{itemize}

\begin{figure}[h]
    \centering
    \includegraphics[width=.85\linewidth]{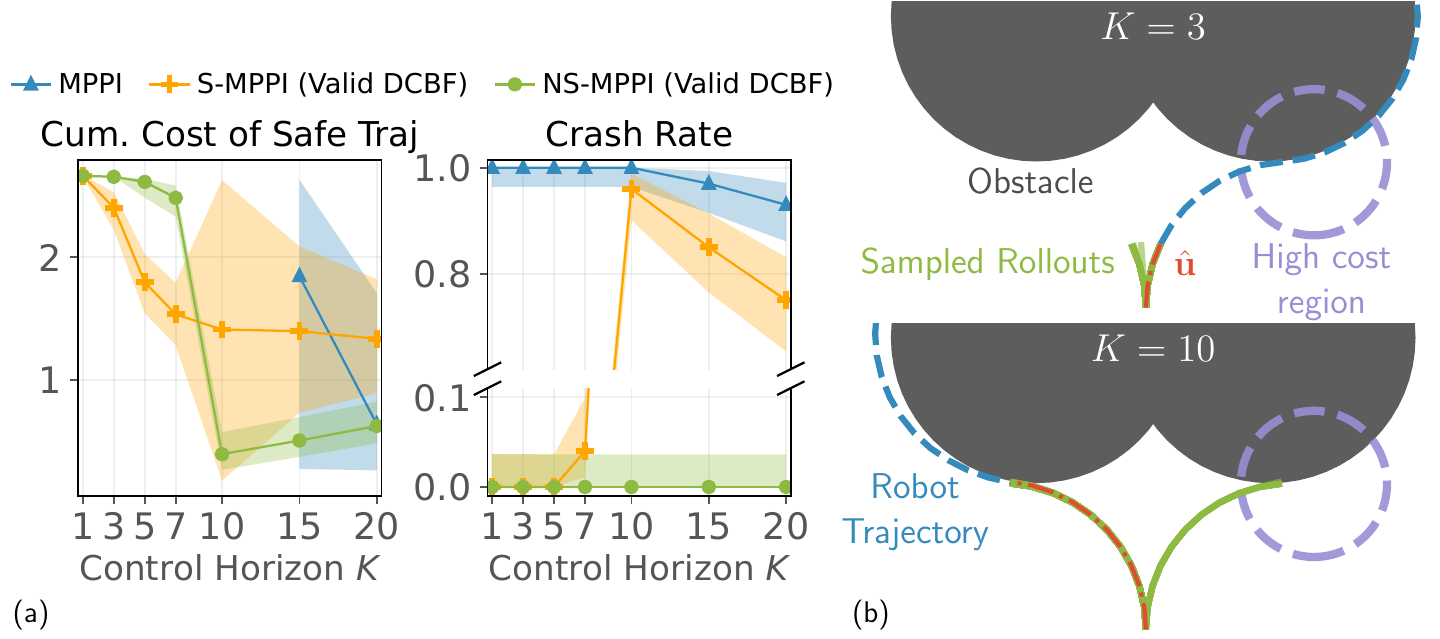}
    \caption{(a) Cumulative cost of safe trajectories using valid DCBF on \textsf{DubinsCar} with $N=50$.
    MPPI is unsafe for $K<15$ and omitted.
    (b) Higher $K$ is needed to avoid the high-cost region.}
    \label{fig:dubins_valid}
\end{figure}

\setlength{\parindent}{0pt}
\setlength{\parskip}{2pt plus1pt minus0.5pt}

\end{appendices}

\end{document}